\def\eqref#1{equation~\ref{#1}}
\def\floor#1{\lfloor #1 \rfloor}
\def\1{\bm{1}}
\DeclareMathAlphabet{\mathsfit}{\encodingdefault}{\sfdefault}{m}{sl}
\SetMathAlphabet{\mathsfit}{bold}{\encodingdefault}{\sfdefault}{bx}{n}
\newcommand{\E}{\mathbb{E}}
\newcommand{\R}{\mathbb{R}}
\newcommand{\Cov}{\mathrm{Cov}}
\DeclareMathOperator{\sign}{sign}
\definecolor{emphcoral}{HTML}{FF6F61}
\definecolor{emphverm}{HTML}{E34234}
\definecolor{emphrasp}{HTML}{D63A5C}
\definecolor{burgundy}{HTML}{B38A5F}
\definecolor{emphgreen}{HTML}{00AA00}
\definecolor{emphcyan}{HTML}{00CCCC}
\definecolor{emphorange}{HTML}{FF6600}
\definecolor{emphblue}{HTML}{1F4E79}
\definecolor{emphviolet}{HTML}{6F42C1}
\definecolor{darkgreen}{rgb}{0.0,0.5,0.0}
\renewcommand*{\backrefalt}[4]{%
    \ifcase #1 \footnotesize{(Not cited.)}%
    \or        \footnotesize{(Cited on page~#2)}%
    \else      \footnotesize{(Cited on pages~#2)}%
    \fi}
\theoremstyle{plain}
\newtheorem{theorem}{Theorem}[section]
\newtheorem{lemma}[theorem]{Lemma}
\newtheorem{corollary}[theorem]{Corollary}
\theoremstyle{definition}
\newtheorem{definition}[theorem]{Definition}
\theoremstyle{remark}
\newtheorem{remark}[theorem]{Remark}
\newcommand{\norm}[1]{{}\left\| #1 \right\|}
\newcommand{\tr}{\text{Tr}}
\DeclareMathOperator{\diag}{diag}
\newcommand{\eqdef}{\coloneqq}
\newtcolorbox{mybox}[2][]{
  colframe = white, % Set the frame color to white (transparent)
  colback  = gray!7,
  #1
}
\newcommand{\cmark}{\textcolor{green!50!black}{\ding{51}}} % ✓
\newcommand{\xmark}{\textcolor{red!70!black}{\ding{55}}}   % ✗
\def\toptitlebar{\hrule height1pt \vskip .25in} 
\def\bottomtitlebar{\vskip .22in \hrule height1pt \vskip .3in}
\title{
\toptitlebar
{{\center\baselineskip 18pt
{\Large\bf On the Interaction of Batch Noise, Adaptivity, and Compression, under \texorpdfstring{$(L_0,L_1)$}{(L0,L1)}-Smoothness: An SDE Approach}}
} 
\bottomtitlebar}
\date{}
\author{%
\parbox{0.95\textwidth}{\centering
Enea Monzio Compagnoni$^{1}$ \quad
Rustem Islamov$^{1}$ \quad
Frank Norbert Proske$^{2}$\\[0.35em]
 \quad Aurelien Lucchi$^{1,\dagger}$
 \quad Antonio Orvieto$^{4,5,6,\dagger}$
Eduard Gorbunov$^{3,\dagger}$\\[1.4em]
{\small
$^{1}$University of Basel, Switzerland\\
$^{2}$University of Oslo, Norway\\
$^{3}$MBZUAI, Abu Dhabi, United Arab Emirates\\
$^{4}$Max Planck Institute for Intelligent Systems, Germany\\
$^{5}$ELLIS Institute Tübingen, Germany\\
$^{6}$Tübingen AI Center, Germany
}
}%
}
\begin{document}

\maketitle

\begingroup
\renewcommand{\thefootnote}{\fnsymbol{footnote}}
\footnotetext[2]{Eduard Gorbunov, Aurelien Lucchi, and Antonio Orvieto share senior supervision.}
\endgroup

\begin{abstract}
\noindent Distributed stochastic optimization intertwines (i) stochastic gradient noise, (ii) communication compression, and (iii) adaptive/normalized updates. While each factor has been studied in isolation, their joint effect under realistic assumptions remains poorly understood. In this work, we develop a unified theoretical framework for Distributed Compressed SGD (DCSGD) and its sign variant Distributed SignSGD (DSignSGD) under the recently introduced $(L_0, L_1)$-smoothness condition. From a conceptual perspective, we show that the first- and second-order modified equations from the literature do not accurately model the discrete-time stepsize/stability restrictions, especially under $(L_0,L_1)$-smoothness. From a technical perspective, we propose new \textit{first}-order SDEs by carefully incorporating curvature-dependent terms into their drift: This helps capture the fine-grained relationship between learning rate restrictions, gradient noise, compression, and the geometry of the loss landscape. Importantly, we do so under general gradient noise assumptions, including heavy-tailed and affine-variance regimes, which extend beyond the classical bounded-variance setting. Our results suggest that normalizing the updates of DCSGD emerges as a natural condition for stability, with the degree of normalization precisely determined by the gradient noise structure, the landscape’s regularity, and the compression rate. In contrast, DSignSGD converges even under heavy-tailed noise with standard learning rate schedules. Together, these findings offer both new theoretical insights and perspectives, and practical guidance.
\end{abstract}

%%%%%%%%%%%%%%%%%%%%%%%%%%%%%%%%%%%%%%%%%%%%%%%%%%%%%%%%%%%%%%%%%%%%%%%%%%%%
\section{Introduction}\label{sec:Introduction}
%%%%%%%%%%%%%%%%%%%%%%%%%%%%%%%%%%%%%%%%%%%%%%%%%%%%%%%%%%%%%%%%%%%%%%%%%%%%

Distributed stochastic gradient methods are the workhorse of modern large-scale learning.
In practice, their behavior is shaped by three effects that interact nontrivially:

%\vspace{-0.3cm}

\begin{enumerate}[leftmargin=*, labelindent=0em]
    \item \textbf{Batch noise.} Stochastic gradient methods rely on mini-batches to reduce computational cost, but this introduces uncertainty in the gradient estimates. In practice, this noise may not only be non-vanishing but can exhibit complex, heavy-tailed behavior~\citep{simsekli2019tailindex}. Such noise has a profound impact on convergence rates, stability, and generalization, especially in nonconvex landscapes.

\vspace{0.2cm}
    
    \item \textbf{Communication compression.} In distributed systems, communicating full-precision gradients is often prohibitively expensive. To alleviate this, gradient compression techniques such as sparsification, quantization, and sign-based schemes are commonly used. While these methods reduce communication overhead, they alter the optimization dynamics by introducing bias and additional variance~\citep{alistarh2017qsgd}. Understanding the trade-off between efficiency and convergence guarantees under compression remains a central question.

\item \textbf{Adaptivity.} Many successful optimizers in deep learning, such as Adam, AdaGrad, or SignSGD, incorporate some form of normalization or adaptivity in their update rules. Adaptivity has been empirically shown to mitigate the detrimental effects of noise and ill-conditioning~\citep{safaryan2021signsgd,staib2019escaping}, yet a rigorous understanding of \emph{why} adaptivity helps in distributed and noisy scenarios is still incomplete. In particular, the interaction between adaptivity and the statistical properties of the gradient noise is far from fully understood.
\end{enumerate}

Despite substantial work studying each component separately, their \emph{joint} interplay remains underexplored, especially under realistic assumptions on the loss landscape. Most theoretical results rely on \emph{$L$-smoothness}, i.e., the assumption that the gradient of the objective function is globally Lipschitz continuous~\citep{bubeck2015convex}. While this simplifies analysis, it fails to capture the complexities of practical problems, including those encountered in nonconvex optimization for DL. In contrast, \emph{$(L_0,L_1)$-smoothness} allows the norm of the Hessian of the loss to grow at most affinely with its gradient norm: This relaxes the aforementioned regularity condition~\citep{zhang2019gradient} and is a much more realistic alternative. Similarly, while most of the literature relies on the assumption that the gradient noise is bounded or has bounded variance, more realistic models, such as affine variance and heavy-tailed noise, are increasingly adopted in recent works.

In this paper, SDEs are \emph{models} of discrete optimizers: they are not intended to reproduce the iterate-by-iterate trajectory, but rather to capture selected properties of the dynamics. Since there is no ``universally correct'' SDE, we focus on one key property, the learning-rate restrictions that ensure stability for distributed stochastic optimizers under $(L_0,L_1)$-smoothness, when there may exist \emph{no single constant stepsize} that is stable uniformly over all initializations.

\vspace{-0.1cm}

\textbf{On the limitations of classic models from the literature.} Following the standard approach, we first considered \emph{first-order} SDEs, which are naturally well-suited to handle a broad range of noise models, e.g., Gaussian, affine, or even heavy-tailed, and historically led to new conceptual \citep{Su2014nesterov} and practical \citep{jastrzkebski2017three} insights. However, these models can be misleading: As we prove in Sec. \ref{sec:fail1}, they do not prescribe any learning rate constraints and incorrectly suggest that constant-stepsize SGD converges unconditionally. A natural next step is to move to the \emph{second-order} model from the literature \citep{li2017stochastic}. As we show in Sec. \ref{sec:fail2}, this is even more problematic: it additionally predicts \emph{accelerated convergence} at large stepsizes, where SGD, in fact, diverges. Beyond this, neither of these models captures the more subtle mechanisms that arise under $(L_0,L_1)$-smoothness, where \textit{no universal stepsize can guarantee stability uniformly across initializations}.

\vspace{-0.1cm}

\textbf{Our approach: stability-corrected SDEs.}
To address this, we derive \emph{stability-faithful} SDE models by modifying the drift to carefully include curvature-dependent information. The resulting models are \emph{first-order weak approximations} which correctly encode the stability thresholds and align closely with the dynamics of their respective optimizers.

We develop a comprehensive analysis of DCSGD and DSignSGD under $(L_0,L_1)$-smoothness with flexible gradient noise assumptions encompassing affine variance and heavy-tailed noise. In settings already examined in the literature, such as $(L_0,L_1)$-smoothness with affine variance,\footnote{This setting is studied for Normalized SGD and AdaGrad \citep{faw2023beyond,chen2023generalized}, \textbf{and not} for DCSGD or DSignSGD.} our results are consistent with established findings. In previously unexplored regimes, where $(L_0,L_1)$-smoothness, affine variance, heavy-tailed noise, and gradient compression are brought together under a unified framework, our analysis provides novel insights that advance the understanding of the interaction of these factors.

\paragraph{Contributions.}

\begin{enumerate}[leftmargin=*, labelindent=0em]

\vspace{0.1cm}

    \item \textbf{Conceptual: Limitations of models from the literature.} We identify concrete ways in which classical first- and second-order SDE models can fail to capture the correct discrete-time stepsize/stability behavior;

    \vspace{0.1cm}
    
    \item \textbf{Technical: Stability-faithful continuous-time models under $(L_0,L_1)$-smoothness.}
    We \textit{formally} derive first-order SDE models that correctly capture the learning rate restrictions and stability thresholds \textit{also} under $(L_0,L_1)$-smoothness, which cannot be done by classic SDEs;

    \vspace{0.1cm}
    
    \item \textbf{Technical: Unified SDE analysis of compression and nonstandard noise.} We prove convergence bounds for the models of DCSGD and DSignSGD under $(L_0,L_1)$-smoothness and batch noise assumptions more general than those commonly used in the literature, namely, affine-variance noise for DCSGD and heavy-tailed noise for DSignSGD;

    \vspace{0.1cm}
    
    \item \textbf{Practical: Normalization strength for DCSGD.}
    We demonstrate that the degree of normalization required for DCSGD to converge is precisely determined by the interplay between the compression rate, the structure of gradient noise, and the smoothness constants of the loss;

    \vspace{0.1cm}
    
    \item \textbf{Practical: Robustness of DSignSGD under heavy tails.}
    We show that an \textit{adaptive} method such as DSignSGD converges even under heavy-tailed noise with standard assumptions on the learning rate scheduler.
\end{enumerate}

% ------------------- Table 1 -------------------
\begin{table*}[t]
\centering
\caption{Comparison of existing convergence results for stochastic methods applied to $(L_0,L_1)$-smooth problems. All results are derived for non-convex problems, and the bounds are given in expectation unless stated otherwise. All works assume bounded noise or bounded variance unless stated otherwise. Abbreviations: ``HT'' = heavy-tailed noise, ``Affine var.'' = affine variance.}
\label{tab:comparison}

\setlength{\tabcolsep}{4pt}
\renewcommand{\arraystretch}{0.9}

\begin{threeparttable}
\begin{tabular*}{\textwidth}{@{\extracolsep{\fill}}lccccc@{}}
\toprule[1pt]
\multirow{2}{*}{\textbf{Reference}} &
\multirow{2}{*}{\textbf{Dynamics}} &
\multicolumn{2}{c}{\textbf{Noise}} &
\multirow{2}{*}{\textbf{$(L_0,L_1)$-smooth}} &
\multirow{2}{*}{\textbf{Compression}} \\[-0.3em]
\cmidrule(lr){3-4}
& & \textbf{HT} & \textbf{Affine var.} & & \\
\midrule[1pt]
\makecell[l]{\citep{zhang2019gradient, zhang2020improved}\\
\citep{zhao2021convergence}\\
\citep{crawshaw2022robustness}\\
\citep{koloskova2023revisiting}\\
\citep{li2023convex}\tnote{\color{blue}(1)}~~~~\tnote{\color{blue}(2)}\\
\citep{hubler2024parameter}\\
\citep{li2024convergence}\tnote{\color{blue}(1)}~~~~\tnote{\color{blue}(3)}\\
\citep{gaash2025convergence}\tnote{\color{blue}(1)}~~~~\tnote{\color{blue}(3)}}
& Discrete & \xmark & \xmark & \cmark & \xmark \\
\midrule[1pt]
\makecell[l]{\citep{faw2023beyond}\tnote{\color{blue}(1)}~~~~\tnote{\color{blue}(2)}\\
\citep{wang2023convergence}\tnote{\color{blue}(1)}~~~~\tnote{\color{blue}(2)}\\
\citep{chen2023generalized}}
& Discrete & \xmark & \cmark & \cmark & \xmark \\
\midrule[1pt]
\citep{khirirat2024error} & Discrete & \xmark & \xmark & \cmark & \cmark \\
\midrule[1pt]
\citep{chezhegov2025convergence}\tnote{\color{blue}(1)}~~~~\tnote{\color{blue}(3)}
& Discrete & \cmark & \xmark & \cmark & \xmark \\
\midrule[1pt]\midrule[1pt]
\citep{compagnoni2025unbiased} & Continuous & \cmark & \xmark & \xmark & \cmark \\
\textbf{This work} & Continuous & \cmark & \cmark & \cmark & \cmark \\
\bottomrule[1pt]
\end{tabular*}

\begin{tablenotes}[flushleft]
\footnotesize
\item[{\color{blue}(1)}] High-probability convergence analysis.
\item[{\color{blue}(2)}] Convergence bounds have inverse-power dependence on the failure probability.
\item[{\color{blue}(3)}] Derived for convex problems.
\end{tablenotes}
\end{threeparttable}

\end{table*}
% --------------------------------------------

%%%%%%%%%%%%%%%%%%%%%%%%%%%%%%%%%%%%%%%%%%%%%%%%%%%%%%%%%%%%%%%%%%%%%%%%%%%%
\section{Related work}
\label{sec:RelatedWorks}
%%%%%%%%%%%%%%%%%%%%%%%%%%%%%%%%%%%%%%%%%%%%%%%%%%%%%%%%%%%%%%%%%%%%%%%%%%%%

\vspace{-0.1cm}

\paragraph{SDE Approximations in Optimization.} Continuous-time models in the form of differential equations are a well-established tool to study discrete-time optimizers, e.g.~\citep{helmke1994optimization,kushner2003stochastic}. It was~\citep{li2017stochastic} that first introduced a \textit{rigorous} theoretical framework to derive SDEs that faithfully model the stochastic behavior intrinsic to optimization algorithms widely employed in machine learning. Since then, such SDE-based formulations have been applied across several domains, including \emph{stochastic optimal control} for tuning stepsizes~\citep{li2017stochastic,li2019stochastic} and batch sizes~\citep{zhao2022batch}. Notably, SDEs have been instrumental in analyzing \emph{convergence bounds} and \emph{stationary distributions}~\citep{compagnoni2023sde,compagnoni2024sde,compagnoni2025adaptive}, \emph{scaling laws}~\citep{jastrzkebski2017three,compagnoni2025unbiased,compagnoni2025adaptive,compagnoni2026adaptive}, \emph{implicit regularization} effects~\citep{smith2021origin,compagnoni2023sde}, and \textit{implicit preconditioning}~\citep{xiao2025exact}.

\textbf{Why continuous-time models are useful.} As mentioned above, continuous-time modeling via ODEs/SDEs is a classical tool in optimization and has seen significant growth in recent years~\citep{li2017stochastic}: The main advantage is getting access to sophisticated tools such as Itô calculus, which turns SDEs for the iterates into SDEs for the loss, enabling Lyapunov-style stability arguments to derive convergence bounds. To the best of our knowledge, no prior work has used SDEs to analyze optimizers under $(L_0,L_1)$-smoothness. This represents a significant gap, as this assumption has emerged as an alternative to $L$-smoothness to model nonconvex landscapes.

\vspace{-0.15cm}

\textbf{A note on modeling.}
Before proceeding, we clarify what we mean by \emph{model}: It is a deliberately simplified mathematical surrogate tailored to a specific question; it is not expected to be universally accurate, but rather predictive within the regime relevant to that question. An example comes from physics: General Relativity describes macroscopic gravity, but it is not a quantum theory. In contrast, quantum mechanics excels in the microscopic regime, yet does not describe spacetime geometry. Both are valuable, and their limitations outside their domains are a reminder that the ``right'' model depends on the phenomenon of interest.

\vspace{-0.3cm}

\paragraph{$(L_0,L_1)$-smoothness, normalization, and compression.}
The $(L_0,L_1)$-smoothness condition \citep{zhang2019gradient} relaxes global $L$-smoothness and has motivated analyses of first-order methods beyond bounded-variance settings, e.g., for Normalized SGD/AdaGrad/Adam/SignSGD variants
\citep{chen2023generalized,faw2023beyond,wang2023convergence,li2024convergence,crawshaw2022robustness}. For compressed communication under relaxed smoothness, \citet{khirirat2024error} provides a representative analysis, but existing results do not offer a unified treatment that simultaneously captures compression together with affine-variance growth and heavy-tailed regimes under $(L_0,L_1)$-smoothness. Table~\ref{tab:comparison} summarizes the closest guarantees and clarifies our positioning. Finally, we refer the interested reader to Appendix \ref{sec:RelatedWorks_App} for an extended discussion on the related works.

\vspace{-0.25cm}

\paragraph{Positioning.}
We identify a modeling gap relevant to stepsize stability: classic first- and second-order modified equations can contradict discrete-time stability behavior, especially under $(L_0,L_1)$-smoothness. We address this by deriving stability-corrected first-order SDEs and using them to provide, to the best of our knowledge, the first unified SDE-based analysis that is both stability-faithful under $(L_0,L_1)$-smoothness and suited to analyze the joint effect of distributed compression with general noise regimes, including affine-variance growth and heavy tails.

\vspace{-0.35cm}

%%%%%%%%%%%%%%%%%%%%%%%%%%%%%%%%%%%%%%%%%%%%%%%%%%%%%%%%%%%%%%%%%%%%%%%%%%%%
\section{Preliminaries}\label{sec:Insights}
%%%%%%%%%%%%%%%%%%%%%%%%%%%%%%%%%%%%%%%%%%%%%%%%%%%%%%%%%%%%%%%%%%%%%%%%%%%%

\vspace{-0.25cm}

\paragraph{Distributed Setup.} Let us consider the problem of minimizing an objective function expressed as an average of $N$ functions: $ \min_{x\in\R^d} \left[f(x) \eqdef \frac{1}{N}\sum_{j=1}^N f_j(x)\right]$, where each $f_j:\R^d\to\R$ is lower bounded and twice continuously differentiable, and represents the loss over the local data of the $i$-th client. In our stochastic setup, each client only has access to gradient estimates: let $n_i$ be the number of datapoints accessible to client $i$; at a given $x\in\R^d$, client $i$ estimates $\nabla f(x)$ using a batch of data $\gamma_i\subseteq \{ 1, \dots, n_i \}$, sampled uniformly with replacement and uncorrelated from the previously sampled batches. Under homogeneous data/client assumption, $\nabla f_{i,\gamma_{i}}(x)$ can be modeled as a perturbation of the global gradient: $\nabla f_{i,\gamma_{i}}(x) = \nabla f(x) + Z_i(x)$\footnote{Our SDE framework can in principle be extended to heterogeneous objectives.}.  Finally, $S_0:= f(X_0) - f(X_*)$.

\paragraph{Noise assumptions.} We assume the sampling process and client configurations are such that, for all $x\in\R^d$ and each client pair $(i,j)$ with $i\ne j$, $Z_i(x)$ is independent of $Z_j(x)$. Regarding the noise structure, we always assume that at each $x\in\mathbb{R}^d$, $Z_i(x)$ is absolutely continuous and has a coordinate-wise symmetric distribution. For context, we highlight that numerous works in the literature assume a much more restrictive assumption, e.g. that $Z_i(x)$ are Gaussian~\citep{ahn2012bayesian,chen2014stochastic,mandt2016variational,stephan2017stochastic,zhu2019anisotropic,wu2020noisy,Xie2021}, and  \cite{li2017stochastic,mertikopoulos2018convergence,raginsky2012continuous,zhu2019anisotropic,mandt2016variational,ahn2012bayesian,jastrzkebski2017three} even assume the covariance matrix of the batch noise to be constant: we refer the reader to \cite{jastrzkebski2017three} for the intuition behind this modeling choices. If $Z_i(x) \in L^1(\R^d)$, we assume $\E[Z_i(x)] = 0$; if $Z_i(x) \in L^2(\R^d)$, we denote $\Sigma_i(x):=\Cov(Z_i(x))$, where $L^p(\R^d)$ denotes the space of random variables with finite $p$-th moment.

Next, we define our two structural assumptions. The first one strictly concerns the global landscape; the second concerns how global landscape features affect the noise distribution of each client.

\begin{definition}$f$ is $\left(L_0, L_1\right)$-smooth~($L_0,L_1\ge0$) if, $\forall x\in\R^d$, $\left\|\nabla^2 f(x)\right\|_2 \leq L_0+L_1\|\nabla f(x)\|_2$.
\end{definition}

\begin{definition}[Extension of the assumptions from \cite{schmidt2013fast}] The gradient noise for client $i$ has \textit{affine} $(\sigma_{0,i}^2, \sigma_{1,i}^2)$-variance if $\lVert \Sigma_i(x) \rVert_{\infty}  \le \sigma_{0,i}^2 + \sigma_{1,i}^2 \lVert \nabla f(x) \rVert_2^2$. If $\sigma_{1,i}=0$, the noise has bounded variance.
\end{definition}

Finally, we define which compressors we use to reduce the communication costs of gradients.
\begin{definition}
An unbiased compressor is a stochastic map $\mathcal{C}_{\xi}: \R^d \rightarrow \R^d$ such that $(a)$ $\E_{\xi} \left[ \mathcal{C}_{\xi} (x) \right] = x$ and $(b)$ $\E_{\xi} \left[ \lVert \mathcal{C}_{\xi} (x) - x \rVert_2^2 \right] \leq \omega  \lVert x \rVert_2^2$ for a compression rate $\omega\geq 0$.
\end{definition}

\paragraph{SDE approximations.} The following definition presents the most commonly used notion that formalizes how an SDE can be a ``reliable surrogate'' to model an optimizer. It is drawn from the field of numerical analysis of SDEs (see \cite{mil1986weak}), and it quantifies the disparity between the discrete and the continuous processes.

\begin{definition}\label{def:weak_approximation}
A continuous-time stochastic process $(X_t)_{ t \in [0, T]}$ is an $\alpha$-order weak approximation of a discrete stochastic process $(x_k)_{k=0}^{\lfloor  T/\eta \rfloor }$ if for every polynomial growth function $g$, $\exists C>0$, independent of $\eta$, such that $ \max _{k=0, \ldots, \lfloor  T/\eta \rfloor }\left|\E g\left(x_k\right)-\E g\left(X_{k \eta}\right)\right| \leq C \eta^\alpha$.
\end{definition}

We will often refer to $1$-order and $2$-order weak approximations as \textit{first}- and \textit{second}-order SDEs. Here, ``\textit{first}-order'' refers to the order of the weak-approximation error in $\eta$, not to the absence of Hessian terms in the drift.

\paragraph{Scope of SDE guarantees and results.}
As is standard in the literature using SDEs for optimization, e.g., \citep{li2017stochastic,li2019stochastic,luo2024explicit,orvieto2019shadowing}, theoretical guarantees are stated for the \emph{continuous-time models} rather than for the discrete optimizers. The link to the underlying algorithm is provided by weak-approximation (or shadowing) results, which quantify how closely the statistics of the discrete iterates and of the SDE match over finite horizons, with an error controlled by the stepsize (cf. Def. ~\ref{def:weak_approximation}). In addition, several recent works empirically validate that such SDEs do track the respective optimizers on modern DNNs and datasets,
supporting their use as models \citep{compagnoni2025unbiased,compagnoni2025adaptive,marshall2025to}.

\paragraph{Example: classical vs. new SDEs for SGD.}
Thm 1 of~\citep{li2017stochastic} originally formally derived the first- and second-order SDEs for single-node SGD with stepsize $\eta$. As we denote the covariance batch noise with $\Sigma(x) = \frac{1}{n} \sum_{i=1}^n (\nabla f(x) - \nabla f_i(x))(\nabla f(x) -\nabla f_i(x))^T$, the \textit{first}-order one reads

\begin{figure}[ht!]
    \centering
    \hspace{.15cm}
    {\includegraphics[width=0.48\linewidth]{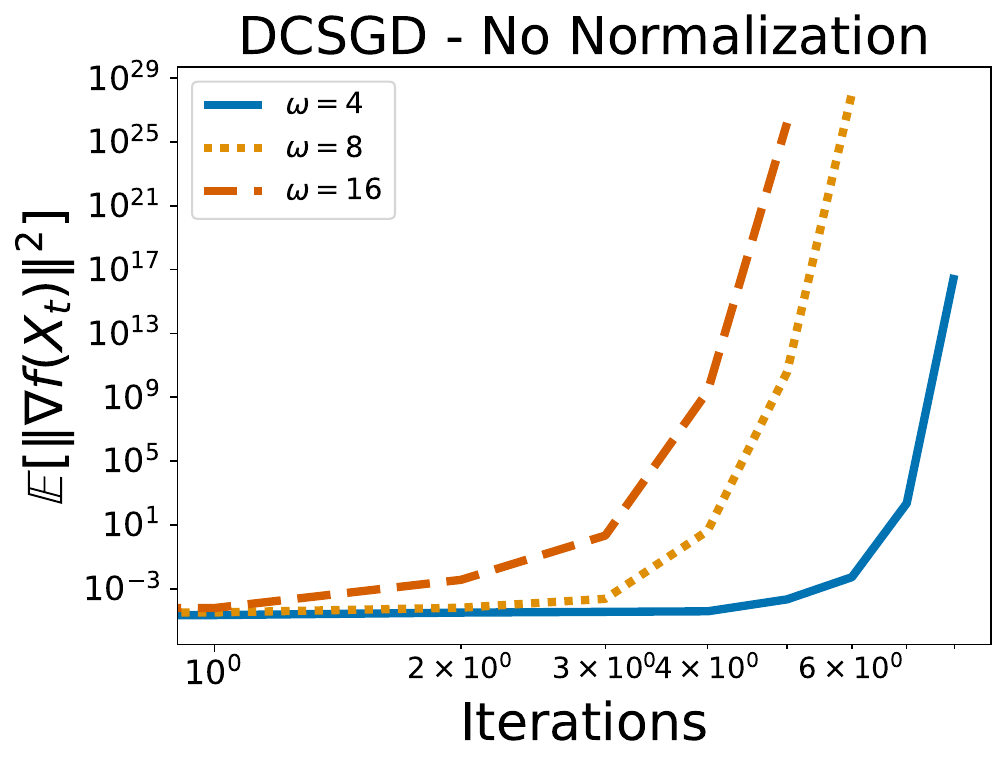}}%
    {\includegraphics[width=0.5\linewidth]{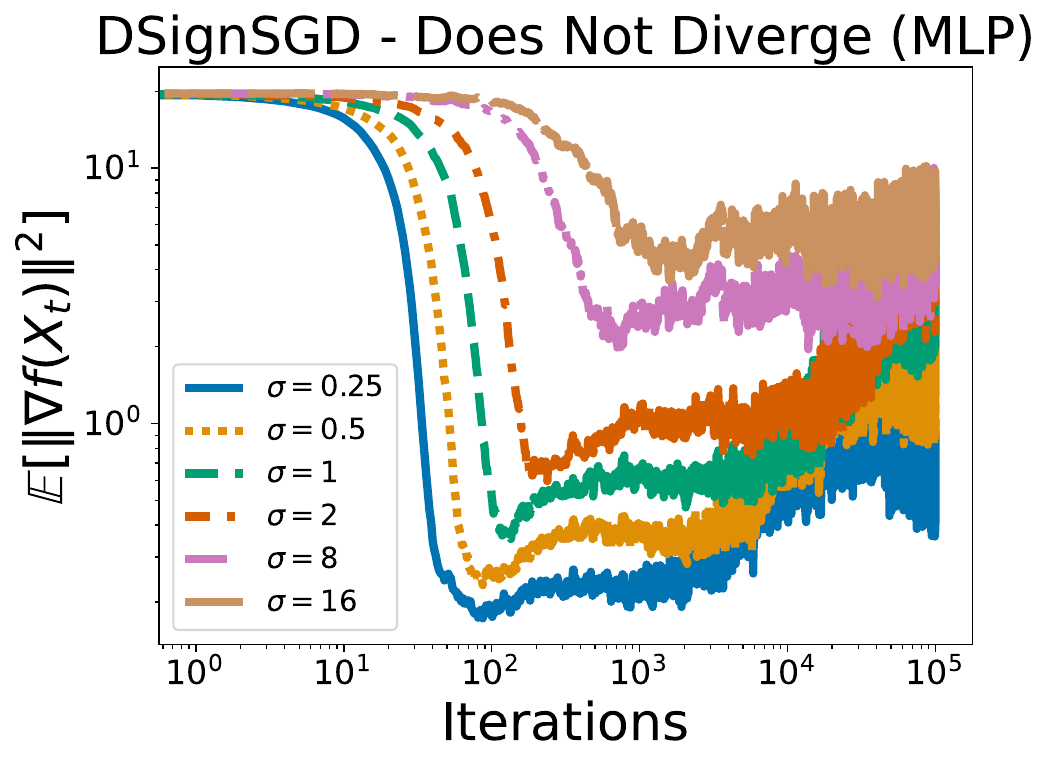}}\\
    {\includegraphics[width=0.5\linewidth]{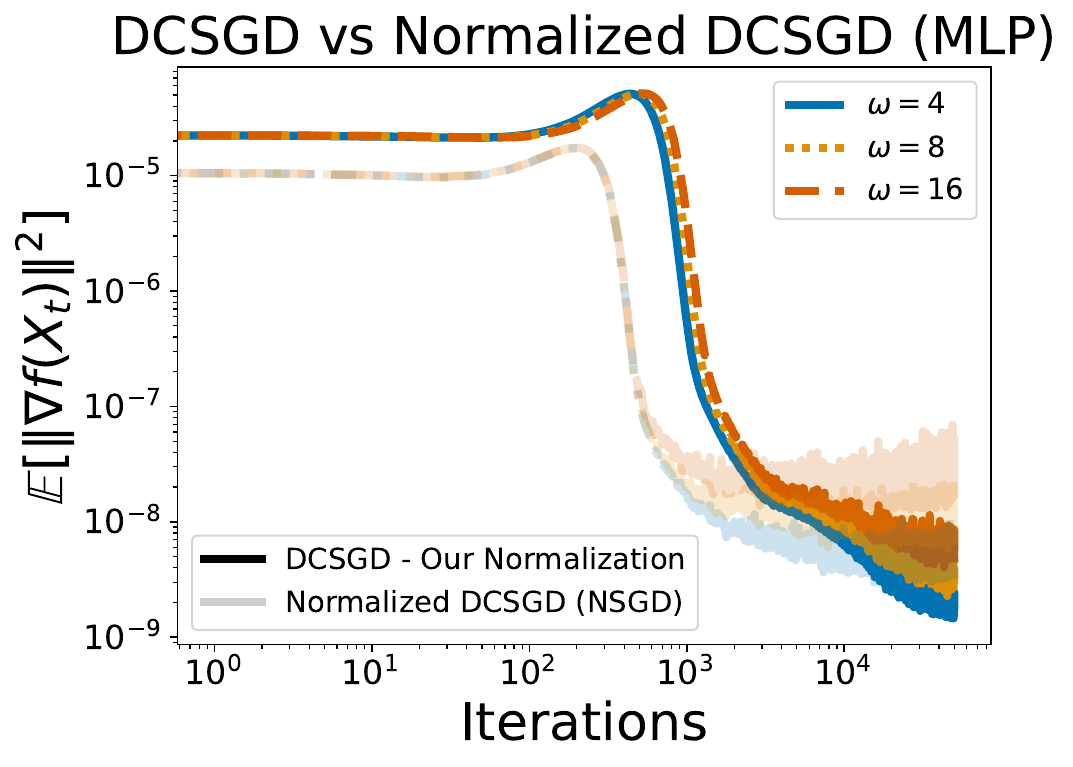}}%
    {\includegraphics[width=0.48\linewidth]{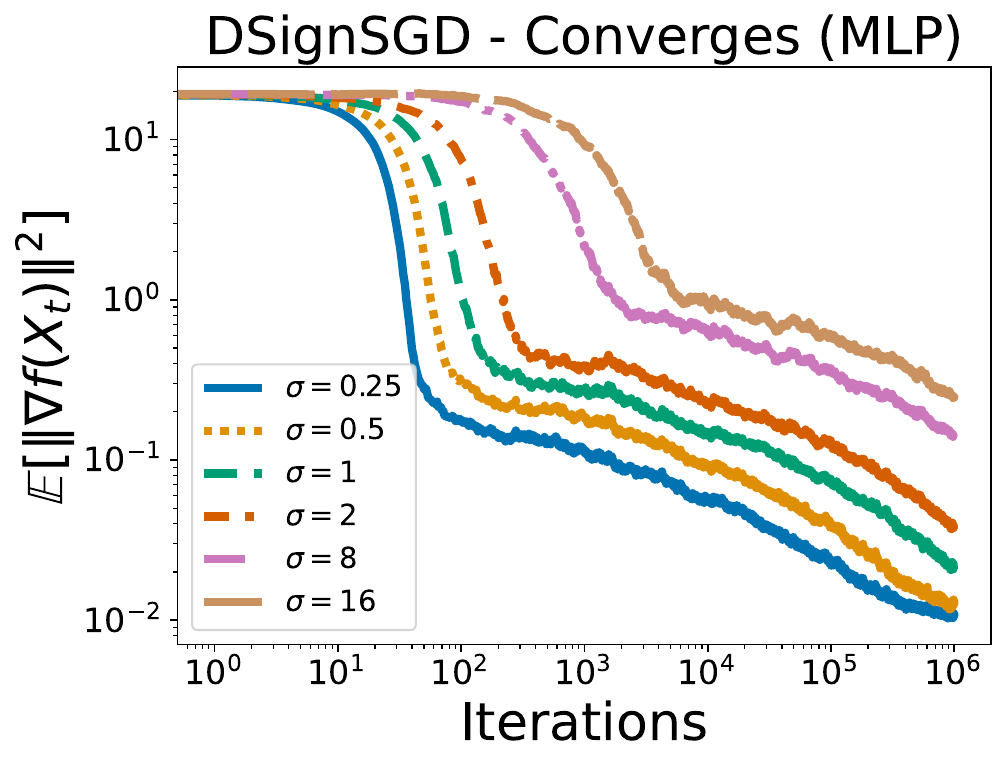}}
    \caption{\textbf{Sanity checks for our stability prescriptions on a noisy, compressed MLP training}.
\textbf{Left (DCSGD + unbiased sparsification).}
We train an MLP with $N{=}8$ clients, inject additive Gaussian gradient noise with affine variance
$Z_t\sim\mathcal{N}(0,\sigma^2\|g_t\|_2^2 I)$, and then apply unbiased random sparsification at compression levels
$\omega$ (larger $\omega$ means more aggressive sparsification).
Without any scheduler/normalization, DCSGD becomes unstable and the divergence worsens as $\omega$ increases (top-left).
Using the adaptive normalization suggested by Thm.~\ref{thm:DCSGD_2} (Eq.~\ref{eq:StepDCSGD_UV}) stabilizes training and yields convergence for all $\omega$ (bottom-left);
We also report a baseline that applies plain Normalized SGD under the same noise and compression, which exhibits a (here less stable) profile.
\textbf{Right (DSignSGD under heavy tails).}
We inject Student's $t$ gradient noise with $\nu{=}1$ for different scale values $\sigma$.
With a constant stepsize, DSignSGD remains stable but does not converge (top-right), whereas the diminishing schedule prescribed by Thm.~\ref{thm:sign}
(here $\eta_k = 1/\sqrt{k+1}$) yields convergence across noise scales (bottom-right).
See Appendix~\ref{sec:Exper} for full implementation details.}
    \label{fig:InsightValidation_MLP}
\end{figure}

\begin{equation}\label{eq:SGD_FO}
dX_t = -\nabla f(X_t) dt + \sqrt{\eta} \sqrt{\Sigma(X_t)} dW_t,
\end{equation}

while the \textit{2nd}-order SDE adds a curvature term to the drift,

\begin{equation}\label{eq:SGD_SO}
d X_t = - \nabla f(X_t) dt  - \textcolor{purple}{\frac{\eta}{2} \nabla ^2 f(X_t) \nabla f(X_t)} dt + \sqrt{\eta} \sqrt{\Sigma(X_t)} dW_t.
\end{equation}

In this work, we show that both Eq. \ref{eq:SGD_FO} and Eq. \ref{eq:SGD_SO} miss discrete-time stepsize stability restrictions, especially under $(L_0,L_1)$-smoothness. We therefore introduce a new \textit{first}-order SDE obtained by modifying the drift term:
\begin{equation}\label{eq:SGD_SO_New}
d X_t = - \nabla f(X_t) dt   \textcolor{orange}{\scalebox{1.1}{$\boldsymbol{+}$} \frac{\eta}{2} \nabla ^2 f(X_t) \nabla f(X_t)} dt + \sqrt{\eta} \sqrt{\Sigma(X_t)} dW_t.
\end{equation}

Theorem \ref{thm:gd_sde} formally proves that Eq. \ref{eq:SGD_SO_New} is a \textit{first}-order weak approximation for SGD in the sense of Def.~\ref{def:weak_approximation}, so the modification is not merely heuristic.

\paragraph{Algorithms and their SDEs.} For a constant stepsize $\eta$ and scheduler $\eta_k$, we study DCSGD defined as

\begin{equation}
x_{k+1} = x_{k} - \tfrac{\eta \eta_k}{N} \sum_{i=1}^{ N} \mathcal{C}_{\xi_i}\bigl( \nabla f_{i,\gamma_i} (x_k) \bigr),
\end{equation}
with unbiased compressors $\mathcal{C}_{\xi_i}$ with SDE models in Eq.~\ref{eq:DCSGD_SDE_App_First}--\ref{eq:DCSGD_SDE_App_Second} and DSignSGD defined as
\begin{equation}
x_{k+1} = x_{k} - \tfrac{\eta \eta_k}{N} \sum_{i=1}^{N} \sign(\nabla f_{i,\gamma_i} (x_k)),
\end{equation}
with SDEs in Eq.~\ref{eq:SDE_HSignSGD_Full_App}--\ref{eq:SDE_HSignSGD_Full_App_SO}.  

\section{Theoretical Results}

\paragraph{Step-size convention.}
Recall that, in the continuous-time setup, the dynamics of the iterates is modeled by a stochastic process {\small$X_t$} solution to an SDE model. To separate \emph{adaptivity} from \emph{scheduling}, we \emph{parameterize} the stepsize as $\eta \eta_t$, where $\eta>0$ is a base scale and $(\eta_t)_{t\ge 0}$ is a deterministic scheduler. We assume the standard Robbins-Monro conditions: defining $\phi^{(i)}_t \eqdef \int_0^t (\eta_s)^i ds$, we require $\phi^{(1)}_t \to \infty$ and $\phi^{(2)}_t/\phi^{(1)}_t \to 0$ as $t\to\infty$.
A typical choice is $\eta_t = (1+t)^{-a}$ for $a\in(0,1)$ (the boundary cases $a\in\{\tfrac12,1\}$ are also admissible with minor modifications). 

\paragraph{Overview}
Our insights concern the conditions on the learning rate $\eta\eta_t$ for convergence, where $\eta_t$ is a predetermined scheduler. We aim to determine how factors such as compression, noise structure, and adaptivity influence the level of normalization required to guarantee convergence.
First, we show how first- and second-order continuous-time models from the literature lead to misleading conclusions, as they fail to capture the stability thresholds of the learning rate of GD. Then, we justify the derivation of new models that capture this aspect of the dynamics. Finally, we present Thm. \ref{thm:DCSGD_2} and Thm. \ref{thm:sign}, which are derived under these new formulations and empirically validated in Fig. \ref{fig:InsightValidation_MLP}. These experiments should be read as \textbf{mechanism validation} rather than competitive benchmarking: their purpose is to test whether the instability and stabilization patterns predicted by the SDE analysis appear in controlled neural-network settings. To further check that these qualitative mechanisms persist beyond the MLP setup, Appendix~\ref{sec:additional_cifar_experiments} reports additional sanity checks on ResNet-18 and ViT models trained on CIFAR-10 in a distributed setting with $N=8$ clients.

\subsection{On the Failure of Classic First-Order SDE Models}\label{sec:fail1}

Consistent with the literature, we start our analysis and derive a convergence bound for DCSGD from its \textit{first}-order SDE: On the one hand, this result is very insightful and certainly captures important aspects of the dynamics. On the other hand, we quickly figure out its limitations as it fails to capture the fact that Gradient Descent on an $L$-smooth loss only converges if $\eta \eta_t < \frac{2}{L}$.

\begin{theorem}
    \label{thm:DCSGD_1}
   {\textbf{(DCSGD, unbiased compression, affine variance)}} Let $f$ be $(L_0,L_1)$-smooth, and each client has $(\sigma_{0,i}^2, \sigma_{1,i}^2)$-variance. Define $\overline{\sigma_0^2}\eqdef \frac{1}{N} \sum_{i=1}^{N} \sigma_{0,i}^2$, $\overline{\sigma_1^2}\eqdef \frac{1}{N} \sum_{i=1}^{N} \sigma_{1,i}^2$,  $\overline{\sigma_0^2 \omega}\eqdef \frac{1}{N} \sum_{i=1}^{N} \sigma_{i,0}^2 \omega_i$, and $\overline{\sigma_1^2 \omega}\eqdef \frac{1}{N} \sum_{i=1}^{N} \sigma_{i,1}^2 \omega_i$. For $\epsilon \in (0,1)$, assume $\eta \eta_t$ is smaller than
{\small\begin{equation}\label{eq:StepDCSGD_UV_1}
    \frac{2 \epsilon}{ \left( L_0 + L_1 \E\left[\lVert \nabla f(X_t) \rVert_2 \right] \right) \frac{\overline{\omega} + d (\overline{\sigma_1^2 \omega} + \overline{\sigma_1^2})}{N} +  \frac{  L_1 d\left(\overline{\sigma_0^2}  +  \overline{ \sigma_0^2 \omega } \right)}{N}   }.
\end{equation}}
Then, for a random time $\Hat{t}$ (indep. of $X_t$) with distribution $\frac{\eta_t}{\phi^{(1)}_t}$, we have that $\E \left[\lVert \nabla f(X_{\Hat{t}}) \rVert_2^2 \right]$ is smaller than
{\small\begin{equation*}
    \frac{S_0}{(1-\epsilon) \phi^{(1)}_t}  + \phi^{(2)}_t \frac{  L_0(\overline{\omega} + d (\overline{\sigma_1^2 \omega} + \overline{\sigma_1^2})) +  L_1 d\left(\overline{\sigma_0^2}  +  \overline{ \sigma_0^2 \omega } \right)}{2 N (1-\epsilon) \phi^{(1)}_t} \overset{t \rightarrow \infty}{\rightarrow} 0.
\end{equation*}}
\end{theorem}

\textbf{Takeaways.} This result highlights the role of the regularity of the loss landscape and its interaction with both gradient noise and compression. Eq.~\ref{eq:StepDCSGD_UV_1} makes explicit how stability tightens as the system becomes noisier, higher-dimensional, and more aggressively compressed. A way to read it is: \textbf{(i) More clients help:} increasing $N$ relaxes the constraint by countering noise and compression effects; \textbf{(ii) Dimension hurts:} larger $d$ tightens the condition, especially under affine variance ($\overline{\sigma_1^2}>0$); \textbf{(iii) Compression hurts:} $\overline{\omega}>0$ behaves as an additional distortion term and tightens the stability region;  \textbf{(iv) $(L_0,L_1)$ matters:} the constraint couples $L_1$ with noise and compression; when $L_1>0$, stability becomes
more sensitive to the local geometry through $\E\|\nabla f(X_t)\|_2$.

At the same time, the bound reveals the core \textbf{modeling limitation}: in the noiseless, uncompressed, $L$-smooth case
($\sigma_{0,i}=\sigma_{1,i}=\omega_i=L_1=0$), the denominator in Eq.~\ref{eq:StepDCSGD_UV_1} vanishes and the condition does not
restrict $\eta\eta_t$, contradicting the basic discrete-time requirement $\eta\eta_t<2/L_0$ for GD/SGD stability \cite{ahn2022understanding}.
This motivates going beyond classical first-order surrogates.

Similarly, in Theorem \ref{thm:sign_1}, we leverage the first-order SDE of DSignSGD to derive the convergence bound of DSignSGD. While it recovers the results from \citep{compagnoni2025unbiased} when $L_1 =\sigma_1 = 0$, it also predicts no restrictions on the learning rate in the noiseless scenario.

Finally, we highlight that this bound is not implementable in practice, as constants such as $L_0$, $L_1$, $\sigma_{0,i}$, $\sigma_{1,i}$, and even $\E\left[\lVert \nabla f(X_t) \rVert_2 \right]$ are not known a priori. While this might look like a limitation, \textbf{this is common} in the literature \citep{gorbunov2024methods}: As with classical $\eta<2/L$-type conditions, only identify stability regions and show how different factors driving the dynamics influence it.

\vspace{-0.2cm}

\subsection{Second-Order Models Fail: We Need New Models}\label{sec:fail2}
\vspace{-0.1cm}
For didactic reasons, we now showcase a clear example of how first- and second-order models for GD fail at capturing its learning-rate restrictions. To keep the discussion transparent, we use the noiseless single-node case; Similar issues arise in the stochastic case and motivate our new SDE: We formalize this in Sec.~\ref{sec:new_models} and Thm.~\ref{thm:gd_sde}.

\paragraph{A quadratic sanity check.}
Consider $f(x)=\tfrac{\lambda x^2}{2}$ with $\lambda>0$ in one dimension.
Discrete GD with constant stepsize $\eta$ satisfies
$x_{k+1} = (1-\eta\lambda) x_k$: it is stable iff $\eta<2/\lambda$. The first-order ODE $dX_t = -\nabla f(X_t) dt = -\lambda X_t dt$ yields
$f(X_t)=f(X_0)e^{-2\lambda t}$, i.e., convergence for \emph{all} $\eta$: it does not even ``see'' $\eta$. One might expect that moving from the first-order to the  \emph{second-order} ODE might fix the issue.
Surprisingly, this is not the case. The classical 2nd-order ODE from the literature,

\vspace{-0.35cm}

\begin{equation}
    dX_t = -\nabla f(X_t) dt - \frac{\eta}{2}\nabla^2 f(X_t)\nabla f(X_t) dt,
\end{equation}

\vspace{-0.25cm}

implies $f(X_t)=f(X_0)e^{-2\lambda(1+\lambda\eta/2)t}$: it predicts unconditional convergence and even suggests that larger $\eta$ accelerates convergence, where instead GD would diverge if the learning rate were large enough.

\vspace{-0.25cm}

\paragraph{A quartic sanity check: no universal stepsize under $(L_0,L_1)$-smoothness.}
The quadratic example above highlights that classical ODE/SDE surrogates can miss the \emph{global} $L$-smooth stepsize
restriction. Under $(L_0,L_1)$-smoothness, an even sharper pathology occurs: there may be \emph{no single constant stepsize}
that guarantees stability \emph{uniformly over initializations}.
A minimal example is the one-dimensional quartic
\(
f(x)=\tfrac{x^4}{4},
\)
which is not $L$-smooth. In this case, a GD step with constant stepsize $\eta$ reads
\[
x_{k+1}=x_k-\eta \nabla f(x_k)=x_k-\eta x_k^3 = x_k(1-\eta x_k^2).
\]
Hence, one needs $\eta< 2/x_k^2$ to prevent instantaneous expansion, meaning that stability depends on the \emph{current scale} of the iterate. In particular, there is \emph{no universal constant} $\eta>0$ that stabilizes GD for all initialization: for any $\eta$, choosing $|x_0|>\sqrt{2/\eta}$ makes GD expand at the first step.

The first-order ODE is
\(
dX_t=-X_t^3\,dt,
\)
which yields an explicit globally convergent trajectory
\(
X_t=(X_0^{-2}+2t)^{-1/2}
\)
and thus predicts convergence \emph{independently} of $\eta$.
The classical second-order ODE,
\(
dX_t=-X_t^3\,dt-\tfrac{3\eta}{2}X_t^5\,dt,
\)
makes the loss decrease even \emph{faster} as $\eta$ grows (the additional term is purely damping), again contradicting the discrete-time instability for large initialization.

\textbf{Takeaway:} 
The modeling failure of classic ODE models is twofold: \textbf{(i)} Failure to restrict the learning rate to ensure stability; \textbf{(ii)} Failure to reproduce divergence at large learning rates. While the first is non-contestable, the second might feel a bit stretched, as ODEs are more reliable when the learning rate is small, and one could argue we pushed the use of ODEs outside of their ``validity region''. However, our analysis of the quartic function shows that even for an infinitesimally small learning rate, both the ODEs of GD predict convergence independently of the learning rate, while GD actually has a restriction on the learning rate depending on the initialization. Therefore, neither first- nor second-order ODEs of GD correctly model the dynamics of GD on $(L_0,L_1)$-smooth functions even at infinitesimally small learning rates: More details in Sec. \ref{sec:quartic}.

\paragraph{Comparison With Discrete-Time Analysis}
Here, we take a step back and closely compare the dynamics of the loss function in discrete-time with that in continuous time as prescribed by the ODEs of GD. Using a second-order Taylor expansion around $x_k$ along the GD step gives
\begin{align}\label{eq:loss_discr}
    \frac{f(x_{k+1}) - f(x_k)}{\eta}
    & = - \|\nabla f(x_k)\|^2 + O_{x_k}(\eta^2)  + \textcolor{purple}{\tfrac{\eta}{2}\nabla f(x_k)^\top \nabla^2 f(x_k)\, \nabla f(x_k)}.
\end{align}
Here, $O_{x_k}(\eta^2)$ denotes a term bounded by $C(x_k)\eta^2$ for small enough $\eta$, with $C(x_k)$ independent of $\eta$. However, the first-order ODE of GD implies that
\begin{equation}
    d f(X_t) = - \lVert \nabla f(X_t) \rVert_2^2 dt.
\end{equation}
We notice that this continuous-time loss drift is completely missing the $O(\eta)$ correction highlighted in \textcolor{purple}{purple color} in Eq.~\ref{eq:loss_discr}. The natural step is to use the 2nd-order ODE, which implies that

\begin{align}
    d f(X_t) & = - \lVert \nabla f(X_t) \rVert_2^2 dt  \textcolor{emphgreen}{\scalebox{1.1}{$\boldsymbol{-}$}}      \textcolor{purple}{\tfrac{\eta}{2} \nabla f(X_t)^{\top}\nabla^2 f(X_t) \nabla f(X_t) dt}.
\end{align}

While this ODE of the loss \textit{does} incorporate some second-order information highlighted in \textcolor{purple}{purple color}, we notice that its \textcolor{emphgreen}{sign} is flipped with respect to the discrete-time loss drift in Eq.~\ref{eq:loss_discr}. This flipped sign is exactly the factor responsible for the failures of this second-order ODE.

\begin{table}[t]
\centering
\caption{Comparison of the learning rate constraints on $\eta \eta_t$ derived from classic SDEs vs.\ our SDEs.
Each row corresponds to the theorem pairs: DCSGD (Thm.~\ref{thm:DCSGD_1} vs.~Thm.~\ref{thm:DCSGD_2}) and DSignSGD (Thm.~\ref{thm:sign_1} vs.~Thm.~\ref{thm:sign}).
Below, $G \eqdef \left( L_0 + L_1 \E\left[\lVert \nabla f(X_t) \rVert_2 \right] \right)$.}
\label{tab:compare_1st_2nd}

\setlength{\tabcolsep}{6pt}
\renewcommand{\arraystretch}{1.25}
\footnotesize

\begin{threeparttable}
\resizebox{\columnwidth}{!}{%
\begin{tabular}{@{}c c c@{}}
\toprule[1pt]
\textbf{Setting} & \textbf{Classic SDE} & \textbf{Our SDE} \\
\midrule[1pt]

\textbf{DCSGD}
&
$\displaystyle
\frac{2 \epsilon}{
G \tfrac{\overline{\omega} + d (\overline{\sigma_1^2 \omega} + \overline{\sigma_1^2})}{N}
+ \tfrac{L_1 d\left(\overline{\sigma_0^2} + \overline{\sigma_0^2 \omega}\right)}{N}
}
$
&
$\displaystyle
\frac{2 \epsilon}{
G\left(
\textcolor{orange}{\mathbf{1}}
+ \tfrac{\overline{\omega} + d (\overline{\sigma_1^2 \omega} + \overline{\sigma_1^2})}{N}
\right)
+ \tfrac{L_1 d\left(\overline{\sigma_0^2} + \overline{\sigma_0^2 \omega}\right)}{N}
}
$
\\

\midrule[1pt]

\textbf{DSignSGD}
&
$\displaystyle
\frac{\ell_{\nu}}{K}
\quad \text{s.t.} \quad
K = \frac{L_1 d \sigma_{\mathcal{H},1}}{2N}
$
&
$\displaystyle
\frac{\ell_{\nu}}{K}
\quad \text{s.t.} \quad
K = \frac{L_1 d \sigma_{\mathcal{H},1}}{2N}
+ \textcolor{orange}{\sqrt{d}\left(L_0+L_1\right)M_{\nu}}
$
\\

\bottomrule[1pt]
\end{tabular}%
}
\end{threeparttable}
\end{table}

\textbf{Deriving a New Model: An Ansatz Approach.}
Therefore, we understand that choosing the right model for the iterates is critical to capture the aspects of the dynamics under analysis. Inspired by a classic approach in physics, we propose an \textit{ansatz} for an ODE of the iterates of GD and seek one that models the loss dynamics more closely, which is exactly the information needed to recover the correct stepsize stability threshold. For $\alpha \in \mathbb{R}$, we propose:

\begin{equation}\label{eq:ansatz}
    dX_t = - \nabla f(X_t) dt + \alpha\eta\nabla^2 f(X_t) \nabla f(X_t) dt,
\end{equation}
which implies that the loss dynamics is driven by
\begin{align}
    d f(X_t) & = - \lVert  \nabla f(X_t) \rVert_2^2 dt  + \alpha\eta \nabla f(X_t)^{\top}\nabla^2 f(X_t) \nabla f(X_t) dt. 
\end{align}
Matching the induced loss drift with the discrete-time generator expansion in Eq.~\ref{eq:loss_discr} (i.e., comparing $\frac{d}{dt}f(X_t)$ with $\frac{f(x_{k+1})-f(x_k)}{\eta}$ up to $O(\eta)$) suggests choosing $\alpha = \frac{1}{2}$, which is the unique choice within the ansatz family that recovers the exact quadratic stability threshold. Therefore, we obtain

\begin{equation}\label{eq:our_ode}
    dX_t = - \nabla f(X_t) dt  \textcolor{orange}{\scalebox{1.1}{$\boldsymbol{+}$}}       \frac{\eta}{2}\nabla^2 f(X_t) \nabla f(X_t) dt,
\end{equation}
is our candidate ODE for GD: We formalize this in Section \ref{sec:new_models}, and Theorem \ref{thm:gd_sde} extends this formalization to the stochastic setting. Importantly, in the \textbf{quadratic case} studied above, it implies that
\begin{equation}
    f(X_t) = f(X_0) e^{-2\lambda\left(1-\tfrac{\lambda \eta}{2}\right)t},
\end{equation}
which, consistently with GD, converges only if $\eta < \frac{2}{\lambda}$. In the \textbf{quartic case}, this ODE implies that
\[
df(X_t) = \Bigl(-X_t^6 + \tfrac{3\eta}{2}X_t^8\Bigr)\,dt,
\]
which matches the discrete-time loss expansion
\[
\frac{f(x_{k+1})-f(x_k)}{\eta}
= -x_k^6 + \tfrac{3}{2}\eta x_k^8 + O(\eta^2).
\]

Crucially, this model predicts that for sufficiently large $|X_t|$ (roughly when $\eta X_t^2 \gtrsim 1$), the drift becomes \emph{repulsive} and the loss can increase, mirroring the fact that on $(L_0,L_1)$-smooth landscapes admissible stepsizes must shrink with the local gradient scale, rather than being globally constant or deterministic.

\paragraph{Conclusion:}
This analysis suggests that a higher order of a continuous-time model does not necessarily translate into it better modeling the discrete-time dynamics, not even in the simplest cases, and even less in the $(L_0,L_1)$-smoothness setting. In particular, we find that appropriate \textbf{first}-order SDEs are more faithful than both the classic first- \textbf{and} second-order models when it comes to accurately capturing the stability of the optimizers.

\subsection{Results Derived via Our SDEs} \label{sec:SecondOrder}

In this subsection, we report the convergence bounds for newly derived models of DCSGD and DSignSGD.  Compared to standard first- and second-order models, our models reveal the interaction between learning rate schedules, loss landscape, batch noise, and compression in a way that is consistent with the discrete dynamics of known cases in the literature. Before presenting these results, Table~\ref{tab:compare_1st_2nd} summarizes how the constraint on $\eta\eta_t$ changes when moving from leveraging the classic SDEs to ours. The \textcolor{orange}{orange color} indicates terms that \emph{only} appear due to the use of our SDEs.

\begin{theorem}
    \label{thm:DCSGD_2}
   {\textbf{(DCSGD, unbiased compression, affine variance)}} Let $f$ be $(L_0,L_1)$-smooth, and each client have $(\sigma_{0,i}^2, \sigma_{1,i}^2)$-variance. Define $\overline{\sigma_0^2}\eqdef \frac{1}{N} \sum_{i=1}^{N} \sigma_{0,i}^2$, $\overline{\sigma_1^2}\eqdef \frac{1}{N} \sum_{i=1}^{N} \sigma_{1,i}^2$,  $\overline{\sigma_0^2 \omega}\eqdef \frac{1}{N} \sum_{i=1}^{N} \sigma_{i,0}^2 \omega_i$, and $\overline{\sigma_1^2 \omega}\eqdef \frac{1}{N} \sum_{i=1}^{N} \sigma_{i,1}^2 \omega_i$. For $\epsilon \in (0,1)$, assume $\eta \eta_t$ is smaller than
{\small\begin{equation}\label{eq:StepDCSGD_UV}
    \frac{2 \epsilon}{ \left( L_0 + L_1 \E\left[\lVert \nabla f(X_t) \rVert_2 \right] \right)\left( \textcolor{orange}{\mathbf{1}} + \frac{\overline{\omega} + d (\overline{\sigma_1^2 \omega} + \overline{\sigma_1^2})}{N} \right) +  \frac{  L_1 d\left(\overline{\sigma_0^2}  +  \overline{ \sigma_0^2 \omega } \right)}{N}   }.
\end{equation}}
Then, for a random time $\Hat{t}$ (indep. of $X_t$) with distribution $\frac{\eta_t}{\phi^{(1)}_t}$, we have that $\E \left[\lVert \nabla f(X_{\Hat{t}}) \rVert_2^2 \right]$ is smaller than
{\small\begin{equation}
   \frac{1}{(1-\epsilon) \phi^{(1)}_t} \left( S_0  + \phi^{(2)}_t \frac{\eta (L_0+L_1)d(\overline{\sigma_0^2} + \overline{\sigma_0^2 \omega})}{2 N}\right) \overset{t \rightarrow \infty}{\rightarrow} 0.
\end{equation}}
\end{theorem}

\textbf{Takeaways.}
The key qualitative difference from Thm.~\ref{thm:DCSGD_1} is the \emph{baseline term} $1$ inside the stability constraint
(Eq.~\ref{eq:StepDCSGD_UV}), which restores the correct noiseless $L$-smooth restriction:
setting $\sigma_{0,i}=\sigma_{1,i}=\omega_i=L_1=0$ yields $\eta\eta_t < 2/L_0$. More broadly: \textbf{(i)} The constraint quantifies how \textbf{compression} and \textbf{affine variance} amplify the effective ``adaptivity pressure'': the more aggressive the compression (larger $\overline{\omega}$) and/or the stronger the variance growth ($\overline{\sigma_1^2}$), the more the stepsize must be normalized. \textbf{(ii)} Under $(L_0,L_1)$-smoothness, stability is inherently \textbf{geometry dependent} (through $\E\|\nabla f(X_t)\|_2$), reflecting that there may be no universal constant stepsize guaranteeing stability uniformly over the trajectory. \textbf{(iii)} Although Eq.~\ref{eq:StepDCSGD_UV} is a sufficient condition rather than a closed-form tuning rule, it plays the same conceptual role
as the classical $\eta<2/L$ condition: it delineates a stability region and reveals how the right amount of normalization necessary to ensure convergence is dictated jointly by the compression rate, the variance structure of the noise, and the geometry of the landscape. In this sense, this result offers practitioners concrete guidance on when and how to stabilize DCSGD in challenging regimes. \textbf{(iv)} The insights derived are in accordance with some known in the literature: $1.$ In the noiseless setup $\overline{\sigma_0} = \overline{\sigma_1} = 0$, normalizing the update step naturally emerges as a condition for convergence, in accordance with \citep{gorbunov2024methods}; $2.$ When $L_1 \overline{\sigma_1^2} > 0$, stronger adaptivity is required, in line with insights derived from analyses of Adagrad \citep{wang2023convergence} and NSGD \citep{chen2023generalized}. \textbf{(v)} \textbf{Novelty.} To the best of our knowledge, Eq.~\ref{eq:StepDCSGD_UV} is the first stability region that simultaneously captures $(L_0,L_1)$-geometry, \emph{unbiased} gradient compression, and \emph{affine-variance} noise growth for DCSGD; Additionally, existing $(L_0,L_1)$ results typically treat either affine-variance without compression or compression under more restrictive noise assumptions (cf. Table~\ref{tab:comparison}).

{\textbf{DSignSGD, structured noise, unbounded expected value.}}
To provide informative results for the convergence of DSignSGD under heavy-tailed batch noise, we additionally assume structured noise following a student-$t$ distribution: {\small$\nabla f_{\gamma_i}(x) = \nabla f(x) +  \sqrt{\Sigma_i} Z_i$} s.t. {\small$Z_i \sim t_{\nu}(0, I_d)$}, $\nu$ are the degrees of freedom, and \textit{scale matrices}\footnote{These are \textit{not} covariance matrices, but we use the same notation to facilitate comparability.} {\small$\Sigma_i= \diag(\sigma_{1,i}^2, \cdots, \sigma_{d,i}^2)$}. Note that if $\nu=1$, the \textit{expected value} of $Z_i$ is \textit{unbounded}, thus modeling much more pathological noise than simple affine $(\sigma_0^2,\sigma_1^2)$-variance.

\begin{theorem}\label{thm:sign} Let $f$ be $(L_0,L_1)$-smooth, $\Sigma_i \leq \sigma_{\text{max}, i}^2$, $\sigma_{\mathcal{H},1}$ be the harmonic mean of $\{\sigma_{\text{max}, i} \}$, and {\small$K \eqdef  \left(\frac{L_1 d \sigma_{\mathcal{H},1}}{2 N} + \textcolor{orange}{\sqrt{d}(L_0+L_1) M_{\nu}}\right)$}. Then, for $\eta \eta_t <\frac{\ell_{\nu}}{K} $ and random time $\Tilde{t}$ (indep. of $X_t$) with distribution $\frac{\eta_t \ell_{\nu} - \eta_t^2 \eta K}{\phi^{(1)}_t \ell_{\nu}  - \phi^{(2)}_t \eta K}$, we have that $\E \left[ \lVert \nabla f\left(X_{\Tilde{t}}\right)\rVert_2^2 \right]$ is smaller than
 {\small\begin{equation}
     \frac{\sigma_{\mathcal{H},1}}{\phi^{(1)}_t \ell_{\nu}  - \phi^{(2)}_t \eta K} \left[ S_0 + \phi^{(2)}_t \eta (L_0+L_1) \left[ \frac{d}{2 N} + \frac{M_{\nu}\sqrt{d}}{\sigma_{\mathcal{H},1} } \right] \right]
\end{equation}}
and converges to $0$ as $t \rightarrow \infty$.

\end{theorem}

\textbf{Takeaways.}
Heavier tails (smaller $\nu$) and larger noise scales (larger $\sigma_{\max,i}$) tighten the admissible stepsize through $K$.
The crucial difference from the first-order SDE analysis (Theorem~\ref{thm:sign_1}) is the appearance of an additional baseline term: Even when $\sigma_{\max,i}=0$, the bound still enforces a nontrivial restriction
$\eta\eta_t \lesssim 1/\sqrt{d}(L_0+L_1)$.
Finally, unlike DCSGD (see Eq.~\ref{eq:StepDCSGD_UV}), DSignSGD does \emph{not} require the stepsize to scale inversely with $\|\nabla f\|$:
The sign update already induces an implicit normalization, which is consistent with the stability behavior in Fig.~\ref{fig:InsightValidation_MLP}.

\vspace{-0.2cm}

%%%%%%%%%%%%%%%%%%%%%%%%%%%%%%%%%%%%%%%%%%%%%%%%%%%%%%%%%%%%%%%%%%%%%%%%%%%%
\section{Conclusion}
%%%%%%%%%%%%%%%%%%%%%%%%%%%%%%%%%%%%%%%%%%%%%%%%%%%%%%%%%%%%%%%%%%%%%%%%%%%%

\vspace{-0.2cm}

\paragraph{\textbf{Conceptual contribution: stability-faithful continuous-time models.}}
We developed continuous-time models for distributed stochastic optimizers under $(L_0,L_1)$-smoothness that are explicitly \emph{stability-faithful}.
A key conceptual finding is that, in this regime, standard SDEs used in the literature can be qualitatively misleading: both the first- and second-order SDEs may fail to reproduce discrete-time stepsize/stability restrictions, and can even predict benign or accelerated behavior where the underlying algorithm is unstable.
Our viewpoint is to prioritize \emph{stability fidelity}, by modifying the curvature-dependent drift so that the dynamics of our novel models recover the correct stability thresholds.

\paragraph{\textbf{Technical contribution: new SDE surrogates and convergence guarantees beyond classical assumptions.}}
Technically, this paper provides (to the best of our knowledge) the \emph{first} SDE-based treatment of stochastic optimization under $(L_0,L_1)$-smoothness. Using our stability-corrected SDEs, we derive bounds under \emph{unbiased compression} and general noise models that go beyond bounded variance, including \emph{affine-variance growth} and \emph{heavy-tailed} regimes. The resulting stepsize/stability prescriptions make the dependence on geometry, compression strength, dimension, and the noise structure explicit, clarifying precisely which mechanisms shrink the stable region and why classical models can miss this interaction under $(L_0,L_1)$-smoothness.

\paragraph{\textbf{Practical implications and outlook.}}
From a practical standpoint, our analysis yields a concrete stability message: for DCSGD, \emph{noise- and compression-dependent normalization} emerges as the natural mechanism to remain stable, and our novel results quantify how the appropriate normalization level is jointly determined by the compression rate, the noise-growth parameters, and $(L_0,L_1)$-geometry. In contrast, DSignSGD’s elementwise normalization makes it robust to heavy-tailed noise, enabling convergence under standard diminishing stepsize schedules even in regimes where the noise can have unbounded expectation. Experiments on DNNs corroborate these qualitative predictions.

More broadly, our contribution is methodological: it shows how to build SDE surrogates that are faithful to the stability restrictions of the discrete optimizer in a regime where classical SDEs can be qualitatively misleading. This matters because SDEs are already used to study convergence bounds, learning-rate schedules, batch-size control, and scaling laws~\citep{orvieto2019continuous,zhao2022batch,jastrzkebski2017three,compagnoni2024sde,compagnoni2025unbiased,compagnoni2025adaptive}. If the continuous-time surrogate predicts convergence while missing the discrete-time stability region, then prescriptions derived from it may push the actual optimizer outside its stable regime. Our corrected models open the door to revisiting these questions under $(L_0,L_1)$-smoothness with the relevant stability restrictions built in. For instance, batch-size schemes based on classical SDE control problems~\citep{zhao2022batch} should incorporate constraints preventing ratios such as $\eta/B(t)$ from leaving the stable regime; learning-rate schedules derived from continuous-time convergence bounds on suboptimality or gradient norm~\citep{orvieto2019continuous,compagnoni2024sde,compagnoni2025unbiased,compagnoni2025adaptive} should be interpreted together with admissible stepsize regions, since otherwise they may misinform how to set learning-rate schedules; and SDE-based scaling laws, such as the $\eta/B$ linear scaling rule~\citep{jastrzkebski2017three} and square-root scaling rules for adaptive methods~\citep{compagnoni2025adaptive}, should be viewed as valid only within the stability limits imposed by the corrected dynamics, for instance in terms of admissible batch-size and stepsize ranges.

Finally, our contribution is intentionally foundational: rather than proposing new optimizers, we provide a unified, stability-faithful SDE framework for understanding how noise, compression, and normalization interact under relaxed smoothness assumptions, and we expect it to support future extensions (e.g., heterogeneous clients, error-feedback and biased compressors) and analogous analyses of modern methods that incorporate implicit second-order structure, such as momentum/NAG and adaptive optimizers (AdamW, RMSprop).

\paragraph{\textbf{Limitations.}}
Our analysis focuses on homogeneous clients, server-aggregated distributed optimization, and unbiased or signed compression. Fully decentralized topologies, heterogeneous client distributions, error-feedback mechanisms, and general biased compressors are outside the scope of the present work. Moreover, our guarantees are stated for the continuous-time surrogate models, and our experiments are intended as mechanism validation rather than competitive benchmarking. We provide a more detailed discussion in App.~\ref{sec:limitations}.

\section*{Acknowledgments}
Enea Monzio Compagnoni, Rustem Islamov, and Aurelien Lucchi acknowledge the financial support of the Swiss
National Foundation, SNF grant No 207392.

\clearpage
\bibliography{references}
\bibliographystyle{plainnat}

\newpage

\appendix

%%%%%%%%%%%%%%%%%%%%%%%%%%%%%%%%%%%%%%%%%%%%%%%%%%%%%%%%%%%%%%%%%%%%%%%%%%%%
\section{Additional Related work}
\label{sec:RelatedWorks_App}
%%%%%%%%%%%%%%%%%%%%%%%%%%%%%%%%%%%%%%%%%%%%%%%%%%%%%%%%%%%%%%%%%%%%%%%%%%%%

%\vspace{-0.1cm}

\textbf{SDE Approximations in Optimization.} Continuous-time models in the form of differential equations are a well-established tool to study discrete-time optimizers, e.g.~\citep{helmke1994optimization,kushner2003stochastic}. Recent works also derived differential equations to model SGD under heavy-tailed batch noise~\citep{simsekli2019tailindex}, and~\citep{zhou2020towards} derived a Lévy-driven stochastic differential equation to model the non-Gaussianity of the noise. It was~\citep{li2017stochastic} that first introduced a \textit{rigorous} theoretical framework to derive SDEs that faithfully model the stochastic behavior intrinsic to optimization algorithms widely employed in machine learning. Since then, such SDE-based formulations have been applied across several domains, including \emph{stochastic optimal control} for tuning stepsizes~\citep{li2017stochastic,li2019stochastic} and batch sizes~\citep{zhao2022batch}. Notably, SDEs have been instrumental in analyzing \emph{convergence bounds} and \emph{stationary distributions}~\citep{compagnoni2023sde,compagnoni2024sde,compagnoni2025adaptive}, \emph{scaling laws}~\citep{jastrzkebski2017three,compagnoni2025adaptive,compagnoni2025unbiased}, \emph{implicit regularization} effects~\citep{smith2021origin,compagnoni2023sde}, and \textit{implicit preconditioning}~\citep{xiao2025exact,marshall2025to}. We refer the interested reader to~\citep{orvieto2019shadowing,orvieto2019continuous} for a didactic introduction to this topic, especially for how Itô calculus is used to derive these results.

We contribute to this line by highlighting a key gap: both the classic \textit{first}-order and the \textit{second-order} SDEs from the literature can yield conclusions that contradict the discrete-time dynamics of SGD. While this is somewhat expected from a first-order model, it is surprising that a higher-order one also fails, possibly even more. While previous studies did derive second-order SDEs for various optimizers~\citep{li2017stochastic,li2019stochastic,luo2024explicit}, they did not exploit them to obtain theoretical insights and thus overlooked these limitations. To remediate this, we derive new \textit{first}-order SDEs whose drift is modified to recover the relevant stability thresholds of discrete optimizers, and we use them to study the joint effect of compression and general noise in distributed learning.

% ------------------- Table 1  -------------------
\begin{table*}[t]
\centering
\caption{Comparison of existing convergence results for stochastic methods applied to $(L_0,L_1)$-smooth problems. All results are derived for non-convex problems, and the bounds are given in expectation unless stated otherwise. All works assume bounded noise or bounded variance unless stated otherwise. Abbreviations: ``HT'' = heavy-tailed noise, ``Affine var.'' = affine variance.}
\label{tab:comparison_App}

\setlength{\tabcolsep}{4pt}        
\renewcommand{\arraystretch}{0.9}  
\footnotesize

\begin{threeparttable}
\resizebox{\textwidth}{!}{%
\begin{tabular}{@{}lccccc@{}}
\toprule[1pt]
\multirow{2}{*}{\textbf{Reference}} &
\multirow{2}{*}{\textbf{Dynamics}} &
\multicolumn{2}{c}{\textbf{Noise}} &
\multirow{2}{*}{\textbf{$(L_0,L_1)$-smooth}} &
\multirow{2}{*}{\textbf{Compression}} \\[-0.3em]
\cmidrule(lr){3-4}
& & \textbf{HT} & \textbf{Affine var.} & & \\
\midrule[1pt]
\makecell[l]{\citep{zhang2019gradient, zhang2020improved}\\
\citep{zhao2021convergence}\\
\citep{crawshaw2022robustness}\\
\citep{koloskova2023revisiting}\\
\citep{li2023convex}\tnote{\color{blue}(1)}~~~~\tnote{\color{blue}(2)}\\
\citep{hubler2024parameter}\\
\citep{li2024convergence}\tnote{\color{blue}(1)}~~~~\tnote{\color{blue}(3)}\\
\citep{gaash2025convergence}\tnote{\color{blue}(1)}~~~~\tnote{\color{blue}(3)}}
& Discrete & \xmark & \xmark & \cmark & \xmark \\
\midrule[1pt]
\makecell[l]{\citep{faw2023beyond}\tnote{\color{blue}(1)}~~~~\tnote{\color{blue}(2)}\\
\citep{wang2023convergence}\tnote{\color{blue}(1)}~~~~\tnote{\color{blue}(2)}\\
\citep{chen2023generalized}}
& Discrete & \xmark & \cmark & \cmark & \xmark \\
\midrule[1pt]
\citep{khirirat2024error} & Discrete & \xmark & \xmark & \cmark & \cmark \\
\midrule[1pt]
\citep{chezhegov2025convergence}\tnote{\color{blue}(1)}~~~~\tnote{\color{blue}(3)}
& Discrete & \cmark & \xmark & \cmark & \xmark \\
\midrule[1pt]\midrule[1pt]
\citep{compagnoni2025unbiased} & Continuous & \cmark & \xmark & \xmark & \cmark \\
\textbf{This work} & Continuous & \cmark & \cmark & \cmark & \cmark \\
\bottomrule[1pt]
\end{tabular}%
}

\begin{tablenotes}[flushleft]
\footnotesize
\item[{\color{blue}(1)}] High-probability convergence analysis.
\item[{\color{blue}(2)}] Convergence bounds have inverse-power dependence on the failure probability.
\item[{\color{blue}(3)}] Derived for convex problems.
\end{tablenotes}
\end{threeparttable}

\end{table*}
% ---------------------------------------------------------------------

\paragraph{Interplay of noise, compression, and adaptivity under $(L_0,L_1)$-smoothness.}
Previous research has extensively studied the effect of batch noise, compression, and adaptivity on the convergence of optimizers. Batch noise significantly influences stochastic gradient algorithms, affecting their convergence speed and stability \citep{simsekli2019tailindex, zhang2019gradient, kunstner2024heavy, compagnoni2025adaptive}. Noise characteristics such as heavy-tailed distributions have been shown to profoundly impact the optimization trajectories, necessitating robust algorithmic strategies \citep{csimcsekli2019heavy, gorbunov2021near}. Compression methods, including unbiased techniques such as sparsification and quantization \citep{alistarh2017qsgd, stich2018sparsified, mishchenko2024distributed} and biased approaches such as SignSGD \citep{bernstein2018signsgd, balles2018dissecting}, are critical for reducing communication overhead in distributed training. These compression techniques come with theoretical guarantees under various smoothness assumptions \citep{alistarh2017qsgd, gorbunov2020unified, mishchenko2024distributed, compagnoni2025unbiased}, and recent results also develop linear-rate or near-optimal behavior under generalized/$(L_0,L_1)$-smoothness \citep{vankov2024optimizing, tyurin2024toward}. Adaptive methods such as SignSGD normalize gradient elements to cope effectively with large or heavy-tailed gradient noise, thus demonstrating improved empirical robustness \citep{safaryan2021signsgd, compagnoni2025adaptive, compagnoni2025unbiased, kornilov2025sign}.

However, most of the works mentioned above rely on restrictive assumptions such as $L$-smoothness, i.e., the $L$-Lipschitz continuity of the gradient. To relax this condition, \citet{zhang2019gradient} introduces and empirically validates the \emph{$(L_0,L_1)$-smoothness} assumption, which allows the norm of the Hessian to be bounded by an affine function of the gradient norm, thereby significantly expanding the class of admissible problems. A growing body of work now analyzes (stochastic) \textit{first}-order methods under $(L_0,L_1)$ or more ``generalized-smoothness'' assumptions, including Clip-SGD and related clipping schemes \citep{zhang2019gradient, zhang2020adaptive, koloskova2023revisiting, reisizadeh2025variance, gorbunov2024methods, vankov2024optimizing, gaash2025convergence, pethick2025generalized}, Normalized SGD and variants with normalization-based schedules \citep{zhao2021convergence, chen2023generalized, hubler2024parameter, yang2024adaptive}, SignSGD \citep{crawshaw2022robustness}, AdaGrad \citep{faw2023beyond, wang2023convergence}, Adam \citep{li2024convergence, zhang2024convergence}, and SGD \citep{li2023convex}. Beyond these, there are accelerated and proximal/mirror-descent developments under generalized or $(L_0,L_1)$-smoothness \citep{tyurin2025near, yu2025convergence, tovmasyan2025revisiting, yu2025mirror}, nonlinearly preconditioned methods \citep{oikonomidis2025nonlinearly}, results on escaping saddle points \citep{cao2025efficiently}, zero-/first-order complexity under generalized smoothness \citep{lobanov2025power}, and decentralized/federated formulations with generalized smoothness and local steps \citep{demidovich2024methods, jiang2025decentralized}. For compressed communication, \citet{khirirat2024error} proposed and analyzed a momentum-based variant of normalized EF21-SGD \citep{richtarik2021ef21} under bounded variance. Additional generalized-smoothness analyses further connect normalization, compression, and relaxed smoothness guarantees \citep{lobanov2024linear, tyurin2024toward, yang2024adaptive}.

\paragraph{Positioning.}
To the best of our knowledge, there is no unified SDE-based framework that (a) is \emph{stability-faithful} under $(L_0,L_1)$-smoothness and (b) is then used to analyze the joint effect of distributed compression and general noise regimes, including affine variance and heavy tails.
This is the gap addressed by the present work (Table~\ref{tab:comparison_App}).

\section{Theoretical Framework}\label{sec:Asde}

In this section, we introduce the theoretical framework, assumptions, and notations used to formally derive the SDE models used in this paper. 
\begin{definition}
    Let $\mathcal{G}$ denote the set of continuous functions $g:\R^d\to\R$ of at most polynomial growth, namely such that there exist positive integers $k_1, k_2 > 0$ such that $       \lvert g(x)\rvert < k_1(1+\norm{x}_2^2 )^{k_2}$, for all $x\in\R^d$.
\end{definition}

To simplify the notation, we will write
\begin{equation*}
    b(x+\eta) = b_0(x)+\eta b_1(x)+O(\eta^2),
\end{equation*}
whenever there exists $g\in \mathcal{G}$, independent of $\eta$, such that
\begin{equation*}
    \lvert b(x+\eta)-b_0(x)-\eta b_1(x)\rvert \le g(x)\eta^2.
\end{equation*}

We now introduce the definition of weak approximation, which formalizes in which sense the solution to an SDE, which is a continuous-time random process, models a discrete-time optimizer.

\begin{definition}\label{def:weakapx_appendix}
A continuous-time stochastic process $(X_t)_{ t \in [0, T]}$ is an $\alpha$-order weak approximation of a discrete stochastic process $(x_k)_{k=0}^{\lfloor  T/\eta \rfloor }$ if for every polynomial growth function $g$, there exists a positive constant $C$, independent of $\eta$, such that $ \max _{k=0, \ldots, \lfloor  T/\eta \rfloor }\left|\E g\left(x_k\right)-\E g\left(X_{k \eta}\right)\right| \leq C \eta^\alpha$. We will often refer to $1$-order and $2$-order weak approximations as \textit{first}- and \textit{second}-order SDEs.
\end{definition}

This framework focuses on approximation in a \textit{weak sense}, meaning in distribution rather than path-wise. Since $\mathcal{G}$ contains all polynomials, all the moments of both processes become closer at a rate of $\eta^\alpha$ and thus their distributions. 
Thus, while the processes exhibit similar average behavior, their sample paths may differ significantly, justifying the term weak approximation.

The key ingredient for deriving the SDE is given by the following result (see Theorem 1, \citep{li2017stochastic}), which provides sufficient conditions to get a weak approximation in terms of the single step increments of both $X_t$ and $x_k$. Before stating the theorem, we list the regularity assumption under which we are working.

\paragraph{Assumptions:}\label{ass:sde}
    Assume that the following conditions are satisfied:
    \begin{itemize}
        \item $f,f_i\in \mathcal{C}_b^8(\R^d,\R)$;
        \item $f, f_i$ and its partial derivatives up to order $7$ belong to $\mathcal{G}$;
        \item $\nabla f, \nabla f_i$ satisfy the following Lipschitz condition: there exists $L>0$ such that
        \begin{equation*}
            \norm{\nabla f(u) - \nabla f(v)}_2+\sum_{i=1}^d \norm{\nabla f_i(u) - \nabla f_i(v)}_2\le L\norm{u-v}_2;   
        \end{equation*}
        \item $\nabla f, \nabla f_i$ satisfy the following growth condition: there exists $M>0$ such that
        \begin{equation*}
            \norm{\nabla f(x)}_2 + \sum_{i=1}^n\norm{\nabla f_i(x)}_2\le M(1+\norm{x}_2).
        \end{equation*}
            
    \end{itemize}
\begin{remark}
Although these assumptions are very strong, they mainly reflect limitations of the available proof techniques for weak approximation results. In particular, they can be so restrictive that even simple objectives, such as quadratic functions, may fall outside their formal scope. For this reason, we view the weak-approximation theorems as a rigorous guarantee under additional regularity, but not as a prerequisite for the usefulness of the resulting SDE models. In practice, the same weak-approximation heuristics can be applied far beyond the class of functions covered by these assumptions, and extensive empirical evidence indicates that the resulting modified equations closely track the dynamics of the corresponding optimizers across a range of architectures, including ResNets, ViTs, MLPs, and CNNs~\cite{compagnoni2025adaptive}.
\end{remark}

\begin{lemma}\label{lem:sdemoments}
    Let $0<\eta <1$. Consider a stochastic process $X_t, t\ge0$ satisfying the SDE
    \begin{equation}\label{eq:basicsde}
        dX_t = b(X_t)dt + \sqrt{\eta}\sigma(X_t)dW_t, \qquad X_0=x
    \end{equation}
    where $b,\sigma$ together with their derivatives belong to $\mathcal{G}$. Define the one-step difference $\Delta = X_\eta-x$, and indicate the $i$-th component of $\Delta$ with $\Delta_i$. Then we have
    \begin{enumerate}
        \item $\E\Delta_i = b_i\eta + \frac{1}{2}\left[\sum_{j=1}^d b_j\partial_j b_i\right]\eta^2+O(\eta^3)\qquad \forall i=1,\dots,d$;
        \item $\E\Delta_i\Delta_j = \left[b_ib_j+\sigma\sigma^\top_{ij}\right]\eta^2+O(\eta^3)\qquad\forall i,j=1,\dots, d$;
        \item $\E\prod_{j=1}^s \Delta_{i_j}=O(\eta^3)\qquad \forall s\ge 3,  i_j=1,\dots,d$.
    \end{enumerate}
    All functions above are evaluated at $x$.
\end{lemma}

\begin{theorem}\label{thm:sdeformal}
    Let $0<\eta<1$, $\tau>0$ and set $T = \floor{\tau/\eta}$. Let Assumption~\ref{ass:sde} hold and let $X_t$ be a stochastic process as in Lemma~\ref{lem:sdemoments}. Define $\Bar{\Delta} = x_1-x$ to be the increment of the discrete-time algorithm, and indicate the $i$-th component of $\Bar{\Delta}$ with $\Bar{\Delta}_i$. If in addition there exist $K_1, K_2, K_3, K_4\in \mathcal{G}$ so that
    \begin{enumerate}
        \item $|\E\Delta_i - \E\Bar{\Delta}_i|\le K_1(x)\eta^2,\qquad \forall i=1,\dots,d $;
        \item $|\E \Delta_i\Delta_j -\E \Bar{\Delta}_i\Bar{\Delta}_j|\le K_2(x)\eta^2,\qquad \forall i,j=1,\dots,d$;
        \item $|\E\prod_{j=1}^s \Delta_{i_j}-\E\prod_{j=1}^s\Bar{\Delta}_{i_j}|\le K_3(x)\eta^2,\qquad \forall s\ge 3, \forall i_j = 1,\dots, d$;
        \item $\E\prod_{j=1}^s|\Bar{\Delta}_{i_j}|\le K_4(x)\eta^2, \qquad \forall i_j=1,\dots,d$.
    \end{enumerate}
    Then, there exists a constant $C$ so that for all $k=0,1,\dots,N$ we have
    \begin{equation}
        |\E g(X_{k\eta})-\E g(x_k)|\le C\eta.
    \end{equation}
    We say Eq.~\ref{eq:basicsde} is an order $1$ weak approximation of the update step of $x_k$.
\end{theorem}

\section{New ODEs and SDEs for GD and SGD}\label{sec:new_models}

\subsection{Comparison With Discrete-Time Analysis}
In this section, we closely compare the dynamics of the loss function in discrete-time with that in continuous time as prescribed by the ODEs of GD. Consider GD with constant stepsize $\eta > 0$:
\begin{equation}
    x_{t+1} = x_t - \eta \nabla f(x_t).
\end{equation}

Using a second-order Taylor expansion around $x_t$ gives
\begin{equation}
    f(x_{t+1}) - f(x_t)
    = -\eta \|\nabla f(x_t)\|^2
    + \textcolor{purple}{\tfrac{\eta^2}{2}\nabla f(x_t)^\top \nabla^2 f(x_t)\, \nabla f(x_t)}
    + O_{x_t}(\eta^3).
\end{equation}
and therefore the normalized one-step loss drift (the discrete-time generator applied to $f$)
\begin{equation}\label{eq:loss_discr_App}
    \frac{f(x_{t+1}) - f(x_t)}{\eta}
    = - \|\nabla f(x_t)\|^2
    + \textcolor{purple}{\tfrac{\eta}{2}\nabla f(x_t)^\top \nabla^2 f(x_t)\, \nabla f(x_t)}
    + O_{x_t}(\eta^2).
\end{equation}
Here, $O_{x_t}(\eta^r)$ denotes a term bounded by $C(x_t)\eta^r$ for small enough $\eta$, with $C(x_t)$ independent of $\eta$.

\noindent\textit{Remark (generator vs.\ finite increment).}
Eq.~\ref{eq:loss_discr_App} compares the \emph{discrete-time generator} of GD applied to $f$ (a difference quotient) with the continuous-time drift $\frac{d}{dt}f(X_t)$.
Matching the finite increment $f(X_{t+\eta})-f(X_t)$ would instead introduce additional $\ddot f$ terms and corresponds to classical modified-equation analysis; this is \emph{not} the notion of matching we use in this paper.

However, the first-order ODE of GD implies that
\begin{equation}
    d f(X_t) = - \lVert \nabla f(X_t) \rVert_2^2 dt.
\end{equation}
We immediately notice that this continuous-time loss drift is completely missing the $O(\eta)$ correction highlighted in \textcolor{purple}{purple} color in Eq.~\ref{eq:loss_discr_App}.
The natural step is to shift to the second-order ODE, which implies that
\begin{equation}
    d f(X_t) = - \lVert \nabla f(X_t) \rVert_2^2 dt  \textcolor{emphgreen}{\scalebox{1.1}{$\boldsymbol{-}$}}      \textcolor{purple}{\tfrac{\eta}{2} \nabla f(X_t)^{\top}\nabla^2 f(X_t) \nabla f(X_t) dt}.
\end{equation}

While this ODE of the loss does incorporate some second-order information highlighted in \textcolor{purple}{purple color}, we notice that its \textcolor{emphgreen}{sign} is flipped with respect to the discrete-time loss drift in Eq.~\ref{eq:loss_discr_App}. This flipped sign is exactly the factor responsible for the failures of this second-order ODE.

\textbf{Deriving a New Model: An Ansatz Approach.}
Therefore, we understand that choosing the right model is critical to capture the aspects of the dynamics under analysis. Inspired by a classic approach in mathematical physics, we propose an ansatz for an ODE of the iterates of GD and look for one that models the loss dynamics more closely. For a real number $\alpha$, we propose:
\begin{equation}\label{eq:ansatz_App}
    dX_t = - \nabla f(X_t) dt + \alpha\eta\nabla^2 f(X_t) \nabla f(X_t) dt,
\end{equation}
which implies that the loss dynamics is driven by
\begin{equation}
    df(X_t) = - \lVert\nabla f(X_t)\rVert_2^2 dt  + \alpha\eta \nabla f(X_t)^{\top}\nabla^2 f(X_t) \nabla f(X_t) dt,
\end{equation}
Matching the induced loss drift with the discrete-time generator expansion in Eq.~\ref{eq:loss_discr_App} (i.e., comparing $\frac{d}{dt}f(X_t)$ with $\frac{f(x_{t+1})-f(x_t)}{\eta}$ up to $O(\eta)$) suggests choosing $\alpha = \frac{1}{2}$, which recovers the exact quadratic stability threshold. Therefore, we obtain 
\begin{equation}\label{eq:our_ode_App}
    dX_t = - \nabla f(X_t) dt + \frac{\eta}{2}\nabla^2 f(X_t) \nabla f(X_t) dt,
\end{equation}
is our new candidate ODE for GD.
\subsection{New Models}
First, we define two new models for GD and SGD. Then, we introduce a technical lemma and proceed to prove that our new models are \textbf{first}-order models for (S)GD.

\begin{definition}\label{def:new_sde}
Based on the discussion above, we define the new ODE model for GD:
\begin{equation}
    d X_t = -\nabla f(X_t) dt  \textcolor{orange}{\scalebox{1.1}{$\boldsymbol{+}$}}\tfrac{\eta}{2} \nabla^2 f(X_t) \nabla f(X_t) dt, \label{eq:gd_ode} \end{equation}
and the new SDE model for SGD:
\begin{equation}
    d X_t= -\nabla f(X_t) dt  \textcolor{orange}{\scalebox{1.1}{$\boldsymbol{+}$}} \tfrac{\eta}{2} \nabla^2 f(X_t) \nabla f(X_t) dt
    + \sqrt{\eta}\sqrt{\Sigma(X_t)} d W_t. \label{eq:sgd_sde}
\end{equation}

\end{definition}
\begin{remark}\label{rem:NewModels}
    Notice that, contrary to the second-order ODE and SDE from the literature (i.e., the classical modified-equation models), our stability-faithful models place a $\textcolor{orange}{\scalebox{1.1}{$\boldsymbol{+}$}}$ rather than a $\textcolor{emphgreen}{\scalebox{1.1}{$\boldsymbol{-}$}}$ in front of the $\tfrac{\eta}{2} \nabla^2 f(X_t) \nabla f(X_t)$ correction.
    This sign choice is deliberate: it matches the discrete-time generator expansion of the loss and, in particular, recovers the exact quadratic stepsize stability threshold.
    Matching the finite increment $f(X_{t+\eta})-f(X_t)$ instead leads back to the classical modified equation with the opposite sign; this is not our objective here.
\end{remark}

\begin{theorem}
Under the dynamics $\dot{x} = F(x)$ such that $F\in C^3(\mathbb{R})$, fix $t$. One has the expansion
    \[
x(t+\eta) = x + \eta F + \frac{\eta^2}{2} F'F 
+ \frac{\eta^3}{6}\big(F'' F^2 + (F')^2 F\big) + O(\eta^4),
\]
where all derivatives of $F$ are with respect to $x$, evaluated at $x(t)$.
\end{theorem}

\begin{proof}
By Taylor's theorem about $t$,
\[
x(t+\eta)
= x(t) + \eta x'(t) + \frac{\eta^2}{2}x''(t) + \frac{\eta^3}{6}x'''(t) + O(\eta^4).
\]
Note that:
\[
x'(t)=F(x(t)),\quad
x''(t)=F'(x(t))F(x(t)),\quad
x'''(t)= F''(x(t))F(x(t))^2 + \big(F'(x(t))\big)^2F(x(t)).
\]
\end{proof}

\begin{theorem}[ODE approximations of Gradient Descent]\label{thm:gd_ode}
Consider gradient descent (GD) with constant stepsize $\eta > 0$.  
The following ODEs are all weak-approximations of GD:
\begin{enumerate}
    \item The \textit{first-order} approximation from the literature:
    \begin{equation}\label{eq:SGD_FO_App}
        d X_t = - \nabla f(X_t)\, dt.
    \end{equation}
    \item The \textit{second-order} approximation from the literature:
    \begin{equation}\label{eq:SGD_SO_App}
        d X_t = - \nabla f(X_t)\, dt
        - \frac{\eta}{2}\,\nabla^2 f(X_t)\, \nabla f(X_t)\, dt.
    \end{equation}
    \item Our newly proposed \textit{first-order} approximation:
    \begin{equation}\label{eq:SGD_New_App}
        d X_t = - \nabla f(X_t)\, dt
        + \frac{\eta}{2}\,\nabla^2 f(X_t)\, \nabla f(X_t)\, dt.
    \end{equation}
\end{enumerate}
\end{theorem}

\begin{proof}
For simplicity, we consider gradient descent in one dimension, as generalizing to higher dimensions follows the same steps:
\[
x_{k+1} = x_k - \eta f'(x_k).
\]

We now seek a flow of the form
\[
F(x) = -f'(x) +\alpha f'(x) f''(x),
\]
and just substitute in the expressions in the previous result. Then, we will study the error as a function of $\alpha$. Note that we want to compute
\[
x(t+\eta) = x + \eta F + \frac{\eta^2}{2} F'F 
+ \frac{\eta^3}{6}\big(F'' F^2 + (F')^2 F\big) + O(\eta^4).
\]

We have:
\[
F=-f' + \alpha f' f'',\qquad
F'=-f''+\alpha\big((f'')^2+f' f'''\big),\qquad
F''=-f'''+\alpha\big(3 f'' f'''+f' f''''\big).
\]

So
\[
\begin{aligned}
x(t+\eta)
&= x
+ \eta\Big(-f' + \alpha f' f''\Big)\\[2mm]
&\quad+ \frac{\eta^2}{2}\Big[
f'f'' - \alpha\big(f'^2 f''' + 2 f'(f'')^2\big)
+ \alpha^2\big(f'^2 f'' f''' + f'(f'')^3\big)
\Big]\\[2mm]
&\quad+ \frac{\eta^3}{6}\Big[
-\big(f'^2 f''' + f'(f'')^2\big)\\
&\qquad\qquad
+ \alpha\big(f'^3 f'''' + 7 f'^2 f'' f''' + 3 f'(f'')^3\big)\\
&\qquad\qquad
- \alpha^2\big(2 f'^3 f'' f'''' + f'^3 (f''')^2 + 11 f'^2 (f'')^2 f''' + 3 f'(f'')^4\big)\\
&\qquad\qquad
+ \alpha^3\big(f'^3 (f'')^2 f'''' + f'^3 f'' (f''')^2 + 5 f'^2 (f'')^3 f''' + f'(f'')^5\big)
\Big]\\[2mm]
&\quad+ O(\eta^4).
\end{aligned}
\]

Assume now that $\alpha =\beta\eta$, we get

\[
\begin{aligned}
x(t+\eta)
&= x - \eta\, f' \\[2mm]
&\quad + \eta^2\Big(\beta+\tfrac12\Big) f' f'' \\[2mm]
&\quad - \frac{\eta^3}{6}\Big[\,(3\beta+1)\, f'^2 f''' + (6\beta+1)\, f'(f'')^2 \Big] \\[2mm]
&\quad + O(\eta^4),
\end{aligned}
\]

For $\alpha=0$ we get gradient flow and hence

\[
x(t+\eta)
= x - \eta f' 
+ \tfrac{1}{2}\,\eta^2 f' f''
- \tfrac{1}{6}\,\eta^3 \Big( f'^2 f''' + f'(f'')^2 \Big)
+ O(\eta^4),
\]
which is the first-order ODE from the literature.

For $\alpha = -\eta/2$

\[
x(t+\eta)
= x - \eta f' 
+ \eta^3\left(\tfrac{1}{12}\, f'^2 f''' + \tfrac{1}{3}\, f'(f'')^2 \right)
+ O(\eta^4),
\]
which is the second-order ODE from the literature.

Finally, for $\alpha = \eta/2$,

\[
x(t+\eta)
= x - \eta f' 
+ \eta^2 f' f'' 
- \tfrac{5}{12}\,\eta^3 f'^2 f''' 
- \tfrac{2}{3}\,\eta^3 f'(f'')^2
+ O(\eta^4),
\]
which is our newly proposed first-order ODE.

\end{proof}

The following theorem formalizes that our new SDE model from Eq. \ref{eq:sgd_sde} is formally a first-order weak approximation for SGD.

\begin{theorem}[SDE approximations of Stochastic Gradient Descent]\label{thm:gd_sde}
Consider stochastic gradient descent with constant stepsize $\eta > 0$.  
Its continuous-time approximations are given by the following SDEs:
\begin{enumerate}
    \item The \textit{first-order} approximation from the literature:
    \begin{equation}\label{eq:SGD_FO_App_S}
        d X_t = - \nabla f(X_t)\, dt + \sqrt{\eta \Sigma(X_t)} dW_t.
    \end{equation}
    \item The \textit{second-order} approximation from the literature:
    \begin{equation}\label{eq:SGD_SO_App_S}
        d X_t = - \nabla f(X_t)\, dt
        - \frac{\eta}{2}\,\nabla^2 f(X_t)\, \nabla f(X_t)\, dt  + \sqrt{\eta \Sigma(X_t)} dW_t.
    \end{equation}
    \item Our newly proposed \textit{first-order} approximation:
    \begin{equation}\label{eq:SGD_New_App_S}
        d X_t = - \nabla f(X_t)\, dt
        + \frac{\eta}{2}\,\nabla^2 f(X_t)\, \nabla f(X_t)\, dt  + \sqrt{\eta \Sigma(X_t)} dW_t.
    \end{equation}
\end{enumerate}
\end{theorem}
Here is the formal proof:
\begin{proof}
We work under the framework and assumptions of Section~\ref{sec:Asde}. Let $(x_k)_{k\ge 0}$ denote the SGD iterates with constant stepsize $\eta>0$:
\[
    x_{k+1} = x_k - \eta\, g(x_k,\xi_{k+1}),
\]
where $\{\xi_k\}_{k\ge 1}$ are i.i.d. random variables and $g(\cdot,\xi)$ is an unbiased stochastic gradient estimator:
\[
    \E[g(x,\xi)] = \nabla f(x), \qquad
    \mathrm{Cov}(g(x,\xi)) = \Sigma(x).
\]
We write the gradient noise as
\[
    \zeta(x,\xi) \coloneqq g(x,\xi) - \nabla f(x),
\]
so that $\E[\zeta(x,\xi)]=0$ and $\E[\zeta(x,\xi)\zeta(x,\xi)^\top]=\Sigma(x)$, with bounded moments up to order $3$ (consistent with \citep{li2017stochastic} and Assumption~\ref{ass:sde}).

Throughout, we fix $0<\eta<1$ and condition on $x_k=x$.

\paragraph{Step 1: One-step moments of SGD.}
Define the one-step increment of SGD by
\[
    \Bar{\Delta} \coloneqq x_{k+1} - x_k
    = -\eta\,\nabla f(x_k) - \eta\,\zeta(x_k,\xi_{k+1}).
\]
Conditioned on $x_k=x$, we thus have
\[
    \Bar{\Delta}
    = -\eta\,\nabla f(x) - \eta\,\zeta,
\]
where we write $\zeta := \zeta(x,\xi_{k+1})$.

Writing components as $\Bar{\Delta}_i$, $f_i = \partial_i f$, and $\Sigma_{ij}$ for the $(i,j)$-th entry of $\Sigma(x)$, we get:
\begin{align*}
    \E[\Bar{\Delta}_i \mid x_k=x]
    &= -\eta\,\E\big[ f_i(x) + \zeta_i \mid x_k=x\big]
     = -\eta\,f_i(x),\\[0.5ex]
    \E[\Bar{\Delta}_i \Bar{\Delta}_j \mid x_k=x]
    &= \eta^2\,\E\big[(f_i(x)+\zeta_i)(f_j(x)+\zeta_j)\mid x_k=x\big]\\
    &= \eta^2\big( f_i(x)f_j(x) + \Sigma_{ij}(x)\big).
\end{align*}
Moreover, for any $s\ge 3$ and indices $i_1,\dots,i_s$, we have
\[
    \E\Big[\prod_{\ell=1}^s \Bar{\Delta}_{i_\ell} \,\Big|\, x_k=x\Big]
    = O(\eta^s) = O(\eta^3),
\]
thanks to the bounded higher moments of $\zeta$ and Assumption~\ref{ass:sde}.  In particular, there exists $K_4\in \mathcal{G}$ such that
\[
    \E\Big[\prod_{\ell=1}^s\big|\Bar{\Delta}_{i_\ell}\big| \,\Big|\, x_k=x\Big]
    \le K_4(x)\,\eta^3 \le K_4(x)\,\eta^2,
\]
for all $0<\eta<1$ and all $s\ge 3$.  Hence condition (4) in Theorem~\ref{thm:sdeformal} holds.

\paragraph{Step 2: One-step moments of the SDE family.}
We now put the three candidate SDE models in Theorem~\ref{thm:gd_sde} into the template of Lemma~\ref{lem:sdemoments} and Theorem~\ref{thm:sdeformal}.

Fix $\alpha\in\R$ and consider the SDE
\begin{equation}\label{eq:sde_alpha_proof}
    dX_t = b_\alpha(X_t)\,dt + \sqrt{\eta}\,\sigma(X_t)\,dW_t,
\end{equation}
with
\[
    b_\alpha(x) \coloneqq -\nabla f(x) + \alpha\,\eta\,\nabla^2 f(x)\,\nabla f(x),
    \qquad
    \sigma(x)\sigma(x)^\top = \Sigma(x).
\]
We will later set
\[
    \alpha = 0, -\tfrac12, \tfrac12
\]
to recover the three SDEs in the statement.

Let $X_0 = x$ and define the one-step increment
\[
    \Delta \coloneqq X_\eta - x,
\]
with components $\Delta_i$.  By Lemma~\ref{lem:sdemoments} applied to \ref{eq:sde_alpha_proof}, we have
\begin{align}
    \E[\Delta_i]
    &= b_{\alpha,i}(x)\,\eta
      + \frac12\sum_{j=1}^d b_{\alpha,j}(x)\,\partial_j b_{\alpha,i}(x)\,\eta^2
      + O(\eta^3),
      \label{eq:delta_first_moment_general}\\
    \E[\Delta_i\Delta_j]
    &= \Big[b_{\alpha,i}(x)b_{\alpha,j}(x)
           + \big(\sigma\sigma^\top\big)_{ij}(x)\Big]\eta^2
      + O(\eta^3)\notag\\
    &= \Big[b_{\alpha,i}(x)b_{\alpha,j}(x)
           + \Sigma_{ij}(x)\Big]\eta^2 + O(\eta^3),
      \label{eq:delta_second_moment_general}\\
    \E\Big[\prod_{\ell=1}^s\Delta_{i_\ell}\Big]
    &= O(\eta^3), \qquad \forall s\ge 3.
      \label{eq:delta_higher_moments_general}
\end{align}
All functions above belong to $\mathcal{G}$ by Assumption~\ref{ass:sde}.

Now plug $b_\alpha(x) = -\nabla f(x) + \alpha\eta\,\nabla^2 f(x)\,\nabla f(x)$ into these expressions.  Write
\[
    h(x) \coloneqq \nabla^2 f(x)\,\nabla f(x),
    \qquad h_i(x) = \sum_{j=1}^d \partial_{ij} f(x)\,\partial_j f(x).
\]
Then
\[
    b_{\alpha,i}(x)
    = -f_i(x) + \alpha\,\eta\,h_i(x).
\]

For the \emph{first} moment, using \ref{eq:delta_first_moment_general}, we expand to order $\eta^2$:
\begin{align*}
    \E[\Delta_i]
    &= \Big(-f_i(x) + \alpha\eta\,h_i(x)\Big)\eta
       + \frac12\sum_{j=1}^d
          \Big(-f_j(x) + O(\eta)\Big)\,
          \partial_j\Big(-f_i(x) + O(\eta)\Big)\,\eta^2
       + O(\eta^3)\\
    &= -\eta f_i(x)
       + \alpha\,\eta^2 h_i(x)
       + \frac12\sum_{j=1}^d f_j(x)\,\partial_{j}f_i(x)\,\eta^2
       + O(\eta^3)\\
    &= -\eta f_i(x)
       + \Big(\alpha+\tfrac12\Big) h_i(x)\,\eta^2
       + O(\eta^3).
\end{align*}
In particular,
\begin{equation}\label{eq:E_delta_minus_E_bardelta}
    \E[\Delta_i] - \E[\Bar{\Delta}_i]
    = \Big(\alpha+\tfrac12\Big) h_i(x)\,\eta^2 + O(\eta^3)
    = O(\eta^2).
\end{equation}

For the \emph{second} moment, from \ref{eq:delta_second_moment_general} we note that
\[
    b_{\alpha,i}(x)b_{\alpha,j}(x)
    = \big(-f_i(x) + O(\eta)\big)\big(-f_j(x) + O(\eta)\big)
    = f_i(x)f_j(x) + O(\eta),
\]
hence
\begin{equation}\label{eq:E_delta_delta_minus_E_bardelta_bardelta}
    \E[\Delta_i\Delta_j]
    = \big( f_i(x)f_j(x) + \Sigma_{ij}(x)\big)\,\eta^2 + O(\eta^3),
\end{equation}
so that
\[
    \E[\Delta_i\Delta_j] - \E[\Bar{\Delta}_i\Bar{\Delta}_j]
    = O(\eta^3) \le K_2(x)\,\eta^2,
\]
for some $K_2\in \mathcal{G}$ and all $0<\eta<1$.  Similarly, combining \ref{eq:delta_higher_moments_general} with the bound on higher moments of $\Bar{\Delta}$ from Step~1, we obtain for $s\ge 3$
\[
    \Big|\E\prod_{\ell=1}^s \Delta_{i_\ell}
        - \E\prod_{\ell=1}^s \Bar{\Delta}_{i_\ell}\Big|
    = O(\eta^3) \le K_3(x)\,\eta^2
\]
for some $K_3\in \mathcal{G}$, and we have already seen that
\[
    \E\prod_{\ell=1}^s|\Bar{\Delta}_{i_\ell}|
    \le K_4(x)\,\eta^2
\]
for appropriate $K_4\in \mathcal{G}$. Thus, all four conditions of Theorem~\ref{thm:sdeformal} are satisfied for \ref{eq:sde_alpha_proof}, for any fixed $\alpha\in\R$.

\paragraph{Step 3: Weak order and identification of the three SDEs.}
By Theorem~\ref{thm:sdeformal}, for any fixed $\alpha\in\R$ the SDE
\[
    dX_t = b_\alpha(X_t)\,dt + \sqrt{\eta}\,\sigma(X_t)\,dW_t
\]
is an \emph{order $1$ weak approximation} of the SGD recursion, in the sense of Definition~\ref{def:weakapx_appendix}.

It remains to identify the three specific choices of $\alpha$ that correspond to the SDEs in the statement.

\begin{itemize}
    \item \textbf{Case $\alpha = 0$.}  
    Then $b_0(x) = -\nabla f(x)$, so the SDE \ref{eq:sde_alpha_proof} becomes
    \[
        d X_t = - \nabla f(X_t)\, dt + \sqrt{\eta\,\Sigma(X_t)}\, dW_t,
    \]
    which is exactly the \textit{first}-order SDE approximation from the literature in Eq.~\ref{eq:SGD_FO_App_S}.  By Theorem~\ref{thm:sdeformal}, this is an order $1$ weak approximation of SGD.

    \item \textbf{Case $\alpha = -\tfrac12$.}  
    Then
    \[
        b_{-1/2}(x) = -\nabla f(x) - \frac{\eta}{2}\,\nabla^2 f(x)\,\nabla f(x),
    \]
    and \ref{eq:sde_alpha_proof} becomes
    \[
        d X_t = - \nabla f(X_t)\, dt
        - \frac{\eta}{2}\,\nabla^2 f(X_t)\, \nabla f(X_t)\, dt
        + \sqrt{\eta\,\Sigma(X_t)}\, dW_t,
    \]
    which is exactly Eq.~\ref{eq:SGD_SO_App_S}, the classical \textit{second}-order SDE from the literature.  
    In this case, \ref{eq:E_delta_minus_E_bardelta} shows that
    \[
        \E[\Delta_i] - \E[\Bar{\Delta}_i] = O(\eta^3),
    \]
    so the drift matches to one order higher; combined with the analysis in \citep{li2017stochastic}, this yields a second-order weak approximation.  In particular, it is also an order $1$ weak approximation.

    \item \textbf{Case $\alpha = \tfrac12$.}  
    Then
    \[
        b_{1/2}(x) = -\nabla f(x) + \frac{\eta}{2}\,\nabla^2 f(x)\,\nabla f(x),
    \]
    and \ref{eq:sde_alpha_proof} becomes
    \[
        d X_t = - \nabla f(X_t)\, dt
        + \frac{\eta}{2}\,\nabla^2 f(X_t)\, \nabla f(X_t)\, dt
        + \sqrt{\eta\,\Sigma(X_t)}\, dW_t,
    \]
    which is exactly our new SDE model in Eq.~\ref{eq:SGD_New_App_S}.  
    As shown above, all four conditions of Theorem~\ref{thm:sdeformal} hold, so this SDE is also an \textit{order $1$ weak approximation} of SGD.
\end{itemize}

This proves that all three SDEs listed in Theorem~\ref{thm:gd_sde} are weak approximations of SGD in the sense of Definition~\ref{def:weakapx_appendix}: the first and third are first-order weak approximations, while the second one is the classical second-order stochastic modified equation from the literature.

\end{proof}

\subsection{Comparing ODEs -  An Insight Perspective} \label{sec:Compare}

In this section, we showcase how models from the literature fail to properly model the dynamics of GD, especially regarding the constraints on the learning rate to ensure convergence. In contrast, we show that our model is in accordance with GD.

\subsubsection{Quadratic Function}
For didactic reasons, we now compare the proofs for a convergence bound on the loss value $f(x)$ when the loss is a $1$-dimensional convex quadratic function $\tfrac{\lambda x^2}{2}$. \textbf{To avoid overloading the proof with technicalities intrinsic in Itô calculus}, we restrict the analysis to the noiseless and single-node case. The \textit{first}-order ODE is
\begin{equation}
    d X_t = - \nabla f(X_t) dt = -\lambda X_t dt,
\end{equation}
which implies that
\begin{equation}\textstyle
    d f(X_t) = - 2 \lambda f(X_t) dt \implies f(X_t) = f(X_0) e^{-2\lambda t} \overset{t \rightarrow \infty}{\rightarrow} 0,
\end{equation}
somewhat implying that GD converges independently of the constant $L$ and of the learning rate $\eta$. Much differently, the \textit{second}-order ODE \textit{from the literature} is
\begin{equation}\textstyle
    d X_t = - \nabla f(X_t) dt - \frac{\eta}{2}\nabla^2 f(X_t) \nabla f(X_t) dt,
\end{equation}
which implies that
{\small
\begin{align}
    d f(X_t) &  = - \lVert  \nabla f(X_t) \rVert_2^2 dt \textcolor{emphgreen}{\mathbf{-}} \tfrac{\eta}{2} \nabla f(X_t)^{\top}\nabla^2 f(X_t) \nabla f(X_t) dt  = - 2 \lambda f(X_t) dt - \frac{\eta}{2}\lambda X_t^{\top} \lambda \lambda X_t \\
    & = -2 \lambda \left(1 + \frac{\lambda \eta}{2}\right) f(X_t) dt \implies f(X_t) = f(X_0) e^{-2 \lambda \left(1 + \frac{\lambda \eta}{2}\right) t} \overset{t \rightarrow \infty}{\rightarrow} 0,
\end{align}}
which is also inconsistent with the discrete-time analysis since we get convergence for any $\eta>0$.

Now, we try to leverage our new ODE derived in Theorem \ref{thm:gd_ode} and get that:
{\small
\begin{align}
    d f(X_t) &  = - \lVert  \nabla f(X_t) \rVert_2^2 dt \textcolor{orange}{\mathbf{+}}\tfrac{\eta}{2} \nabla f(X_t)^{\top}\nabla^2 f(X_t) \nabla f(X_t) dt  = - 2 \lambda f(X_t) dt + \frac{\eta}{2}\lambda X_t^{\top} \lambda \lambda X_t \\
    & = -2 \lambda \left(1 - \frac{\lambda \eta}{2}\right) f(X_t) dt \implies f(X_t) = f(X_0) e^{-2 \lambda \left(1 - \frac{\lambda \eta}{2}\right) t} \overset{t \rightarrow \infty}{\rightarrow} 0,
\end{align}}
which only converges if $\eta < \frac{2}{\lambda}$. This is consistent with the analysis in discrete time.

\textbf{Conclusion:} First of all, it is immediately apparent that while \textit{first}-order approximations may lead to relevant insights, they prevent us from having a full picture. Second, we demonstrated that the classic \textit{second}-order SDE also led us to results that are inconsistent with the discrete-time analysis. Finally, our model provides a qualitatively faithful description of the true GD dynamics.

\subsubsection{Quartic Function} \label{sec:quartic}
Here, we compare the three ODEs listed above as they describe the optimization of a quartic function $f(x) = \frac{x^4}{4}$: We find that the classic ones both fail. First of all, a single step of gradient descent with stepsize $\eta$ reads
\[
    x_{k+1} = x_k - \eta \nabla f(x_k) 
    = x_k - \eta x_k^3 ,
\]
meaning that if $\eta > \tfrac{2}{x_k^2}$ the dynamics \emph{explodes}. In particular,
\begin{equation}
    \frac{f(x_{k+1}) - f(x_k)}{\eta} = - x_k^6 + \frac{3}{2}\eta x_k^8 + O(\eta^2).
\end{equation}
Using the first-order ODE, we obtain
\begin{equation}
    dX_t = - X_t^3 dt \implies f(X_t) = \frac{1}{4(2t + X_0^{-2})^2}
\end{equation}
This model predicts universal convergence with a polynomial rate, but it does \emph{not} capture the exploding behaviour observed in GD. Using the second-order ODE, we obtain
\begin{equation}
    dX_t = -X_t^3 dt -\tfrac{3\eta}{2} X_t^5 dt \implies df(X_t) = -X_t^6 dt - \tfrac{3\eta}{2} X_t^8 dt,
\end{equation}
from which we understand that since the additional term is \emph{negative}, this ODE suggests \emph{faster convergence} for larger $\eta$. Using our new ODE, we obtain
\begin{equation}
    dX_t = -X_t^3 dt +\tfrac{3\eta}{2} X_t^5 dt \implies df(X_t) = -X_t^6 dt + \tfrac{3\eta}{2} X_t^8 dt,
\end{equation}
which matches the discrete-time expansion of the loss difference quotient up to $O(\eta^2)$. Importantly, it captures the phenomenon that the learning rate $\eta$ needs to scale inversely to the norm of the iterates for GD to converge.

\paragraph{Conclusion.}  
On the quartic loss, the first-order ODE predicts convergence for all $\eta$, missing the instability. The second-order ODE from the literature predicts \emph{accelerated convergence} for larger $\eta$, in direct contradiction with GD. In contrast, our new ODE reproduces the key phenomenon: the learning rate $\eta$ needs to scale inversely to the norm of the iterates for GD to converge. Hence, our model provides a qualitatively faithful description of the true GD dynamics.

%%%%%%%%%%%%%%%%%%%%%%%%%%%%%%%%%%%%%%%%%%%%%%%%%%%%%%%%%%%%%%%%%%%%%%%%%%%%
\subsection{Diffusion Approximation for the Loss in SGD}
%%%%%%%%%%%%%%%%%%%%%%%%%%%%%%%%%%%%%%%%%%%%%%%%%%%%%%%%%%%%%%%%%%%%%%%%%%%%
In this section, we propose an alternative approach to the derivation of a continuous-time model for SGD. Rather than modeling the iterates and use the Itô Lemma to study the SDE of the loss function, we try a new approach: We directly investigate the possibility of directly modeling the dynamics of the loss. Consider stochastic gradient descent (SGD) with constant stepsize $\eta > 0$:
\begin{equation}
    x_{t+1} = x_t - \eta g_t,
    \qquad g_t = \nabla f(x_t) + \zeta_t,
\end{equation}
where $f:\mathbb{R}^d \to \mathbb{R}$ is smooth, $\zeta_t$ is the gradient noise satisfying
\[
\mathbb{E}[\zeta_t \mid x_t] = 0,
\qquad
\mathrm{Cov}(\zeta_t \mid x_t) = \Sigma(x_t).
\]
We study the dynamics of the \emph{loss process} $Y_t := f(x_t)$.

\paragraph{Step 1. Taylor expansion of the loss.}
Using a second-order Taylor expansion around $x_t$, for $h = -\eta g_t$ we have
\begin{align}
    f(x_{t+1})
    &= f(x_t + h) \nonumber \\
    &= f(x_t) + \nabla f(x_t)^\top h
       + \tfrac{1}{2} h^\top \nabla^2 f(x_t) h
       + O(\|h\|^3).
\end{align}
Substituting $h = -\eta g_t$ gives
\begin{equation}
    f(x_{t+1}) - f(x_t)
    = -\eta \nabla f(x_t)^\top g_t
    + \tfrac{\eta^2}{2} g_t^\top \nabla^2 f(x_t)\, g_t
    + O(\eta^3).
    \label{eq:loss_increment}
\end{equation}

\paragraph{Step 2. Expansion of stochastic terms.}
Expanding with $g_t = \nabla f(x_t) + \zeta_t$ yields
\begin{align}
    f(x_{t+1}) - f(x_t)
    &= -\eta \|\nabla f(x_t)\|^2
       - \eta \nabla f(x_t)^\top \zeta_t \nonumber \\
    &\quad + \tfrac{\eta^2}{2}\nabla f(x_t)^\top \nabla^2 f(x_t)\nabla f(x_t) \\
    & \quad + \tfrac{\eta^2}{2}\zeta_t^\top \nabla^2 f(x_t)\zeta_t
       + \eta^2 \nabla f(x_t)^\top \nabla^2 f(x_t)\zeta_t
       + O(\eta^3).
    \label{eq:loss_increment_expanded}
\end{align}

\paragraph{Step 3. Drift and volatility.}
Taking the conditional expectation given $x_t$,
\begin{align}
    \mathbb{E}[f(x_{t+1}) - f(x_t) \mid x_t]
    &= -\eta \|\nabla f(x_t)\|^2 \nonumber \\
    &\quad + \tfrac{\eta^2}{2}\nabla f(x_t)^\top \nabla^2 f(x_t)\nabla f(x_t)
       + \tfrac{\eta^2}{2}\operatorname{tr}\big(\nabla^2 f(x_t)\Sigma(x_t)\big)
       + O(\eta^3).
\end{align}
The stochastic fluctuations arise from the linear terms in $\zeta_t$,
\[
-\eta \nabla f(x_t)^\top \zeta_t
+ \eta^2 \nabla f(x_t)^\top \nabla^2 f(x_t)\zeta_t,
\]
whose leading-order contribution is
\[
-\eta \nabla f(x_t)^\top \zeta_t.
\]
This term has conditional variance
\[
\mathrm{Var}\left(-\eta \nabla f(x_t)^\top \zeta_t \,\big|\, x_t\right)
= \eta^2 \nabla f(x_t)^\top \Sigma(x_t)\, \nabla f(x_t).
\]

\paragraph{Step 4. Continuous-time limit.}
Rescaling time by $s = t\eta$ and letting $\eta \to 0$, the increments
\ref{eq:loss_increment_expanded} converge in distribution to the diffusion
\begin{equation}
    dY_s
    = \Big(-\|\nabla f(X_s)\|^2
       + \tfrac{\eta}{2}\nabla f(X_s)^\top \nabla^2 f(X_s)\nabla f(X_s)
       + \tfrac{\eta}{2}\operatorname{tr}\big(\nabla^2 f(X_s)\Sigma(X_s)\big)\Big)\,ds
       + G(X_s)\, dW_s,
\end{equation}
where $W_s$ is a standard Brownian motion and the scalar volatility
$G(x)$ is defined by
\begin{equation}
    G(x)^2 = \nabla f(x)^\top \Sigma(x)\, \nabla f(x).
\end{equation}
Interestingly, this SDE is the same one that one gets by applying Itô's Lemma on $f(X_t)$ under the dynamics of our newly proposed SDE in Eq. \ref{eq:sgd_sde}, which consolidates the intuition that our model properly captures the dynamics of SGD faithfully.

\section{Theoretical Results}

\paragraph{Assumptions and notation.} In line with \citep{compagnoni2025unbiased}, we assume that the stochastic gradient of the $i$-th client is given by $\nabla f_{\gamma_i}(x) = \nabla f(x) + Z_i(x)$, where $Z_i(x)$ denotes the gradient noise and $Z_i(x)$ is independent of $Z_j(x)$ for $i \neq j$. If $Z_i(x) \in L^1(\R^d)$, we assume $\E[Z_i(x)] = 0$, and if $Z_i(x) \in L^2(\R^d)$, we assume $Cov(Z_i(x)) = \Sigma_i(x)$ (we omit the size of the batch $\gamma$ unless relevant) s.t. $\sqrt{\Sigma_i(x)}$ is bounded, Lipschitz, satisfies affine growth, and together with its derivatives, it grows at most polynomially fast (Definition 2.5 in \cite{Malladi2022AdamSDE}). Importantly, we assume that all $Z_i(x)$ have a smooth and bounded probability density function whose derivatives are all integrable: A common assumption in the literature is for $Z_i(x)$ to be Gaussian \cite{ahn2012bayesian,chen2014stochastic,mandt2016variational,stephan2017stochastic,zhu2019anisotropic,wu2020noisy,Xie2021}: See \cite{jastrzkebski2017three} for the justification why this could be the case. Differently, our assumption allows for heavy-tailed distributions such as the Student's t. It is important to point out that \cite{li2017stochastic,mertikopoulos2018convergence,raginsky2012continuous,zhu2019anisotropic,mandt2016variational,ahn2012bayesian,jastrzkebski2017three} use a Gaussian noise with a constant covariance matrix to model batch noise.

\subsection{Distributed SGD}

\subsubsection{First Order SDE}

The following is the \textit{first}-order SDE model of DSGD (see Theorem 3.2 in \cite{compagnoni2025unbiased}). Let us consider the stochastic process $ X_t \in \mathbb{R}^{d} $ defined as the solution of

\begin{equation}\label{eq:DUSGD_SDE_App_First}
    d X_t = - \nabla f(X_t) dt + \sqrt{\frac{\eta}{N}} \sqrt{\hat{\Sigma}(X_t)} dW_t,
\end{equation}
where $\hat{\Sigma}(x)\eqdef  \frac{1}{N} \sum_{i=1}^{N} \Sigma_i(x)$ is the average of the covariance matrices of the $N$ clients.

\begin{theorem}
\label{thm:DSGD_App}
    Let $f$ be $(L_0,L_1)$-smooth, $ \lVert\Sigma_i(x)\rVert_{\infty} < \sigma_{0,i}^2 + \sigma_{1,i}^2 \lVert \nabla f(x) \rVert_2^2$, the learning rate scheduler $\eta_t$ s.t. $\phi^{(i)}_t = \int_0^t (\eta_s)^i ds$, $\phi^{(1)}_t \overset{t \rightarrow \infty}{\rightarrow} \infty$, $\frac{\phi^{(2)}_t}{\phi^{(1)}_t} \overset{t \rightarrow \infty}{\rightarrow} 0$, $\overline{\sigma_0^2}\eqdef \frac{1}{N} \sum_{i=1}^{N} \sigma_{0,i}^2$, and $\overline{\sigma_1^2}\eqdef \frac{1}{N} \sum_{i=1}^{N} \sigma_{1,i}^2$. Then, for $0 < \epsilon <1$,
\begin{equation}\label{eq:StepDSGD_App_First}
    \eta \eta_t < \frac{2 N \epsilon}{d\left(  \overline{\sigma_1^2} L_0 + \overline{\sigma_0^2}L_1 + L_1 \overline{\sigma_1^2}  \E\left[\lVert \nabla f(X_t) \rVert_2 \right]\right) },
\end{equation}
and for a random time $\Hat{t}$ with distribution $\frac{\eta_t}{\phi^{(1)}_t}$, we have that
\begin{equation}
   \E \left[\lVert \nabla f(X_{\Hat{t}}) \rVert_2^2 \right] \leq \frac{1}{ \phi^{(1)}_t(1-\epsilon)} \left( f(X_0) - f(X_*)  + \phi^{(2)}_t \frac{\eta d (L_0 + L_1) (\overline{\sigma_0^2} + \overline{\sigma_1^2} )}{2 N}  \right) \overset{t \rightarrow \infty}{\rightarrow} 0.
\end{equation}
\end{theorem}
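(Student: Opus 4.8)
\section*{Proof proposal}

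My plan is to run an It\^o-based descent argument on $f(X_t)$, integrate it against the schedule $\eta_t$, and read off convergence of $\E\lVert\nabla f\rVert_2^2$ at the randomized time $\Hat t$ from the Robbins--Monro conditions. First I incorporate the scheduler into the first-order SDE of \eqref{eq:DUSGD_SDE_App_First}: since the effective step is $\eta\eta_t$, the drift becomes $-\eta_t\nabla f(X_t)$ and the diffusion $\eta_t\sqrt{\eta/N}\sqrt{\hat\Sigma(X_t)}$, so the infinitesimal covariance carries a factor $\eta_t^2$. Applying It\^o's formula to $f$ and taking expectations---the stochastic integral being a true martingale under the polynomial-growth integrability implicit in \Cref{def:weak_approximation}---yields the instantaneous balance
\[
\frac{d}{dt}\E[f(X_t)] = -\eta_t\,\E\!\left[\lVert\nabla f(X_t)\rVert_2^2\right] + \frac{\eta\eta_t^2}{2N}\,\E\!\left[\Tr\!\big(\nabla^2 f(X_t)\,\hat\Sigma(X_t)\big)\right],
\]
in which the first term is the descent and the second is the It\^o correction I must control.

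The core estimate is on the It\^o-correction trace. Writing $g\eqdef\lVert\nabla f(X_t)\rVert_2$ and using $\hat\Sigma\succeq 0$, I bound $\Tr(\nabla^2 f\,\hat\Sigma)\le\lVert\nabla^2 f\rVert\,\Tr(\hat\Sigma)$; then $(L_0,L_1)$-smoothness gives $\lVert\nabla^2 f\rVert\le L_0+L_1 g$, while the $(\sigma_{0,i}^2,\sigma_{1,i}^2)$-variance bound together with $\Tr(\Sigma_i)\le d\lVert\Sigma_i\rVert_{\infty}$ gives $\Tr(\hat\Sigma)\le d(\overline{\sigma_0^2}+\overline{\sigma_1^2}g^2)$. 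Their product is the cubic polynomial $(L_0+L_1 g)(\overline{\sigma_0^2}+\overline{\sigma_1^2}g^2)$, and the key algebraic move is to split it into a part proportional to $g^2$ and a clean constant. Using only $g\le 1+g^2$ on the lone linear term, I obtain
\[
(L_0+L_1 g)(\overline{\sigma_0^2}+\overline{\sigma_1^2}g^2)\ \le\ \big(\overline{\sigma_1^2}L_0+\overline{\sigma_0^2}L_1+L_1\overline{\sigma_1^2}\,g\big)\,g^2\ +\ (L_0+L_1)(\overline{\sigma_0^2}+\overline{\sigma_1^2}),
\]
whose surviving constant is exactly the $(L_0+L_1)(\overline{\sigma_0^2}+\overline{\sigma_1^2})$ appearing in the claim.

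Now I absorb the $g^2$-proportional part into the descent term. The step-size condition \eqref{eq:StepDSGD_App_First} is precisely the statement that $\frac{\eta\eta_t d}{2N}\big(\overline{\sigma_1^2}L_0+\overline{\sigma_0^2}L_1+L_1\overline{\sigma_1^2}g\big)\le\epsilon$; applied at the level of the integrand it turns $\frac{\eta\eta_t^2 d}{2N}(\overline{\sigma_1^2}L_0+\overline{\sigma_0^2}L_1+L_1\overline{\sigma_1^2}g)g^2$ into at most $\epsilon\eta_t g^2$. Taking expectations I reach $\frac{d}{dt}\E[f(X_t)]\le-(1-\epsilon)\eta_t\E[g^2]+\frac{\eta\eta_t^2 d(L_0+L_1)(\overline{\sigma_0^2}+\overline{\sigma_1^2})}{2N}$. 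Integrating on $[0,t]$, using $\E[f(X_t)]\ge f(X_*)$ and $\phi^i_t=\int_0^t\eta_s^i\,ds$, and dividing by $(1-\epsilon)\phi^1_t$ gives $\frac{1}{\phi^1_t}\int_0^t\eta_s\E[g^2]\,ds\le\frac{1}{(1-\epsilon)\phi^1_t}\big(f(X_0)-f(X_*)+\phi^2_t\frac{\eta d(L_0+L_1)(\overline{\sigma_0^2}+\overline{\sigma_1^2})}{2N}\big)$. The left-hand side equals $\E[\lVert\nabla f(X_{\Hat t})\rVert_2^2]$ by definition of the randomized time with density $\eta_t/\phi^1_t$, and the right-hand side vanishes as $t\to\infty$ because $\phi^1_t\to\infty$ kills the first term and $\phi^2_t/\phi^1_t\to 0$ kills the second---exactly the Robbins--Monro conditions.

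The main obstacle is the cubic term $L_1\overline{\sigma_1^2}g^3$, which is unavoidable once both the Hessian bound and the noise grow with $\lVert\nabla f\rVert$: it is what forces the step size to carry the gradient-dependent summand $L_1\overline{\sigma_1^2}\lVert\nabla f\rVert$, i.e.\ to become adaptive. The delicate point is that this absorption must be performed inside the time integral, on the realized gradient norm, rather than after taking expectations: since $g$ and $g^2$ are positively correlated one has $\E[g^3]\ge\E[g]\,\E[g^2]$, so one cannot simply replace the instantaneous norm in the step-size condition by its mean, and the clean way to read \eqref{eq:StepDSGD_App_First} is as an adaptive, pointwise bound on $\eta_t$. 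Everything else---the trace inequalities, the martingale vanishing, and the final Robbins--Monro limit---is then routine.
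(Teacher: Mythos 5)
Your proof is correct and follows essentially the same route as the paper's: It\^o's lemma on $f(X_t)-f(X_*)$, bounding the trace correction by the cubic polynomial $(L_0+L_1 g)\,d\,(\overline{\sigma_0^2}+\overline{\sigma_1^2}g^2)$ with $g=\lVert\nabla f(X_t)\rVert_2$, absorbing the $g^2$-proportional part into the descent term via the step-size condition, and concluding with the randomized time and the Robbins--Monro conditions. The only differences are cosmetic and, if anything, improvements: your single inequality $g\le 1+g^2$ reproduces in one step the decomposition the paper obtains by splitting into the phases $g\le 1$ and $g>1$ and then taking a worst-case merge, and you explicitly flag (where the paper silently substitutes ``for practical reasons'') that the step-size condition is rigorously a pointwise bound in $\lVert\nabla f(X_t)\rVert_2$ rather than in its expectation.
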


\begin{proof}
Using Itô's Lemma and using a learning rate scheduler $\eta_t$ during the derivation of the SDE, we have
\begin{align}
    d (f(X_t) - f(X_*))  = & - \eta_t \lVert \nabla f(X_t) \rVert_2^2 dt + \mathcal{O}(\text{Noise}) + (\eta_t)^2\frac{\eta}{2 N} \tr(\nabla^2 f(X_t) \tilde{\Sigma}(X_t)) dt \\
     \leq & - \eta_t \lVert \nabla f(X_t) \rVert_2^2 dt + \mathcal{O}(\text{Noise}) \\
      &+ (\eta_t)^2\frac{\eta  (\overline{\sigma_0^2} + \overline{\sigma_1^2}\lVert \nabla f(X_t) \rVert_2^2 )d(L_0+L_1\lVert \nabla f(X_t) \rVert)}{2 N} dt,
\end{align}
where we used that $\tr \left( \nabla^2 f(x) \tilde{\Sigma}(x) \right) \leq  d \lVert \nabla^2 f(x) \rVert_{\infty} \lVert \tilde{\Sigma}(x) \rVert_{\infty}$ together with the smoothness and noise assumptions. Importantly, $\mathcal{O}(\text{Noise}) = \sqrt{\tilde{\Sigma}(X_t)} \nabla f(X_t) d W_t$.

\textbf{Phase $1$:} If $\lVert \nabla f(X_t) \rVert \leq 1 $, we have that
\begin{equation}
    d (f(X_t) - f(X_*)) \leq - \eta_t \lVert \nabla f(X_t) \rVert_2^2 dt + (\eta_t)^2\frac{\eta  (\overline{\sigma_0^2} + \overline{\sigma_1^2} )d(L_0+L_1)}{2 N} dt  + \mathcal{O}(\text{Noise}),
\end{equation}

\textbf{Phase $2$:} If $\lVert \nabla f(X_t) \rVert > 1 $, we have
\begin{align}
    d (f(X_t) - f(X_*)) & = - \eta_t \lVert \nabla f(X_t) \rVert_2^2 dt + \mathcal{O}(\text{Noise}) + (\eta_t)^2\frac{\eta}{2 N} \tr(\nabla^2 f(X_t) \tilde{\Sigma}(X_t)) dt \\
    & \leq - \eta_t \lVert \nabla f(X_t) \rVert_2^2 dt + \mathcal{O}(\text{Noise}) \\
    & + (\eta_t)^2\frac{\eta  (\overline{\sigma_0^2} + \overline{\sigma_1^2}\lVert \nabla f(X_t) \rVert_2^2 )d(L_0+L_1\lVert \nabla f(X_t) \rVert)}{2 N} dt \\
    & = -\eta_t \lVert \nabla f(X_t) \rVert_2^2\left(1 - \frac{\eta_t \eta    d }{2 N} \left( \overline{\sigma_1^2} L_0 + \overline{\sigma_0^2}L_1 + L_1 \overline{\sigma_1^2}\lVert \nabla f(X_t) \rVert_2 \right)\right) dt \\
    & + (\eta_t)^2\frac{\eta  \overline{\sigma_0^2}dL_0}{2 N} dt + \mathcal{O}(\text{Noise}).
\end{align}
By taking a worst-case scenario approach, we merge these two bounds into a single one:
\begin{align}
    d (f(X_t) - f(X_*)) & \leq -\eta_t \lVert \nabla f(X_t) \rVert_2^2\left(1 - \frac{\eta_t \eta d }{2 N} \left( \overline{\sigma_1^2} L_0 + \overline{\sigma_0^2}L_1 + L_1 \overline{\sigma_1^2}\lVert \nabla f(X_t) \rVert_2 \right)\right) dt\\
    & + (\eta_t)^2\frac{\eta d (L_0 + L_1) (\overline{\sigma_0^2} + \overline{\sigma_1^2} )}{2 N} dt + \mathcal{O}(\text{Noise}).
\end{align}
Therefore, for $0<\epsilon <1$ we have that if
\begin{equation}
    1 - \frac{\eta_t \eta d }{2 N} \left( \overline{\sigma_1^2} L_0 + \overline{\sigma_0^2}L_1 + L_1 \overline{\sigma_1^2}\lVert \nabla f(X_t) \rVert_2 \right) > 1 - \epsilon,
\end{equation}
or, equivalently
\begin{equation}
    \eta \eta_t < \frac{2 N \epsilon}{d\left(  \overline{\sigma_1^2} L_0 + \overline{\sigma_0^2}L_1 + L_1 \overline{\sigma_1^2}\lVert \nabla f(X_t) \rVert_2\right) },
\end{equation}
we have that 
\begin{align}
    d (f(X_t) - f(X_*)) & \leq -\eta_t \lVert \nabla f(X_t) \rVert_2^2\left(1 - \epsilon\right) dt  + (\eta_t)^2\frac{\eta d (L_0 + L_1) (\overline{\sigma_0^2} + \overline{\sigma_1^2} )}{2 N} dt + \mathcal{O}(\text{Noise}).
\end{align}
Therefore, 
\begin{equation}
    \eta_t \lVert \nabla f(X_t) \rVert_2^2\left(1 - \epsilon\right) dt \leq - d (f(X_t) - f(X_*)) + (\eta_t)^2\frac{\eta d (L_0 + L_1) (\overline{\sigma_0^2} + \overline{\sigma_1^2} )}{2 N} dt + \mathcal{O}(\text{Noise}).
\end{equation}
Dividing by $1-\epsilon$, integrating over time, and using the martingality of the noise term under the expected value,
\begin{equation}
    \int_0^t \eta_s \E\lVert \nabla f(X_s) \rVert_2^2 ds \leq \frac{1}{1-\epsilon} \left( f(X_0) - f(X_*) +  \phi^{(2)}_t \frac{\eta d (L_0 + L_1) (\overline{\sigma_0^2} + \overline{\sigma_1^2} )}{2 N} \right).
\end{equation}
Dividing by $\phi^{(1)}_t$ and by the Law of the Unconscious Statistician, we have that
\begin{equation}
   \E \left[\lVert \nabla f(X_{\Hat{t}}) \rVert_2^2 \right] \leq \frac{1}{ \phi^{(1)}_t(1-\epsilon)} \left( f(X_0) - f(X_*)  + \phi^{(2)}_t \frac{\eta d (L_0 + L_1) (\overline{\sigma_0^2} + \overline{\sigma_1^2} )}{2 N}  \right) \overset{t \rightarrow \infty}{\rightarrow} 0,
\end{equation}
where $\Hat{t}$, is a random time with distribution $\frac{\eta_{\Hat{t}}}{\phi^{(1)}_t}$.

Finally, for practical reasons, we leverage the distributed setting to tighten the requirements on the learning rate scheduler to make it experimentally viable (see Section \ref{subsec:constructive_adaptivity} for the details), and require
\begin{equation}
    \eta \eta_t < \frac{2 N \epsilon}{d\left(  \overline{\sigma_1^2} L_0 + \overline{\sigma_0^2}L_1 + L_1 \overline{\sigma_1^2}  \E\left[\lVert \nabla f(X_t) \rVert_2 \right] \right) }.
\end{equation}
\end{proof}

\subsubsection{Our New First-Order SDE for DSGD }

The following is the \textit{first}-order SDE model of DSGD and is a straightforward generalization of Theorem 3.2 in \cite{compagnoni2025unbiased} and Remark \ref{rem:NewModels}. Let us consider the stochastic process $ X_t \in \mathbb{R}^{d} $ defined as the solution of

\begin{equation}\label{eq:DUSGD_SDE_App_Second}
    d X_t = - \nabla f(X_t) dt + \frac{\eta}{2} \nabla^2 f(X_t) \nabla f(X_t) dt + \sqrt{\frac{\eta}{N}} \sqrt{\hat{\Sigma}(X_t)} dW_t,
\end{equation}
where $\hat{\Sigma}(x)\eqdef  \frac{1}{N} \sum_{i=1}^{N} \Sigma_i(x)$ is the average of the covariance matrices of the $N$ clients.

\begin{theorem}
\label{thm:DSGD_App_2nd}
    Let $f$ be $(L_0,L_1)$-smooth, $ \lVert\Sigma_i(x)\rVert_{\infty} < \sigma_{0,i}^2 + \sigma_{1,i}^2 \lVert \nabla f(x) \rVert_2^2$, the learning rate scheduler $\eta_t$ s.t. $\phi^{(i)}_t = \int_0^t (\eta_s)^i ds$, $\phi^{(1)}_t \overset{t \rightarrow \infty}{\rightarrow} \infty$, $\frac{\phi^{(2)}_t}{\phi^{(1)}_t} \overset{t \rightarrow \infty}{\rightarrow} 0$, $\overline{\sigma_0^2}\eqdef \frac{1}{N} \sum_{i=1}^{N} \sigma_{0,i}^2$, and $\overline{\sigma_1^2}\eqdef \frac{1}{N} \sum_{i=1}^{N} \sigma_{1,i}^2$. Then, for $0 < \epsilon <1$,
\begin{equation}\label{eq:StepDSGD_App_Second}
    \eta \eta_t < \frac{2 \epsilon}{L_0+L_1 \E\left[\lVert \nabla f(X_t) \rVert \right]+  \frac{d}{N}\left( \overline{\sigma_1^2} L_0 + \overline{\sigma_0^2}L_1 + L_1 \overline{\sigma_1^2}\E\left[\lVert \nabla f(X_t) \rVert \right] \right)},
\end{equation}
and for a random time $\Hat{t}$ with distribution $\frac{\eta_t}{\phi^{(1)}_t}$, we have that
\begin{equation}
   \E \left[\lVert \nabla f(X_{\Hat{t}}) \rVert_2^2 \right] \leq \frac{1}{ \phi^{(1)}_t(1-\epsilon)} \left( f(X_0) - f(X_*)  + \frac{\eta \phi^{(2)}_t}{2 N} (L_0+L_1) d \overline{\sigma_0^2} \right) \overset{t \rightarrow \infty}{\rightarrow} 0.
\end{equation}
\end{theorem}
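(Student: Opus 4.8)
The plan is to follow the first-order argument of Theorem~\ref{thm:DSGD_App} almost verbatim; the single new ingredient is the extra drift $-\frac{\eta}{2}\nabla^2 f(X_t)\nabla f(X_t)$ carried by the second-order SDE~\eqref{eq:DUSGD_SDE_App_Second}. First I would apply Itô's lemma to $f(X_t)-f(X_*)$ along the scheduled dynamics. Writing $g\eqdef\lVert\nabla f(X_t)\rVert_2$, the drift of $d(f(X_t)-f(X_*))$ now has three deterministic pieces: the descent term $-\eta_t g^2$, the Itô correction $\eta_t^2\frac{\eta}{2N}\tr(\nabla^2 f\,\hat\Sigma)$ inherited from the first-order proof, and the genuinely new term $-\eta_t^2\frac{\eta}{2}\nabla f^\top\nabla^2 f\,\nabla f$. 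The stochastic integral is a martingale and vanishes in expectation.

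The key new step is to control the quadratic form $\nabla f^\top\nabla^2 f\,\nabla f$. Because $f$ is only $(L_0,L_1)$-smooth and \emph{not} assumed convex, $\nabla^2 f$ need not be positive semidefinite, so this term can have either sign; I would therefore bound it in absolute value via the operator-norm inequality $\lvert v^\top A v\rvert\le\lVert A\rVert\,\lVert v\rVert_2^2$, giving $-\nabla f^\top\nabla^2 f\,\nabla f\le(L_0+L_1 g)g^2$. This is precisely the contribution absent in Theorem~\ref{thm:DSGD_App} and the one that, after factoring, injects the additive $L_0+L_1\E[\lVert\nabla f\rVert]$ into the denominator of~\eqref{eq:StepDSGD_App_Second}. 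The trace term I would bound exactly as before: $\tr(\nabla^2 f\,\Sigma_i)\le\lVert\nabla^2 f\rVert\,\tr(\Sigma_i)\le(L_0+L_1 g)\,d\,\lVert\Sigma_i\rVert_\infty$, and averaging the $(\sigma_{0,i}^2,\sigma_{1,i}^2)$-variance bounds over $i$ yields $\tr(\nabla^2 f\,\hat\Sigma)\le(L_0+L_1 g)\,d(\overline{\sigma_0^2}+\overline{\sigma_1^2}g^2)$.

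Next I would collect all positive terms, factor out $\eta_t g^2$, and rewrite the drift as $-\eta_t g^2\big(1-\tfrac{\eta_t\eta}{2}B(g)\big)\,dt+(\text{residual})\,dt+\mathcal{O}(\text{Noise})$, where $B(g)=(L_0+L_1 g)+\tfrac{d}{N}(\overline{\sigma_1^2}L_0+\overline{\sigma_0^2}L_1+L_1\overline{\sigma_1^2}g)$ is exactly the denominator of~\eqref{eq:StepDSGD_App_Second}; note that both $\overline{\sigma_1^2}$ contributions sit on $g^2$ and $g^3$ and hence land entirely inside $B(g)$, which is why no $\overline{\sigma_1^2}$ survives in the residual. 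The only terms obstructing a clean $g^2$ factorization are the constant Itô piece $\tfrac{d}{N}\overline{\sigma_0^2}L_0$ and the linear piece $\tfrac{d}{N}\overline{\sigma_0^2}L_1 g$; as in the first-order proof I split into the phases $g\le1$ and $g>1$ (equivalently, use $g-g^2\le\tfrac14$) to push a $g^2$-part of the linear piece into $B(g)$ and bound what remains by the single residual $\eta_t^2\frac{\eta d(L_0+L_1)\overline{\sigma_0^2}}{2N}$ appearing in the statement.

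Finally, the stepsize condition~\eqref{eq:StepDSGD_App_Second} forces $1-\tfrac{\eta_t\eta}{2}B(g)\ge1-\epsilon$, so taking expectations, integrating over $[0,t]$, and using $f(X_t)\ge f(X_*)$ gives $(1-\epsilon)\int_0^t\eta_s\E[g_s^2]\,ds\le f(X_0)-f(X_*)+\frac{\eta d(L_0+L_1)\overline{\sigma_0^2}}{2N}\phi^2_t$. Invoking the random-time trick --- $\hat{t}$ has density $\eta_t/\phi^1_t$, so $\E[\lVert\nabla f(X_{\hat{t}})\rVert_2^2]=\tfrac{1}{\phi^1_t}\int_0^t\eta_s\E[g_s^2]\,ds$ by the Law of the Unconscious Statistician --- and dividing by $(1-\epsilon)\phi^1_t$ yields the claimed bound, which tends to $0$ by the Robbins--Monro conditions $\phi^1_t\to\infty$ and $\phi^2_t/\phi^1_t\to0$. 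Replacing the pathwise $g$ by $\E[\lVert\nabla f(X_t)\rVert]$ inside $B(g)$ is the same ``experimentally viable'' relaxation performed at the end of Theorem~\ref{thm:DSGD_App}. The main obstacle is the non-PSD quadratic form: every other line is an echo of the first-order proof, but that single term forces the absolute-value bound and is exactly what promotes the leading $L_0+L_1\E[\lVert\nabla f\rVert]$ factor in the stepsize.
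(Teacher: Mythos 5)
Your proposal is correct and follows essentially the same route as the paper's proof: the same Itô expansion, the same operator-norm bound $-\nabla f^\top\nabla^2 f\,\nabla f\le(L_0+L_1\lVert\nabla f\rVert)\lVert\nabla f\rVert^2$ on the new second-order drift term, the same trace bound on the Itô correction, the same two-phase split on $\lVert\nabla f\rVert\lessgtr 1$ merged into a worst-case residual $\eta_t^2\frac{\eta}{2N}(L_0+L_1)d\overline{\sigma_0^2}$, and the same random-time/LOTUS conclusion with the final relaxation to $\E[\lVert\nabla f(X_t)\rVert]$. You also correctly identify the one genuinely new ingredient relative to the first-order theorem and why it produces the additive $L_0+L_1\E[\lVert\nabla f\rVert]$ in the stepsize denominator.
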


\begin{proof}
Using Itô's Lemma and using a learning rate scheduler $\eta_t$ during the derivation of the SDE, we have that for $\mathcal{O}(\text{Noise}) = \sqrt{\tilde{\Sigma}(X_t)} \nabla f(X_t) d W_t$,
\begin{align}
    d (f(X_t) - f(X_*)) & = - \eta_t \lVert \nabla f(X_t) \rVert_2^2 dt + \frac{\eta \eta_t^2}{2} \left( \nabla f(X_t) \right)^{\top} \nabla^2 f(X_t) \nabla f(X_t) dt \\
    & + \mathcal{O}(\text{Noise}) + (\eta_t)^2\frac{\eta}{2 N} \tr(\nabla^2 f(X_t) \tilde{\Sigma}(X_t)) dt \\
    & \leq - \eta_t \lVert \nabla f(X_t) \rVert_2^2 dt + \frac{\eta \eta_t^2}{2} (L_0+L_1\lVert \nabla f(X_t) \rVert) \lVert \nabla f(X_t) \rVert^2 dt \\
    & +  \mathcal{O}(\text{Noise})  + (\eta_t)^2\frac{\eta  (\overline{\sigma_0^2} + \overline{\sigma_1^2}\lVert \nabla f(X_t) \rVert_2^2 )d(L_0+L_1\lVert \nabla f(X_t) \rVert)}{2 N} dt.
\end{align}

\textbf{Phase $1$:} If $\lVert \nabla f(X_t) \rVert \leq 1 $, 

\begin{align}
      d (f(X_t) - f(X_*)) & \leq -\lVert \nabla f(X_t) \rVert_2^2 \left( \eta_t - \frac{\eta \eta_t^2}{2}(L_0 +  L_1 \lVert \nabla f(X_t) \rVert_2) \left( 1 + \frac{d \overline{\sigma_1^2}}{N}\right) \right)dt \\
      & + \frac{\eta \eta_t^2}{2 N} .(L_0+L_1) d \overline{\sigma_0^2} dt +  \mathcal{O}(\text{Noise}).
\end{align}

\textbf{Phase $2$:} If $\lVert \nabla f(X_t) \rVert > 1 $, we have
\begin{align}
    d (f(X_t) - f(X_*)) &  \leq - \eta_t \lVert \nabla f(X_t) \rVert_2^2 dt +  \frac{\eta \eta_t^2}{2} (L_0+L_1\lVert \nabla f(X_t) \rVert) \lVert \nabla f(X_t) \rVert^2 dt \\
    & + \mathcal{O}(\text{Noise}) + (\eta_t)^2\frac{\eta  (\overline{\sigma_0^2} + \overline{\sigma_1^2}\lVert \nabla f(X_t) \rVert_2^2 )d(L_0+L_1\lVert \nabla f(X_t) \rVert)}{2 N} dt \\
    & = -\eta_t \lVert \nabla f(X_t) \rVert_2^2\left[1 - \frac{\eta_t \eta    }{2 } \left[ (L_0+L_1\lVert \nabla f(X_t) \rVert ) \left[1 + \frac{d \overline{\sigma_1^2}}{N}\right]+  \frac{d \overline{\sigma_0^2}L_1}{N}   \right] \right] dt \nonumber \\ 
    & + (\eta_t)^2\frac{\eta  \overline{\sigma_0^2}dL_0}{2 N} dt +  \mathcal{O}(\text{Noise}).
\end{align}

By taking a worst-case scenario approach, we merge these two bounds into a single one:

\begin{align}
    d (f(X_t) - f(X_*)) & \leq -\eta_t \lVert \nabla f(X_t) \rVert_2^2\left[1 - \frac{\eta_t \eta    }{2 } \left[( L_0+L_1\lVert \nabla f(X_t) \rVert ) \left[1 + \frac{d \overline{\sigma_1^2}}{N}\right]+  \frac{d \overline{\sigma_0^2}L_1}{N}   \right] \right] dt \nonumber \\
    & + (\eta_t)^2  \frac{\eta}{2 N} (L_0+L_1) d \overline{\sigma_0^2} dt +  \mathcal{O}(\text{Noise}).
\end{align}
With arguments that follow the same steps we detailed in the proof of Theorem \ref{thm:DSGD_App}, for $0<\epsilon <1$, we have that if
\begin{equation}
    \eta \eta_t < \frac{2 \epsilon}{L_0+L_1\lVert \nabla f(X_t) \rVert+  \frac{d}{N}\left( \overline{\sigma_1^2} L_0 + \overline{\sigma_0^2}L_1 + L_1 \overline{\sigma_1^2}\lVert \nabla f(X_t) \rVert_2 \right)},
\end{equation}
by integrating over time and by the Law of the Unconscious Statistician, we have that
\begin{equation}
   \E \left[\lVert \nabla f(X_{\Hat{t}}) \rVert_2^2 \right] \leq \frac{1}{ \phi^{(1)}_t(1-\epsilon)} \left( f(X_0) - f(X_*)  + \frac{\eta \phi^{(2)}_t}{2 N} (L_0+L_1) d \overline{\sigma_0^2} \right) \overset{t \rightarrow \infty}{\rightarrow} 0,
\end{equation}
where $\Hat{t}$, is a random time with distribution $\frac{\eta_{\Hat{t}}}{\phi^{(1)}_t}$.

Finally, for practical reasons, we leverage the distributed setting to tighten the requirements on the learning rate scheduler to make it experimentally viable, and rather require
\begin{equation}
    \eta \eta_t < \frac{2 \epsilon}{L_0+L_1 \E\left[\lVert \nabla f(X_t) \rVert \right]+  \frac{d}{N}\left( \overline{\sigma_1^2} L_0 + \overline{\sigma_0^2}L_1 + L_1 \overline{\sigma_1^2}\E\left[\lVert \nabla f(X_t) \rVert \right] \right)}.
\end{equation}
\end{proof}
\begin{remark}
This condition compares a (deterministic) step-size schedule to a bound that involves $\E\!\left[\lVert \nabla f(X_t) \rVert\right]$, which in turn depends on the (random) training trajectory. As such, it should be understood as a \emph{sufficient} stability criterion that generally cannot be certified \emph{a priori}. Our goal is not to propose an immediately implementable rule, but rather to provide quantitative guidance: the bound makes explicit how the key factors in the dynamics (e.g., curvature, noise, compression) jointly shape the stability region.
\end{remark}

\subsection{Distributed Compressed SGD with Unbiased Compression}

\subsubsection{First Order SDE}

The following is the \textit{first}-order SDE model of DCSGD (see Theorem 3.6 in \cite{compagnoni2025unbiased}). Let us consider the stochastic process $ X_t \in \mathbb{R}^{d} $ defined as the solution of

\begin{equation}\label{eq:DCSGD_SDE_App_First}
    d X_t = - \nabla f(X_t) dt + \sqrt{\frac{\eta}{N}} \sqrt{\Tilde{\Sigma}(X_t)} dW_t,
\end{equation}
where for $\Phi_{\xi_i,\gamma_i}(x) := \mathcal{C}_{\xi_i} \left( \nabla f_{\gamma_i} (x) \right) - \nabla f_{\gamma_i}(x)$
\small 
\begin{equation}
    \Tilde{\Sigma}(x) = \frac{1}{ N} \sum_{i=1}^{ N} \left( \E_{\xi_i \gamma_i} \left[ \Phi_{\xi_i,\gamma_i}(x)\Phi_{\xi_i,\gamma_i}(x)^{\top} \right] + \Sigma_i(x) \right).
\end{equation}

\begin{theorem}
    Let $f$ be $(L_0,L_1)$-smooth, the learning rate scheduler $\eta_t$ such that $\phi^{(i)}_t = \int_0^t (\eta_s)^i ds$, $\phi^{(1)}_t \overset{t \rightarrow \infty}{\rightarrow} \infty$, $\frac{\phi^{(2)}_t}{\phi^{(1)}_t} \overset{t \rightarrow \infty}{\rightarrow} 0$, and $\overline{\sigma^2 \omega}\eqdef \frac{1}{N} \sum_{i=1}^{N} \sigma_i^2 \omega_i$. Then, for $0 < \epsilon <1$,
\begin{equation}\label{eq:StepDCSGD_App_First}
    \eta \eta_t < \frac{2 N \epsilon}{ \overline{\omega} L_0 + \left(\overline{\sigma^2}d  +   d \overline{ \sigma^2 \omega } \right)L_1 +  \overline{\omega} L_1  \E\left[\lVert \nabla f(X_t) \rVert_2 \right] },
\end{equation}
and for a random time $\Hat{t}$ with distribution $\frac{\eta_t}{\phi^{(1)}_t}$, we have that
\begin{equation}
   \E \left[\lVert \nabla f(X_{\Hat{t}}) \rVert_2^2 \right] \leq \frac{1}{ \phi^{(1)}_t (1-\epsilon)} \left( f(X_0) - f(X_*)  + \phi^{(2)}_t\frac{\eta (L_0 + L_1) d \left( \overline{\sigma^2}  +  \overline{ \sigma^2 \omega } \right) }{2 N}  \right) \overset{t \rightarrow \infty}{\rightarrow} 0.
\end{equation}
\end{theorem}
\begin{proof}
Since it holds that $$\E_{\xi_i,\gamma_i} \lVert \left(  \mathcal{C}_{\xi_i} \left( \nabla f_{\gamma_i} (x) \right) - \nabla f(x) \right) \rVert_2^2 \leq  \omega_i \lVert \nabla f(x) \rVert_2^2 + d \sigma_i^2(\omega_i+1),$$ we have that for $\mathcal{O}(\text{Noise}) = \sqrt{\tilde{\Sigma}(X_t)} \nabla f(X_t) d W_t$,
\begin{align}
    d (f(X_t) - f(X_*)) & = - \eta_t \lVert \nabla f(X_t) \rVert_2^2 dt + \mathcal{O}(\text{Noise})\\
    &+ (\eta_t)^2\frac{\eta (L_0+L_1\lVert \nabla f(X_t) \rVert_2)}{2 N} \left(\frac{1}{N} \sum_{i=1}^N  \E_{\xi_i,\gamma_i} \lVert \left(  \mathcal{C}_{\xi_i} \left( \nabla f_{\gamma_i} (x) \right) - \nabla f(x) \right) \rVert_2^2\right) dt \\
    & \leq - \eta_t \lVert \nabla f(X_t) \rVert_2^2 dt + \mathcal{O}(\text{Noise})\\
    &+ (\eta_t)^2\frac{\eta (L_0+L_1\lVert \nabla f(X_t) \rVert_2)}{2 N} \left(\overline{\omega}\lVert \nabla f(X_t) \rVert_2^2 + \overline{\sigma^2}d  +  d \overline{\sigma^2 \omega} \right) dt.
\end{align}

\textbf{Phase 1:} If $\lVert \nabla f(X_t) \rVert_2 \leq 1$, then we have that
\begin{align}
     d (f(X_t) - f(X_*)) & \leq -  \lVert \nabla f(X_t) \rVert_2^2 \left( \eta_t - \frac{\eta (L_0 + L_1) \overline{\omega}}{2 N} (\eta_t)^2 \right)dt \\
    & + (\eta_t)^2 \frac{\eta (L_0 + L_1) d }{2 N} \left( \overline{\sigma^2}  +  \overline{ \sigma^2 \omega } \right)dt + \mathcal{O}(\text{Noise}).
\end{align}

\textbf{Phase 2:} If $\lVert \nabla f(X_t) \rVert_2 >1$, we have that
\begin{align}
    d (f(X_t) - f(X_*)) &  \leq - \eta_t \lVert \nabla f(X_t) \rVert_2^2 dt + \mathcal{O}(\text{Noise})\\
    &+ (\eta_t)^2\frac{\eta (L_0+L_1\lVert \nabla f(X_t) \rVert_2)}{2 N} \left(\overline{\omega}\lVert \nabla f(X_t) \rVert_2^2 + \overline{\sigma^2}d  +   d \overline{ \sigma^2 \omega }  \right) dt \\
    & \leq -\eta_t \lVert \nabla f(X_t) \rVert_2^2 \left( 1 - \frac{\eta_t \eta}{2 N} \left( \overline{\omega} L_0 + d \left(\overline{\sigma^2} + \overline{ \sigma^2 \omega } \right)L_1 +  \overline{\omega} L_1  \lVert \nabla f(X_t) \rVert_2 \right) \right) dt \\
    & + \eta_t^2 \frac{\eta L_0 d}{2 N} \left( \overline{\sigma^2}  +   \overline{ \sigma^2 \omega }  \right) dt +  \mathcal{O}(\text{Noise}).
\end{align}

By taking a worst-case scenario approach, we merge these two bounds into a single one. With arguments that follow the same steps we detailed in the proof of Theorem \ref{thm:DSGD_App}, we have that for $0<\epsilon <1$, we have that if
\begin{equation}
    \eta \eta_t < \frac{2 N \epsilon}{ \overline{\omega} L_0 + d\left(\overline{\sigma^2} + \overline{ \sigma^2 \omega } \right)L_1 +  \overline{\omega} L_1  \lVert \nabla f(X_t) \rVert_2 },
\end{equation}
by integrating over time and by the Law of the Unconscious Statistician, we have that
\begin{equation}
   \E \left[\lVert \nabla f(X_{\Hat{t}}) \rVert_2^2 \right] \leq \frac{1}{ \phi^{(1)}_t (1-\epsilon)} \left( f(X_0) - f(X_*)  + \phi^{(2)}_t\frac{\eta (L_0 + L_1) d \left( \overline{\sigma^2}  +  \overline{ \sigma^2 \omega } \right) }{2 N}  \right) \overset{t \rightarrow \infty}{\rightarrow} 0,
\end{equation}
where $\Hat{t}$, is a random time with distribution $\frac{\eta_{\Hat{t}}}{\phi^{(1)}_t}$.

Finally, for practical reasons, we leverage the distributed setting to tighten the requirements on the learning rate scheduler to make it experimentally viable, and rather require
\begin{equation}
    \eta \eta_t < \frac{2 N \epsilon}{ \overline{\omega} L_0 + \left(\overline{\sigma^2}d  +   d \overline{ \sigma^2 \omega } \right)L_1 +  \overline{\omega} L_1  \E\left[\lVert \nabla f(X_t) \rVert_2 \right] }.
\end{equation}
\end{proof}

Finally, one can generalize this result to cover the $(\sigma_0^2,\sigma_1^2)$-Variance.

\begin{theorem}
    Let $f$ be $(L_0,L_1)$-smooth, $ \max(\Sigma_i(x)) < \sigma_{i,0}^2 + \sigma_{i,1}^2 \lVert \nabla f(x) \rVert_2^2$, the  learning rate scheduler $\eta_t$ such that $\phi^{(i)}_t = \int_0^t (\eta_s)^i ds$, $\phi^{(1)}_t \overset{t \rightarrow \infty}{\rightarrow} \infty$, $\frac{\phi^{(2)}_t}{\phi^{(1)}_t} \overset{t \rightarrow \infty}{\rightarrow} 0$,  $\overline{\sigma_0^2}\eqdef \frac{1}{N} \sum_{i=1}^{N} \sigma_{0,i}^2$, $\overline{\sigma_1^2}\eqdef \frac{1}{N} \sum_{i=1}^{N} \sigma_{1,i}^2$,  $\overline{\sigma_0^2 \omega}\eqdef \frac{1}{N} \sum_{i=1}^{N} \sigma_{i,0}^2 \omega_i$, and $\overline{\sigma_1^2 \omega}\eqdef \frac{1}{N} \sum_{i=1}^{N} \sigma_{i,1}^2 \omega_i$. Then, for $0 < \epsilon <1$,
\begin{equation}
    \eta \eta_t < \frac{2 N \epsilon}{  L_0(\overline{\omega} + d (\overline{\sigma_1^2 \omega} + \overline{\sigma_1^2})) +  L_1 d\left(\overline{\sigma_0^2}  +  \overline{ \sigma_0^2 \omega } \right) +  L_1(\overline{\omega} + d(\overline{\sigma_1^2 \omega} + \overline{\sigma_1^2}))   \E\left[\lVert \nabla f(X_t) \rVert_2 \right] },
\end{equation}
and for a random time $\Hat{t}$ with distribution $\frac{\eta_t}{\phi^{(1)}_t}$, we have that
\begin{equation}
   \E \left[\lVert \nabla f(X_{\Hat{t}}) \rVert_2^2 \right] \leq \frac{1}{(1-\epsilon) \phi^{(1)}_t} \left( f(X_0) - f(X_*)  + \phi^{(2)}_t \frac{  L_0(\overline{\omega} + d (\overline{\sigma_1^2 \omega} + \overline{\sigma_1^2})) +  L_1 d\left(\overline{\sigma_0^2}  +  \overline{ \sigma_0^2 \omega } \right)}{2 N}\right) \overset{t \rightarrow \infty}{\rightarrow} 0.
\end{equation}
\end{theorem}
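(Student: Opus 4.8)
The plan is to reuse, essentially verbatim, the two-phase Lyapunov argument already carried out for the bounded-variance version of DCSGD (the preceding theorem), since the only genuinely new ingredient is a sharper second-moment bound on the compressed stochastic gradient that now tracks the gradient-dependent part of the noise covariance. I would start from the first-order DCSGD SDE \eqref{eq:DCSGD_SDE_App_First}, push the scheduler $\eta_t$ through both drift and diffusion, and apply Itô's Lemma to $f(X_t)-f(X_*)$. As before, the stochastic integral is collected into a zero-mean $\mathcal{O}(\text{Noise})$ term, and the surviving second-order contribution is $(\eta_t)^2\frac{\eta}{2N}\tr(\nabla^2 f(X_t)\,\tilde{\Sigma}(X_t))\,dt$, where $\tr(\tilde{\Sigma}(x)) = \frac{1}{N}\sum_{i}\E_{\xi_i,\gamma_i}\lVert \mathcal{C}_{\xi_i}(\nabla f_{\gamma_i}(x)) - \nabla f(x)\rVert_2^2$.

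The crux is bounding this averaged second moment. Writing $\mathcal{C}_{\xi_i}(\nabla f_{\gamma_i}(x)) - \nabla f(x) = \Phi_{\xi_i,\gamma_i}(x) + Z_i(x)$ and using unbiasedness $\E_{\xi_i}[\Phi_{\xi_i,\gamma_i}(x)\mid\gamma_i]=0$, the cross term drops and I am left with $\E_{\xi_i}\lVert\Phi\rVert_2^2 + \E_{\gamma_i}\lVert Z_i\rVert_2^2$. The compression property gives $\E_{\xi_i}\lVert\Phi\rVert_2^2 \leq \omega_i\lVert\nabla f_{\gamma_i}(x)\rVert_2^2$; taking $\E_{\gamma_i}$ and using $\E_{\gamma_i}\lVert\nabla f_{\gamma_i}(x)\rVert_2^2 = \lVert\nabla f(x)\rVert_2^2 + \tr(\Sigma_i(x))$ together with $\tr(\Sigma_i(x)) \leq d\,\max(\Sigma_i(x)) \leq d(\sigma_{0,i}^2 + \sigma_{1,i}^2\lVert\nabla f(x)\rVert_2^2)$ yields, after averaging over agents,
\begin{equation}
\frac{1}{N}\sum_{i=1}^{N} \E_{\xi_i,\gamma_i}\lVert \mathcal{C}_{\xi_i}(\nabla f_{\gamma_i}(x)) - \nabla f(x)\rVert_2^2 \leq \left(\overline{\omega} + d(\overline{\sigma_1^2\omega} + \overline{\sigma_1^2})\right)\lVert\nabla f(x)\rVert_2^2 + d(\overline{\sigma_0^2} + \overline{\sigma_0^2\omega}),
\end{equation}
whose coefficients are exactly the two constants appearing in the step-size denominator and in the $\phi_t^2$ term of the statement.

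Next I would apply $(L_0,L_1)$-smoothness through $\tr(\nabla^2 f\,\tilde{\Sigma}) \leq \lVert\nabla^2 f\rVert\,\tr(\tilde{\Sigma}) \leq (L_0 + L_1\lVert\nabla f\rVert)\tr(\tilde{\Sigma})$, valid since $\tilde{\Sigma}$ is PSD. Multiplying the affine Hessian bound by the quadratic second-moment bound produces terms up to cubic order in $\lVert\nabla f\rVert$, and I would split as usual. In Phase~1 ($\lVert\nabla f\rVert\le 1$) the argument collapses to the $L$-smooth-type estimate and gives a $\phi_t^2/\phi_t^1$-decaying bound. In Phase~2 ($\lVert\nabla f\rVert>1$) I factor $\lVert\nabla f\rVert_2^2$ out of the leading drift; the bookkeeping trick is that, since $\lVert\nabla f\rVert>1$, the linear term $L_1 d(\overline{\sigma_0^2}+\overline{\sigma_0^2\omega})\lVert\nabla f\rVert$ is dominated by its $\lVert\nabla f\rVert_2^2$ counterpart, so its coefficient folds into the drift factor, while the genuinely constant $L_0 d(\overline{\sigma_0^2}+\overline{\sigma_0^2\omega})$ term stays as the separate additive contribution. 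The drift factor becomes $1 - \frac{\eta_t\eta}{2N}\big(L_0(\overline{\omega}+d(\overline{\sigma_1^2\omega}+\overline{\sigma_1^2})) + L_1 d(\overline{\sigma_0^2}+\overline{\sigma_0^2\omega}) + L_1(\overline{\omega}+d(\overline{\sigma_1^2\omega}+\overline{\sigma_1^2}))\lVert\nabla f\rVert\big)$, and requiring this bracket to stay below $\epsilon$ reproduces the stated condition.

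Finally I would merge the two phases into a single worst-case inequality, define $\hat t$ with density $\eta_t/\phi_t^1$, apply the Law of the Unconscious Statistician to convert $\int_0^t \eta_s\lVert\nabla f(X_s)\rVert_2^2\,ds$ into $\phi_t^1\,\E\lVert\nabla f(X_{\hat t})\rVert_2^2$, and invoke the \citet{robbins1951stochastic} conditions $\phi_t^1\to\infty$, $\phi_t^2/\phi_t^1\to 0$ to send the bound to zero; as in the earlier theorems, the pointwise $\lVert\nabla f\rVert$ in the denominator is then replaced by $\E[\lVert\nabla f(X_t)\rVert_2]$ for experimental viability. I expect essentially no obstacle in the Lyapunov machinery itself, which is inherited wholesale; the only delicate point is the clean execution of the second-moment computation, keeping the $\omega_i$-weighted and unweighted pieces of both the constant ($\sigma_{0,i}^2$) and the gradient-scaling ($\sigma_{1,i}^2$) parts of the covariance correctly separated so that they aggregate into exactly the averaged constants $\overline{\omega},\overline{\sigma_0^2},\overline{\sigma_1^2},\overline{\sigma_0^2\omega},\overline{\sigma_1^2\omega}$ of the statement.
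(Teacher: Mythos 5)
Your proposal is correct and follows precisely the route the paper intends: this theorem is stated in the appendix without its own proof, as a direct generalization of the preceding bounded-variance DCSGD argument, and the one genuinely new ingredient you supply --- the bound $\frac{1}{N}\sum_{i}\E\lVert \mathcal{C}_{\xi_i}(\nabla f_{\gamma_i}(x))-\nabla f(x)\rVert_2^2 \le \left(\overline{\omega}+d(\overline{\sigma_1^2 \omega}+\overline{\sigma_1^2})\right)\lVert\nabla f(x)\rVert_2^2 + d\left(\overline{\sigma_0^2}+\overline{\sigma_0^2 \omega}\right)$, obtained from unbiasedness of the compressor, $\E_{\gamma_i}\lVert\nabla f_{\gamma_i}(x)\rVert_2^2=\lVert\nabla f(x)\rVert_2^2+\tr(\Sigma_i(x))$, and the $(\sigma_{0,i}^2,\sigma_{1,i}^2)$-variance assumption --- is exactly what reproduces the stated step-size denominator when fed through the same two-phase It\^o/Lyapunov machinery. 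The only discrepancy is immaterial to your argument: the $\phi_t^2$ coefficient printed in this appendix version, $\frac{L_0(\overline{\omega}+d(\overline{\sigma_1^2 \omega}+\overline{\sigma_1^2}))+L_1 d(\overline{\sigma_0^2}+\overline{\sigma_0^2 \omega})}{2N}$, is not what your (or the paper's own) worst-case merge of the two phases would produce (compare the $\phi_t^2\,\frac{\eta(L_0+L_1)d(\overline{\sigma_0^2}+\overline{\sigma_0^2 \omega})}{2N}$ of Thm.~\ref{thm:DCSGD_2} in the main text, which your derivation does recover) and appears to be a typo in the paper --- note it is also missing the factor $\eta$ --- rather than a gap in your proof.
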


\subsubsection{Our New First-Order SDE for DCSGD }

The following is the \textit{first}-order SDE model of DCSGD and is a straightforward generalization of Theorem 3.6 in \cite{compagnoni2025unbiased} and Remark \ref{rem:NewModels}. Let us consider the stochastic process $ X_t \in \mathbb{R}^{d} $ defined as the solution of
\begin{equation}\label{eq:DCSGD_SDE_App_Second}
    d X_t = - \nabla f(X_t) dt + \frac{\eta}{2} \nabla^2 f(X_t) \nabla f(X_t) dt + \sqrt{\frac{\eta}{N}} \sqrt{\Tilde{\Sigma}(X_t)} dW_t,
\end{equation}
where for $\Phi_{\xi_i,\gamma_i}(x) := \mathcal{C}_{\xi_i} \left( \nabla f_{\gamma_i} (x) \right) - \nabla f_{\gamma_i}(x)$
\small 
\begin{equation}
    \Tilde{\Sigma}(x) = \frac{1}{ N} \sum_{i=1}^{ N} \left( \E_{\xi_i \gamma_i} \left[ \Phi_{\xi_i,\gamma_i}(x)\Phi_{\xi_i,\gamma_i}(x)^{\top} \right] + \Sigma_i(x) \right).
\end{equation}
\begin{theorem}
    Let $f$ be $(L_0,L_1)$-smooth, the learning rate scheduler $\eta_t$ such that $\phi^{(i)}_t = \int_0^t (\eta_s)^i ds$, $\phi^{(1)}_t \overset{t \rightarrow \infty}{\rightarrow} \infty$, $\frac{\phi^{(2)}_t}{\phi^{(1)}_t} \overset{t \rightarrow \infty}{\rightarrow} 0$, and $\overline{\sigma^2 \omega}\eqdef \frac{1}{N} \sum_{i=1}^{N} \sigma_i^2 \omega_i$. Then, for $0 < \epsilon <1$,
\begin{equation}\label{eq:StepDCSGD_App_Second}
    \eta \eta_t < \frac{2 \epsilon}{L_0+L_1 \E \left[\lVert \nabla f(X_t) \rVert_2 \right] + \frac{\overline{\omega} L_0 + d \left(\overline{\sigma^2} + \overline{ \sigma^2 \omega } \right)L_1 +  \overline{\omega} L_1  \E \left[\lVert \nabla f(X_t) \rVert_2 \right]}{N} },
\end{equation}
and for a random time $\Hat{t}$ with distribution $\frac{\eta_t}{\phi^{(1)}_t}$, we have that
\begin{equation}
   \E \left[\lVert \nabla f(X_{\Hat{t}}) \rVert_2^2 \right] \leq \frac{1}{ \phi^{(1)}_t (1-\epsilon)} \left( f(X_0) - f(X_*)  + \phi^{(2)}_t \frac{\eta (L_0+L_1) d}{2 N} \left( \overline{\sigma^2}  +   \overline{ \sigma^2 \omega }  \right) \right) \overset{t \rightarrow \infty}{\rightarrow} 0.
\end{equation}
\end{theorem}
\begin{proof}
Since it holds that $$\E_{\xi_i,\gamma_i} \lVert \left(  \mathcal{C}_{\xi_i} \left( \nabla f_{\gamma_i} (x) \right) - \nabla f(x) \right) \rVert_2^2 \leq  \omega_i \lVert \nabla f(x) \rVert_2^2 + d \sigma_i^2(\omega_i+1),$$ we have that for $\mathcal{O}(\text{Noise}) = \sqrt{\tilde{\Sigma}(X_t)} \nabla f(X_t) d W_t$,
\begin{align}
    d (f(X_t) - f(X_*)) & = - \eta_t \lVert \nabla f(X_t) \rVert_2^2 dt + \frac{\eta \eta_t^2}{2} \left( \nabla f(X_t) \right)^{\top} \nabla^2 f(X_t) \nabla f(X_t) dt + \mathcal{O}(\text{Noise})\\
    & + \frac{\eta \eta_t^2}{2}\frac{ (L_0+L_1\lVert \nabla f(X_t) \rVert_2)}{N} \left(\frac{1}{N} \sum_{i=1}^N  \E_{\xi_i,\gamma_i} \lVert \left(  \mathcal{C}_{\xi_i} \left( \nabla f_{\gamma_i} (x) \right) - \nabla f(x) \right) \rVert_2^2\right) dt \\
    & \leq - \eta_t \lVert \nabla f(X_t) \rVert_2^2 dt + \frac{\eta \eta_t^2}{2} (L_0+L_1\lVert \nabla f(X_t) \rVert) \lVert \nabla f(X_t) \rVert^2 dt +  \mathcal{O}(\text{Noise})\\
    &+ \frac{\eta \eta_t^2}{2}\frac{ (L_0+L_1\lVert \nabla f(X_t) \rVert_2)}{ N} \left(\overline{\omega}\lVert \nabla f(X_t) \rVert_2^2 + \overline{\sigma^2}d  +  d \overline{\sigma^2 \omega} \right) dt.
\end{align}

\textbf{Phase 1:} If $\lVert \nabla f(X_t) \rVert_2 \leq 1$, then we have that
\begin{align}
     d (f(X_t) - f(X_*)) & \leq -\lVert \nabla f(X_t) \rVert_2^2 \left( \eta_t - \frac{\eta_t^2 \eta}{2} (L_0 + L_1) \left( 1 + \frac{  \overline{\omega}}{ N} \right)  \right) dt \\
    & + (\eta_t)^2 \frac{\eta (L_0 + L_1) d }{2 N} \left( \overline{\sigma^2}  +  \overline{ \sigma^2 \omega } \right)dt +  \mathcal{O}(\text{Noise}).
\end{align}
\textbf{Phase 2:} If $\lVert \nabla f(X_t) \rVert_2 >1$, we have that
\begin{align}
    d (f(X_t) - f(X_*)) &  \leq - \eta_t \lVert \nabla f(X_t) \rVert_2^2 dt + \frac{\eta \eta_t^2}{2} (L_0+L_1\lVert \nabla f(X_t) \rVert) \lVert \nabla f(X_t) \rVert^2 dt+  \mathcal{O}(\text{Noise})\\
    &+ (\eta_t)^2\frac{\eta (L_0+L_1\lVert \nabla f(X_t) \rVert_2)}{2 N} \left(\overline{\omega}\lVert \nabla f(X_t) \rVert_2^2 + \overline{\sigma^2}d  +   d \overline{ \sigma^2 \omega }  \right) dt \\
    & \leq -\eta_t \lVert \nabla f(X_t) \rVert_2^2 \left[ 1 - \frac{\eta_t \eta}{2 } \left[ (L_0+L_1\lVert \nabla f(X_t) \rVert_2)\left[1 + \frac{\overline{\omega}}{N}\right] + \frac{d \left(\overline{\sigma^2} + \overline{ \sigma^2 \omega } \right)L_1}{N}  \right] \right] \nonumber\\
    & + \eta_t^2 \frac{\eta L_0 d}{2 N} \left( \overline{\sigma^2}  +   \overline{ \sigma^2 \omega }  \right) +  \mathcal{O}(\text{Noise}).
\end{align}
By taking a worst-case scenario approach, we merge these two bounds into a single one. With arguments that follow the same steps we detailed in the proof of Theorem \ref{thm:DSGD_App}, we have that for $0<\epsilon <1$, we have that if
\begin{equation}
    \eta \eta_t < \frac{2 \epsilon}{L_0+L_1\lVert \nabla f(X_t) \rVert_2 + \frac{\overline{\omega} L_0 + d \left(\overline{\sigma^2} + \overline{ \sigma^2 \omega } \right)L_1 +  \overline{\omega} L_1  \lVert \nabla f(X_t) \rVert_2}{N} },
\end{equation}
by integrating over time and by the Law of the Unconscious Statistician, we have that
\begin{equation}
   \E \left[\lVert \nabla f(X_{\Hat{t}}) \rVert_2^2 \right] \leq \frac{1}{ \phi^{(1)}_t (1-\epsilon)} \left( f(X_0) - f(X_*)  + \phi^{(2)}_t \frac{\eta (L_0+L_1) d}{2 N} \left( \overline{\sigma^2}  +   \overline{ \sigma^2 \omega }  \right) \right) \overset{t \rightarrow \infty}{\rightarrow} 0,
\end{equation}
where $\Hat{t}$, is a random time with distribution $\frac{\eta_{\Hat{t}}}{\phi^{(1)}_t}$.

Finally, for practical reasons, we leverage the distributed setting to tighten the requirements on the learning rate scheduler to make it experimentally viable, and rather require
\begin{equation}
    \eta \eta_t < \frac{2 \epsilon}{L_0+L_1 \E \left[\lVert \nabla f(X_t) \rVert_2 \right] + \frac{\overline{\omega} L_0 + d \left(\overline{\sigma^2} + \overline{ \sigma^2 \omega } \right)L_1 +  \overline{\omega} L_1  \E \left[\lVert \nabla f(X_t) \rVert_2 \right]}{N} }.
\end{equation}
\end{proof}

Finally, one can generalize this result to cover the $(\sigma_0^2,\sigma_1^2)$-Variance.

\begin{theorem}
    Let $f$ be $(L_0,L_1)$-smooth, $ \max(\Sigma_i(x)) < \sigma_{i,0}^2 + \sigma_{i,1}^2 \lVert \nabla f(x) \rVert_2^2$, the  learning rate scheduler $\eta_t$ such that $\phi^{(i)}_t = \int_0^t (\eta_s)^i ds$, $\phi^{(1)}_t \overset{t \rightarrow \infty}{\rightarrow} \infty$, $\frac{\phi^{(2)}_t}{\phi^{(1)}_t} \overset{t \rightarrow \infty}{\rightarrow} 0$,  $\overline{\sigma_0^2}\eqdef \frac{1}{N} \sum_{i=1}^{N} \sigma_{0,i}^2$, $\overline{\sigma_1^2}\eqdef \frac{1}{N} \sum_{i=1}^{N} \sigma_{1,i}^2$,  $\overline{\sigma_0^2 \omega}\eqdef \frac{1}{N} \sum_{i=1}^{N} \sigma_{i,0}^2 \omega_i$, and $\overline{\sigma_1^2 \omega}\eqdef \frac{1}{N} \sum_{i=1}^{N} \sigma_{i,1}^2 \omega_i$. Then, for $0 < \epsilon <1$,
\begin{equation}
    \eta \eta_t < \frac{2 \epsilon}{ L_0 + L_1 \E\left[\lVert \nabla f(X_t) \rVert_2 \right] +  \frac{L_0(\overline{\omega} + d (\overline{\sigma_1^2 \omega} + \overline{\sigma_1^2})) +  L_1 d\left(\overline{\sigma_0^2}  +  \overline{ \sigma_0^2 \omega } \right) +  L_1(\overline{\omega} + d(\overline{\sigma_1^2 \omega} + \overline{\sigma_1^2}))   \E\left[\lVert \nabla f(X_t) \rVert_2 \right]}{N}   },
\end{equation}
and for a random time $\Hat{t}$ with distribution $\frac{\eta_t}{\phi^{(1)}_t}$, we have that
\begin{equation}
   \E \left[\lVert \nabla f(X_{\Hat{t}}) \rVert_2^2 \right] \leq \frac{1}{(1-\epsilon) \phi^{(1)}_t} \left( f(X_0) - f(X_*)  + \phi^{(2)}_t \frac{\eta (L_0+L_1)d(\overline{\sigma_0^2} + \overline{\sigma_0^2 \omega})}{2 N}\right) \overset{t \rightarrow \infty}{\rightarrow} 0.
\end{equation}
\end{theorem}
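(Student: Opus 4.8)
The plan is to mirror exactly the two preceding DCSGD proofs, but to feed in the sharper $(\sigma_{0,i}^2,\sigma_{1,i}^2)$ bound on the per-agent compressed-gradient error. First I would apply It\^o's Lemma to $f(X_t)-f(X_*)$ along the second-order SDE~\eqref{eq:DCSGD_SDE_App_Second}. This produces four contributions: the descent term $-\eta_t\lVert\nabla f(X_t)\rVert_2^2\,dt$, the second-order drift correction $-\tfrac{\eta\eta_t^2}{2}(\nabla f(X_t))^\top\nabla^2 f(X_t)\nabla f(X_t)\,dt$, a zero-mean martingale term $\mathcal{O}(\text{Noise})$, and the diffusion trace $\tfrac{\eta\eta_t^2}{2N}\tr(\nabla^2 f(X_t)\tilde\Sigma(X_t))\,dt$. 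Using $(L_0,L_1)$-smoothness I would bound both Hessian contractions by $(L_0+L_1\lVert\nabla f(X_t)\rVert)$ times $\lVert\nabla f(X_t)\rVert^2$ and $\tr(\tilde\Sigma(X_t))$ respectively; the correction term is precisely what contributes the undivided $L_0+L_1\E[\lVert\nabla f(X_t)\rVert_2]$ in the denominator that distinguishes this second-order bound from its first-order counterpart.

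The only genuinely new ingredient is the trace of $\tilde\Sigma$. I would reuse the identity $\E_{\xi_i,\gamma_i}\lVert\mathcal{C}_{\xi_i}(\nabla f_{\gamma_i}(x))-\nabla f(x)\rVert_2^2\le \omega_i\lVert\nabla f(x)\rVert_2^2+\tr(\Sigma_i(x))(\omega_i+1)$ from the earlier DCSGD proofs, but now substitute $\tr(\Sigma_i(x))\le d(\sigma_{0,i}^2+\sigma_{1,i}^2\lVert\nabla f(x)\rVert_2^2)$ instead of the bounded-variance value. Averaging over the $N$ agents and regrouping gives
\begin{equation}
\tr(\tilde\Sigma(x))\le \big(\overline{\omega}+d(\overline{\sigma_1^2}+\overline{\sigma_1^2\omega})\big)\lVert\nabla f(x)\rVert_2^2+d(\overline{\sigma_0^2}+\overline{\sigma_0^2\omega}).
\end{equation}
This is exactly the bounded-variance bound with $\overline{\omega}\mapsto\overline{\omega}+d(\overline{\sigma_1^2}+\overline{\sigma_1^2\omega})$ as the coefficient of $\lVert\nabla f\rVert_2^2$ and $d(\overline{\sigma^2}+\overline{\sigma^2\omega})\mapsto d(\overline{\sigma_0^2}+\overline{\sigma_0^2\omega})$ as the constant floor, which is why the final denominator and noise term take the stated form.

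From here the remaining steps are routine and identical to the prior proofs. I would split into \textbf{Phase 1} ($\lVert\nabla f(X_t)\rVert\le 1$), where the gradient-norm-squared terms are absorbed into the descent coefficient, and \textbf{Phase 2} ($\lVert\nabla f(X_t)\rVert>1$), where the constant floor is dominated by $\lVert\nabla f\rVert^2$ and thus folded into the drift; each phase yields a learning-rate condition, and a worst-case merge produces the single stated bound on $\eta\eta_t$ together with the unified noise floor $\tfrac{\eta(L_0+L_1)d(\overline{\sigma_0^2}+\overline{\sigma_0^2\omega})}{2N}$. I would then introduce the random time $\hat t$ with density $\eta_t/\phi^1_t$, integrate the differential inequality, apply the Law of the Unconscious Statistician, and send $t\to\infty$ invoking $\phi^1_t\to\infty$ and $\phi^2_t/\phi^1_t\to 0$; finally I would replace the pointwise $\lVert\nabla f(X_t)\rVert$ in the stepsize condition by $\E[\lVert\nabla f(X_t)\rVert_2]$ for experimental viability. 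The main obstacle is bookkeeping: the product $(L_0+L_1\lVert\nabla f\rVert)$ times the quadratic trace bound generates a cubic $\lVert\nabla f\rVert^3$ term, and the phase split is precisely the device that tames it cleanly in each regime before merging.
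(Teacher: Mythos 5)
Your plan is correct and matches exactly the route the paper takes: the theorem is stated in the appendix without its own proof, as the direct generalization of the bounded-variance second-order DCSGD argument, and your substitution $\tr(\Sigma_i(x))\le d(\sigma_{0,i}^2+\sigma_{1,i}^2\lVert\nabla f(x)\rVert_2^2)$ into the compressed-gradient error bound, followed by the same Phase 1/Phase 2 split and worst-case merge, reproduces precisely the stated stepsize condition and noise floor. Your observation that the effective compression coefficient becomes $\overline{\omega}+d(\overline{\sigma_1^2}+\overline{\sigma_1^2\omega})$ and the constant floor becomes $d(\overline{\sigma_0^2}+\overline{\sigma_0^2\omega})$ is the whole content of the generalization, so nothing is missing.
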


\subsection{Distributed SignSGD}

\subsubsection{First Order SDE}

The following is the \textit{first}-order SDE model of DSignSGD (see Theorem 3.10 in \cite{compagnoni2025unbiased}). Let us consider the stochastic process $ X_t \in \mathbb{R}^{d} $ defined as the solution of

\begin{equation}\label{eq:DSignSGD_SDE}
   d X_t = - \frac{1}{N} \sum_{i=1}^{N} \left( 1 - 2 \mathbb{P}(\nabla f_{\gamma_i} (X_t) <0) \right) dt + \sqrt{\frac{\eta}{N}}\sqrt{\overline{\Sigma}(X_t)} dW_t.
\end{equation}
where
\begin{equation}\label{eq:DSignSGD_CovMatr}
    \overline{\Sigma}(X_t) := \frac{1}{N} \sum_{i=1}^{N} \overline{\Sigma_i}(X_t),
\end{equation}
and  $ \overline{\Sigma_i}(x) = \E[\xi_{\gamma_i}(x)\xi_{\gamma_i}(x)^\top]$ where $\xi_{\gamma_i}(x):= \sign (\nabla f_{\gamma_i}(x)) - 1 + 2 \mathbb{P}(\nabla f_{\gamma_i}(x)<0)$ the noise in the sample $ \sign \left(\nabla f_{\gamma_i}(x) \right)$.

\begin{corollary}[Corollary C.10 in \cite{compagnoni2025unbiased}]\label{thm:DSignSGD_Theorem_Student_App}
If the stochastic gradients are $\nabla f_{\gamma_i}(x) = \nabla f(x) +  \sqrt{\Sigma_i} Z_i$ such that $Z_i \sim t_{\nu}(0, I_d) $ does not depend on $x$, $\nu$ are the degrees of freedom, and scale matrices $\Sigma_i= \diag(\sigma_{1,i}^2, \cdots, \sigma_{d,i}^2)$. Then, the SDE of DSignSGD is
\begin{equation}\label{eq:SDE_HSignSGD_Full_App}
    d X_t = - \frac{2}{N} \sum_{i=1}^{N} \Xi_{\nu} \left( \Sigma_i^{-\frac{1}{2}} \nabla f(X_t) \right) dt + \sqrt{\frac{\eta}{N}}\sqrt{\Tilde{\Sigma}(X_t)} dW_t.
\end{equation}
where $\Xi_{\nu}(x)$ is defined as $\Xi_{\nu}(x) := x \frac{\Gamma\left(\frac{\nu+1}{2}\right)}{\sqrt{\pi \nu} \Gamma\left(\frac{\nu}{2}\right)}{ }_2 F_1\left(\frac{1}{2}, \frac{\nu+1}{2} ; \frac{3}{2} ;-\frac{x^2}{\nu}\right)$, ${ }_2 F_1\left(a, b;c; x\right)$ is the hypergeometric function, and 
\begin{equation}
    \Tilde{\Sigma}(X_t) := I_d -  \frac{4}{N}\sum_{i=1}^{N}  \left(\Xi_{\nu} \left( \Sigma_i^{-\frac{1}{2}} \nabla f(X_t) \right) \right)^2.
\end{equation}
\end{corollary}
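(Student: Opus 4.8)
The plan is to specialize the general DSignSGD SDE of \eqref{eq:DSignSGD_SDE} to the student-$t$ noise model by evaluating its drift and diffusion coefficients in closed form. Since both the sign map and the diagonal scale matrices $\Sigma_i = \diag(\sigma_{1,i}^2,\dots,\sigma_{d,i}^2)$ act coordinate-wise, I would reduce everything to a one-dimensional computation per coordinate and per agent, and then reassemble the vector drift and the diffusion matrix.

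\textbf{Drift.} For coordinate $j$ and agent $i$, write $\nabla f_{\gamma_i}(x)_j = \nabla f(x)_j + \sigma_{j,i}[Z_i]_j$, so that $\sign(\nabla f_{\gamma_i}(x)_j) = \sign([Z_i]_j + a_j)$ with $a_j \eqdef \nabla f(x)_j/\sigma_{j,i}$. Using that the marginal of $[Z_i]_j$ is a standard univariate $t_\nu$ with symmetric density $f_\nu$, the drift entry is $1 - 2\mathbb{P}(\nabla f_{\gamma_i}(x)_j < 0) = \E[\sign(\nabla f_{\gamma_i}(x)_j)] = 2\int_0^{a_j} f_\nu(t)\,dt$. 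The crux is to evaluate this integral: I would expand $(1 + t^2/\nu)^{-(\nu+1)/2}$ as a binomial series, integrate term by term, and recognize the result, via the identities $(\tfrac12)_n/(\tfrac32)_n = 1/(2n+1)$ and $\binom{-(\nu+1)/2}{n} = (-1)^n(\tfrac{\nu+1}{2})_n/n!$, as $a_j\cdot{}_2F_1(\tfrac12,\tfrac{\nu+1}{2};\tfrac32;-a_j^2/\nu)$. This shows $\int_0^{a_j} f_\nu = \Xi_\nu(a_j)$, so each drift entry equals $2\Xi_\nu(a_j)$ and, in vector form, the drift is $-\frac2N\sum_i\Xi_\nu(\Sigma_i^{-1/2}\nabla f(X_t))$, matching \eqref{eq:SDE_HSignSGD_Full_App}.

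\textbf{Diffusion.} The covariance of the centered sign $\overline\Sigma_i = \E[\xi_{\gamma_i}\xi_{\gamma_i}^\top]$ has diagonal entries obtained for free from the drift computation: since $\sign(\cdot)^2 = 1$ almost surely, $\Var(\sign(\nabla f_{\gamma_i}(x)_j)) = 1 - (\E[\sign])^2 = 1 - 4\Xi_\nu(a_j)^2$. Averaging over the $N$ agents then yields the claimed diagonal $\tilde\Sigma(X_t) = I_d - \frac4N\sum_i(\Xi_\nu(\Sigma_i^{-1/2}\nabla f(X_t)))^2$, and substituting the drift and diffusion back into \eqref{eq:DSignSGD_SDE} gives the stated SDE.

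\textbf{Main obstacle.} Two points require care. The first is the closed-form evaluation of the $t_\nu$ tail integral as a Gauss hypergeometric function: matching the integrated power series to the ${}_2F_1$ series is the real technical content, and one must verify convergence for $|a_j| < \sqrt\nu$ and invoke analytic continuation of ${}_2F_1$ so the identity holds for all $a_j\in\R$. The second, more delicate issue is the off-diagonal structure of $\overline\Sigma_i$: under a genuine multivariate $t_\nu(0,I_d)$ the coordinates are only conditionally independent given the shared $\chi^2_\nu$ scale, so the covariance between the signs of distinct coordinates is nonzero in general and $\overline\Sigma_i$ is not exactly diagonal. To land on the diagonal $\tilde\Sigma$ asserted here, I would either adopt the working assumption that the coordinate noises $[Z_i]_j$ are independent (making $\overline\Sigma_i$ exactly diagonal), or argue that the off-diagonal entries do not contribute at the weak-approximation order of Definition~\ref{def:weak_approximation}; I would state this assumption explicitly at the outset of the proof.
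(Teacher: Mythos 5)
The paper does not actually prove this statement: it is imported verbatim as Corollary C.10 of \cite{compagnoni2025unbiased}, so there is no in-paper argument to compare against. Your reconstruction is nevertheless the right one and, as far as I can check, correct: the drift entry $1-2\mathbb{P}(\nabla f_{\gamma_i}(x)_j<0)=2\int_0^{a_j}f_\nu$ with $a_j=\nabla f(x)_j/\sigma_{j,i}$, the term-by-term integration of $(1+t^2/\nu)^{-(\nu+1)/2}$ matched to the ${}_2F_1$ series via $(\tfrac12)_n/(\tfrac32)_n=1/(2n+1)$ (with analytic continuation beyond $|a_j|<\sqrt\nu$), and the diagonal variance $1-4\Xi_\nu(a_j)^2$ from $\sign(\cdot)^2=1$ are exactly the computations needed to specialize \eqref{eq:DSignSGD_SDE}.

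Your second ``obstacle'' is the one point of real substance and you are right to flag it: for a genuine multivariate $t_\nu(0,I_d)$ the coordinates share the common $\chi^2_\nu$ denominator, so conditioning on it gives $\E[\sign(Z_j+a_j)\sign(Z_k+a_k)]=\E\bigl[\E[\sign(Z_j+a_j)\mid V]\,\E[\sign(Z_k+a_k)\mid V]\bigr]$, which does not factor into the product of the unconditional means unless some $a_j=0$; hence $\overline{\Sigma_i}$ is not exactly diagonal and the stated $\Tilde{\Sigma}$ can only be its diagonal part. The corollary as written is therefore exact only under the reading that the coordinates of $Z_i$ are independent univariate $t_\nu$ variables (or that off-diagonal contributions are absorbed at the weak-approximation order of Definition~\ref{def:weak_approximation}); stating that assumption explicitly, as you propose, is the correct fix and is a caveat the paper itself leaves implicit.
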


\paragraph{\textbf{Remark (Beyond Student's $t$ noise).}}
We specialize our analysis to the Student's $t$ noise to obtain a closed-form expression for the scalar function $\Xi_\nu$ governing the drift of the DSignSGD surrogate (and hence explicit constants $M_\nu,\ell_\nu$).
More generally, the same SDE construction applies to any coordinate-wise symmetric noise with an absolutely continuous density: the drift depends on the one-dimensional CDF through $\Xi(x)\coloneqq F(x)-\tfrac12$, and the constants in the stability bound can be defined as $M\coloneqq\sup_x \Xi'(x)$ and $\ell\coloneqq 2\Xi'(0)$ whenever these quantities are finite.
The Student's $t$ family is a convenient heavy-tailed instantiation that also covers regimes with undefined mean (e.g., $\nu\le 1$).

\begin{theorem}\label{thm:sign_1}
    Let $f$ be $(L_0,L_1)$-smooth, $\eta_t$ a  learning rate scheduler such that $\phi^{(i)}_t = \int_0^t (\eta_s)^i ds$, $\phi^{(1)}_t \overset{t \rightarrow \infty}{\rightarrow} \infty$, $\frac{\phi^{(2)}_t}{\phi^{(1)}_t} \overset{t \rightarrow \infty}{\rightarrow} 0$, $\Sigma_i \leq \sigma_{\text{max}, i}^2$, $\sigma_{\mathcal{H},1}$ be the harmonic mean of $\{\sigma_{\text{max}, i} \}$, and $\ell_{\mathbf{\nu}}:= 2\Xi_{\nu}^{'}(0)>0$ a constant. Then, for a scheduler $\eta \eta_t < \frac{2 N \ell_{\nu}}{\sigma_{\mathcal{H},1} d L_1}$ and a random time $\Tilde{t}$ with distribution $\frac{\eta_t \ell_{\nu} \sigma_{\mathcal{H},1}^{-1} - \eta_t^2 \frac{\eta L_1 d}{2 N}}{\phi^{(1)}_t \ell_{\nu} \sigma_{\mathcal{H},1}^{-1} - \phi^{(2)}_t \frac{\eta L_1 d}{2 N}}$, we have that
    \begin{equation}
         \E \lVert \nabla f\left(X_{\Tilde{t}}\right)\rVert_2^2 \leq \frac{ 1}{\phi^{(1)}_t \ell_{\nu} \sigma_{\mathcal{H},1}^{-1} - \phi^{(2)}_t \frac{\eta L_1 d}{2 N}} \left( f(X_0) - f(X_*) + \frac{\eta (L_0+L_1) d \phi^{(2)}_t}{2 N} \right) \overset{t \rightarrow \infty}{\rightarrow} 0.
    \end{equation}
\end{theorem}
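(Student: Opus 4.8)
The plan is to mirror the two-phase energy argument already used for DSGD (\Cref{thm:DSGD_App}) and DCSGD, but applied to the DSignSGD model (Eq.~\ref{eq:SDE_HSignSGD_Full_App}) run with the scheduler $\eta_t$, so that the drift carries a factor $\eta_t$ and the diffusion a factor $\eta_t^2$. First I would apply Itô's Lemma to $f(X_t)-f(X_*)$; this yields a drift contribution $-\eta_t\frac{2}{N}\sum_{i=1}^N \nabla f(X_t)^\top \Xi_\nu\!\left(\Sigma_i^{-1/2}\nabla f(X_t)\right)dt$, a martingale (Itô-integral) noise term of zero expectation, and a second-order correction $\eta_t^2\frac{\eta}{2N}\Tr(\nabla^2 f(X_t)\tilde\Sigma(X_t))\,dt$. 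Taking expectations annihilates the noise term, so everything reduces to bounding the signed drift from below and the second-order term from above.

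The core estimate is the lower bound on the drift. Using $\Sigma_i\preceq\sigma_{\mathrm{max},i}^2 I_d$ (hence $\Sigma_i^{-1/2}\succeq\sigma_{\mathrm{max},i}^{-1}I_d$) together with the behaviour of the scalar profile $\Xi_\nu$ near the origin --- exactly the content carried by the constant $\ell_\nu$ from Proposition C.11 of \cite{compagnoni2025unbiased} --- I would establish
\[
\frac{2}{N}\sum_{i=1}^N \nabla f(X_t)^\top \Xi_\nu\!\left(\Sigma_i^{-1/2}\nabla f(X_t)\right)\ \ge\ \ell_\nu\,\sigma_{\mathcal{H},1}^{-1}\,\lVert\nabla f(X_t)\rVert_2^2,
\]
where the average $\frac1N\sum_i\sigma_{\mathrm{max},i}^{-1}$ collapses precisely into the reciprocal harmonic mean $\sigma_{\mathcal{H},1}^{-1}$. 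For the second-order term I would use $\tilde\Sigma(X_t)=I_d-\frac{4}{N}\sum_i(\Xi_\nu(\cdots))^2\preceq I_d$ and $(L_0,L_1)$-smoothness to get $\Tr(\nabla^2 f\,\tilde\Sigma)\le (L_0+L_1\lVert\nabla f\rVert)d$.

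I would then split into two regimes as in \Cref{thm:DSGD_App}. When $\lVert\nabla f(X_t)\rVert\le 1$ I bound $L_0+L_1\lVert\nabla f\rVert\le L_0+L_1$, producing the clean constant $\frac{\eta(L_0+L_1)d}{2N}$; when $\lVert\nabla f(X_t)\rVert>1$ I use $\lVert\nabla f\rVert\le\lVert\nabla f\rVert^2$ so the $L_1$ part of the correction becomes $\eta_t^2\frac{\eta L_1 d}{2N}\lVert\nabla f\rVert^2$ and is subtracted from the drift. Merging by a worst-case choice gives
\[
d\big(f(X_t)-f(X_*)\big)\ \le\ -\eta_t\Big(\ell_\nu\sigma_{\mathcal{H},1}^{-1}-\eta_t\tfrac{\eta L_1 d}{2N}\Big)\lVert\nabla f(X_t)\rVert_2^2\,dt+\mathcal{O}(\text{Noise})+\eta_t^2\tfrac{\eta(L_0+L_1)d}{2N}\,dt .
\]
The scheduler condition $\eta\eta_t<\frac{2N\ell_\nu}{\sigma_{\mathcal{H},1}dL_1}$ is exactly what keeps the weight $w_s:=\eta_s\ell_\nu\sigma_{\mathcal{H},1}^{-1}-\eta_s^2\frac{\eta L_1 d}{2N}$ positive. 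Integrating, taking expectations, dropping $f(X_t)-f(X_*)\ge 0$, and normalizing by $W_t:=\int_0^t w_s\,ds=\phi_t^1\ell_\nu\sigma_{\mathcal{H},1}^{-1}-\phi_t^2\frac{\eta L_1 d}{2N}$ lets me read $w_s/W_t$ as the density of $\tilde t$ and invoke the Law of the Unconscious Statistician to obtain the stated bound; the $t\to\infty$ limit follows from $\phi_t^1\to\infty$ and $\phi_t^2/\phi_t^1\to 0$.

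The main obstacle is the drift lower bound: quantifying $\nabla f^\top\Xi_\nu(\Sigma_i^{-1/2}\nabla f)$ requires controlling the hypergeometric profile $\Xi_\nu$ and extracting the harmonic-mean constant $\ell_\nu\sigma_{\mathcal{H},1}^{-1}$. A subtlety is that, because $\Xi_\nu$ saturates, the quadratic lower bound is genuinely a small-gradient (Phase~1) statement rather than a global one, so some care is needed to ensure the large-gradient regime does not break the merged inequality --- this is precisely where leaning on Proposition C.11 and its constant $\ell_\nu$ does the heavy lifting.
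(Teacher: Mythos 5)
Your proposal follows essentially the same route as the paper's proof: Itô's Lemma, the drift lower bound $-\ell_{\nu}\sigma_{\mathcal{H},1}^{-1}\eta_t\lVert\nabla f(X_t)\rVert_2^2$ via Proposition C.11, the bound $\Tr(\nabla^2 f\,\tilde\Sigma)\le (L_0+L_1\lVert\nabla f\rVert)d$, the two-phase split at $\lVert\nabla f\rVert_2=1$, the worst-case merge, and the reweighted random time $\tilde t$ with the Law of the Unconscious Statistician. Your write-up is in fact more explicit than the paper's (which starts directly from the merged Itô inequality), and your closing remark about the saturation of $\Xi_{\nu}$ correctly identifies the one step that both you and the paper delegate entirely to the cited Proposition C.11.
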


\begin{proof}
By Itô Lemma on $f(X_t) - f(X_*)$, we have that for $\mathcal{O}(\text{Noise}) = \sqrt{\Bar{\Sigma}(X_t)} \nabla f(X_t) d W_t$,
\begin{align}
    d(f(X_t) - f(X_*)) & \leq - \ell_{\nu} \sigma_{\mathcal{H},1}^{-1} \eta_t \lVert \nabla f(X_t) \rVert_2^2 dt  + \frac{\eta \eta_t^2 d}{2 N} (L_0 + L_1 \lVert \nabla f(X_t) \rVert_2 ) dt +  \mathcal{O}(\text{Noise})
\end{align}
\textbf{Phase 1: } $\lVert \nabla f(X_t) \rVert_2 \leq 1$:
\begin{align}
    d(f(X_t) - f(X_*)) & \leq - \ell_{\nu} \sigma_{\mathcal{H},1}^{-1} \eta_t \lVert \nabla f(X_t) \rVert_2^2 dt  + \frac{\eta \eta_t^2 d}{2 N} (L_0 + L_1  ) dt +  \mathcal{O}(\text{Noise}).
\end{align}
\textbf{Phase 2: } $\lVert \nabla f(X_t) \rVert_2 > 1$:
\begin{align}
    d(f(X_t) - f(X_*)) & \leq - \ell_{\nu} \sigma_{\mathcal{H},1}^{-1} \eta_t \lVert \nabla f(X_t) \rVert_2^2 dt  + \frac{\eta \eta_t^2 d L_1 \lVert \nabla f(X_t) \rVert_2^2}{2 N} + \frac{\eta \eta_t^2 d L_0}{2 N}  dt +  \mathcal{O}(\text{Noise}).
\end{align}
By taking the worst case of these two phases, we have that 
\begin{align}
    d(f(X_t) - f(X_*)) & \leq - \ell_{\nu} \sigma_{\mathcal{H},1}^{-1} \eta_t \lVert \nabla f(X_t) \rVert_2^2 dt + \frac{\eta \eta_t^2 d L_1 \lVert \nabla f(X_t) \rVert_2^2}{2 N} dt + \frac{\eta \eta_t^2 d}{2 N} (L_0 + L_1  ) dt +  \mathcal{O}(\text{Noise}).
\end{align}
With arguments that follow the same steps we detailed in the proof of Theorem \ref{thm:DSGD_App}, we have that
\begin{equation}
     \E \lVert \nabla f\left(X_{\Tilde{t}}\right)\rVert_2^2 \leq \frac{ 1}{\phi^{(1)}_t \ell_{\nu} \sigma_{\mathcal{H},1}^{-1} - \phi^{(2)}_t \frac{d \eta L_1}{2 N}} \left( f(X_0) - f(X_*) + \frac{\eta (L_0+L_1)  d \phi^{(2)}_t}{2 N} \right) \overset{t \rightarrow \infty}{\rightarrow} 0.
\end{equation}
\end{proof}

\subsubsection{Our New First-Order SDE for DSignSGD}

The following is the \textit{first}-order SDE model of DSignSGD and is a straightforward generalization of Corollary C.10 in \cite{compagnoni2025unbiased} and Remark \ref{rem:NewModels}. We observe that $\Xi_{\nu}^{'}(x)$ is bounded by the positive finite constant $M_{\nu}$.
\begin{align}\label{eq:SDE_HSignSGD_Full_App_SO}
    d X_t & = - \frac{2}{N} \sum_{i=1}^{N} \Xi_{\nu} \left( \Sigma_i^{-\frac{1}{2}} \nabla f(X_t) \right) dt \nonumber \\
    & + \frac{\eta}{N} \sum_{i=1}^{N} \Sigma_i^{-\frac{1}{2}} \nabla^2 f(X_t) \left(  \Xi_{\nu}^{'} \left( \Sigma_i^{-\frac{1}{2}} \nabla f(X_t) \right) \circ \Xi_{\nu} \left( \Sigma_i^{-\frac{1}{2}} \nabla f(X_t) \right)\right) dt \nonumber \\
    & + \sqrt{\frac{\eta}{N}}\sqrt{\Tilde{\Sigma}(X_t)} dW_t.
\end{align}

\begin{theorem}\label{thm:DSignSGD_App}
    Let $f$ be $(L_0,L_1)$-smooth, $\Sigma_i \leq \sigma_{\text{max}, i}^2$, $\sigma_{\mathcal{H},1}$ be the harmonic mean of $\{\sigma_{\text{max}, i} \}$, $M_{\nu}:=\sup \{\Xi_{\nu}^{'}(x)\}>0$ and $\ell_{\mathbf{\nu}}:=2\Xi_{\nu}^{'}(0)>0$ constants, and $K \eqdef \left(\frac{L_1}{2 N} + \frac{(L_0+L_1)\sigma_{\mathcal{H},1}^{-1} M_{\nu}}{\sqrt{d}}\right)$. Then, for a scheduler $\eta \eta_t <\frac{\ell_{\nu} K^{-1}}{\sigma_{\mathcal{H},1} d} $ and a random time $\Tilde{t}$ with distribution $\frac{\eta_t \ell_{\nu} \sigma_{\mathcal{H},1}^{-1} - \eta_t^2 K}{\phi^{(1)}_t \ell_{\nu} \sigma_{\mathcal{H},1}^{-1} - \phi^{(2)}_t K}$, we have that
\begin{equation}
     \E \lVert \nabla f\left(X_{\Tilde{t}}\right)\rVert_2^2 \leq \frac{ 1}{\phi^{(1)}_t \ell_{\nu} \sigma_{\mathcal{H},1}^{-1} - \phi^{(2)}_t K} \left( f(X_0) - f(X_*) + \phi^{(2)}_t \eta (L_0+L_1)d \left( \frac{1}{2 N} + \frac{M_{\nu}}{\sigma_{\mathcal{H},1} \sqrt{d}} \right) \right) \overset{t \rightarrow \infty}{\rightarrow} 0.
\end{equation}
\end{theorem}
\begin{proof}
By Itô Lemma on $f(X_t) - f(X_*)$, we have that {for $\mathcal{O}(\text{Noise}) = \sqrt{\Bar{\Sigma}(X_t)} \nabla f(X_t) d W_t$,}
\begin{align}
    d(f(X_t) - f(X_*)) & \leq - \ell_{\nu} \sigma_{\mathcal{H},1}^{-1} \eta_t \lVert \nabla f(X_t) \rVert_2^2 dt + \eta \eta_t^2 \sigma_{\mathcal{H},1}^{-1} (L_0 + L_1 \lVert \nabla f(X_t) \rVert_2 ) M_{\nu} \lVert \nabla f(X_t) \rVert_1 dt \\
    & + \frac{\eta \eta_t^2 d}{2 N} (L_0 + L_1 \lVert \nabla f(X_t) \rVert_2 ) dt +  \mathcal{O}(\text{Noise}).
\end{align}
\textbf{Phase 1: } $\lVert \nabla f(X_t) \rVert_2 \leq 1$:
\begin{align}
    d(f(X_t) - f(X_*)) & \leq - \ell_{\nu} \sigma_{\mathcal{H},1}^{-1} \eta_t \lVert \nabla f(X_t) \rVert_2^2 dt + \eta \eta_t^2 \sigma_{\mathcal{H},1}^{-1} (L_0 + L_1  ) M_{\nu} \sqrt{d} dt \\
    & + \frac{\eta \eta_t^2 d}{2 N} (L_0 + L_1  ) dt +  \mathcal{O}(\text{Noise}).
\end{align}
\textbf{Phase 2: } $\lVert \nabla f(X_t) \rVert_2 > 1$:
Since $\lVert \nabla f(X_t) \rVert_1 < \sqrt{d}\lVert \nabla f(X_t) \rVert_2 < \sqrt{d} \lVert \nabla f(X_t) \rVert_2^2$, we have that
\begin{align}
    d(f(X_t) - f(X_*)) & \leq - \ell_{\nu} \sigma_{\mathcal{H},1}^{-1} \eta_t \lVert \nabla f(X_t) \rVert_2^2 dt + \eta \eta_t^2 \sigma_{\mathcal{H},1}^{-1} (L_0 + L_1  ) M_{\nu} \sqrt{d} \lVert \nabla f(X_t) \rVert_2^2 dt \\
    & + \frac{\eta \eta_t^2 d L_1 \lVert \nabla f(X_t) \rVert_2^2}{2 N} + \frac{\eta \eta_t^2 d L_0}{2 N}  dt +  \mathcal{O}(\text{Noise}).
\end{align}
By taking the worst case of these two phases, we have that 
\begin{align}
    d(f(X_t) - f(X_*)) & \leq - \ell_{\nu} \sigma_{\mathcal{H},1}^{-1} \eta_t \lVert \nabla f(X_t) \rVert_2^2 dt + \eta \eta_t^2 \sigma_{\mathcal{H},1}^{-1} (L_0 + L_1  ) M_{\nu} \sqrt{d} \lVert \nabla f(X_t) \rVert_2^2 dt \\
    & + \frac{\eta \eta_t^2 d L_1 \lVert \nabla f(X_t) \rVert_2^2}{2 N} dt + \eta \eta_t^2(L_0+L_1)d \left( \frac{1}{2 N} + \frac{M_{\nu}}{\sigma_{\mathcal{H},1} \sqrt{d}} \right)  dt +  \mathcal{O}(\text{Noise}).
\end{align}
With arguments that follow the same steps we detailed in the proof of Theorem \ref{thm:DSGD_App}, we have that
\begin{equation}
     \E \lVert \nabla f\left(X_{\Tilde{t}}\right)\rVert_2^2 \leq \frac{ 1}{\phi^{(1)}_t \ell_{\nu} \sigma_{\mathcal{H},1}^{-1} - \phi^{(2)}_t K} \left( f(X_0) - f(X_*) + \phi^{(2)}_t \eta (L_0+L_1)d \left( \frac{1}{2 N} + \frac{M_{\nu}}{\sigma_{\mathcal{H},1} \sqrt{d}} \right) \right) \overset{t \rightarrow \infty}{\rightarrow} 0.
\end{equation}
\end{proof}

\subsection{Limitations}
\label{sec:limitations}

Our analysis focuses on \emph{homogeneous} client distributions to isolate the effects of noise, compression, and adaptivity without the additional complexity of data heterogeneity. Our implementation-oriented normalization discussion also assumes a server-aggregated topology, in which a central server collects client updates and can aggregate a scalar norm estimate. Fully decentralized topologies are outside the scope of this work and would require separate communication and estimation mechanisms. Extending the results to heterogeneous settings---where clients may have different tail indices, variance structures, or asymmetric noise distributions---is an important direction for future work. We also restrict attention to \emph{unbiased} and \emph{signed} gradient compression, while many practical distributed optimizers employ general \emph{biased} compressors or use \emph{error-feedback} mechanisms to recover convergence guarantees. Extending our SDE framework to these settings would require augmenting the continuous-time state with the error-feedback memory, or introducing suitable bias-correction terms in the drift.

Additionally, we focus on finite-sum minimization, as is common in the SDE literature for stochastic optimization~\citep{jastrzkebski2017three}, and do not tackle questions related to generalization~\citep{smith2020sde}.

Finally, our contribution is intentionally foundational: rather than proposing new optimizers, we build a rigorous, unified framework that captures the joint effects of noise, compression, and adaptivity for distributed methods under $(L_0,L_1)$-smoothness. We view this work as a basis for future extensions (e.g., heterogeneous clients, error-feedback, and general biased compressors) and for subsequent analyses that further systematize large-scale stochastic optimization.

\paragraph{Acknowledgments.} 
We acknowledge the use of OpenAI's ChatGPT as a writing assistant to help us rephrase and refine parts of the manuscript. All technical content, derivations, and scientific contributions remain the sole responsibility of the authors.

\section{Experiments}\label{sec:Exper}

Our experiments are intentionally minimalistic: They are designed to validate the fidelity of the derived insights and to illustrate the qualitative phenomena predicted by our theory, rather than to benchmark performance on specific tasks. This aligns with the theoretical nature of our contribution.

\subsection{DCSGD - Figure \ref{fig:InsightValidation} - (Left Column)}

We optimize $f(x) = \frac{\sum_{j=1}^{1000}(x_j)^4}{4}$ as we inject Gaussian noise with mean $0$ and variance $\sigma^2 \lVert \nabla f(x) \rVert_2^2$ on the gradient. The learning rate is $\eta = 0.1$, $\sigma = 0.1$, we use \textit{random sparsification} with $\omega\in\{4,8,16 \}$, and we average over $1000$ runs. In the top figure, we use no scheduler, while in the bottom one we use a scheduler as per Eq. \ref{eq:StepDCSGD_UV}.

\subsection{DSignSGD - Figure \ref{fig:InsightValidation} - (Right Column)}

We optimize $f(x) = \frac{x^4}{4}$ as we inject student's t noise with $\nu=1$ and scale parameters $\sigma$ on the gradient. The learning rate is $\eta = 0.1$, $\sigma \in \{0.25,0.5,1,2,8,16\}$, and we average over $10000$ runs. In the top figure, we use no scheduler, while in the bottom one we use a scheduler as per Theorem \ref{thm:sign}, e.g. $\eta_t = \frac{1}{\sqrt{t+1}}$.

\subsection{Experiments on a MLP (Figure \ref{fig:InsightValidation_MLP})}

\paragraph{Architecture, data, and metric.}
We consider fully connected neural network $h_{\theta}: \R^{20} \to \R$ with a single hidden layer of width $64$ and ReLU activation. The network has the form
\begin{equation}
h_{\theta}(x) = W_2 \,\phi(W_1 x + b_1) + b_2,
\end{equation}
where $W_1 \in \R^{64 \times 20}$, $W_2 \in \R^{1 \times 64}$, $b_1 \in \R^{64}$, $b_2 \in \R$, and $\phi$ denotes the ReLU activation. We collect all parameters into a single vector $\theta \in \R^{d}$. The loss is the mean squared error
\begin{equation}
f(\theta)
=
\frac{1}{n}\sum_{i=1}^n \bigl(h_{\theta}(x_i) - y_i\bigr)^2,
\end{equation}
with a fixed dataset of $n = 4096$ examples. Inputs are $x_i \in \R^{20}$ sampled as $x_i \sim \mathcal N(0, I_{20})$, and labels are generated from a linear teacher
\begin{equation}
y_i = x_i^{\top} w_\star + \varepsilon_i,
\qquad
w_\star \sim \mathcal N(0, I_{20}),\quad
\varepsilon_i \sim \mathcal N(0, 0.1^2).
\end{equation}
The dataset $(x_i,y_i)_{i=1}^n$ is sampled once and reused for all methods and repetitions, while the number of clients is $N=8$. At iteration $t$, we compute the full-batch gradient $g_t = \nabla_{\theta} f(\theta_t) \in \R^{d}$ and monitor the quantity
\begin{equation}
\|g_t\|_2^2.
\end{equation}
For each setting, we approximate the expectation by averaging this quantity over multiple independent runs.

\begin{figure}[ht!]
   \hspace{.2cm}
    {\includegraphics[width=0.48\linewidth]{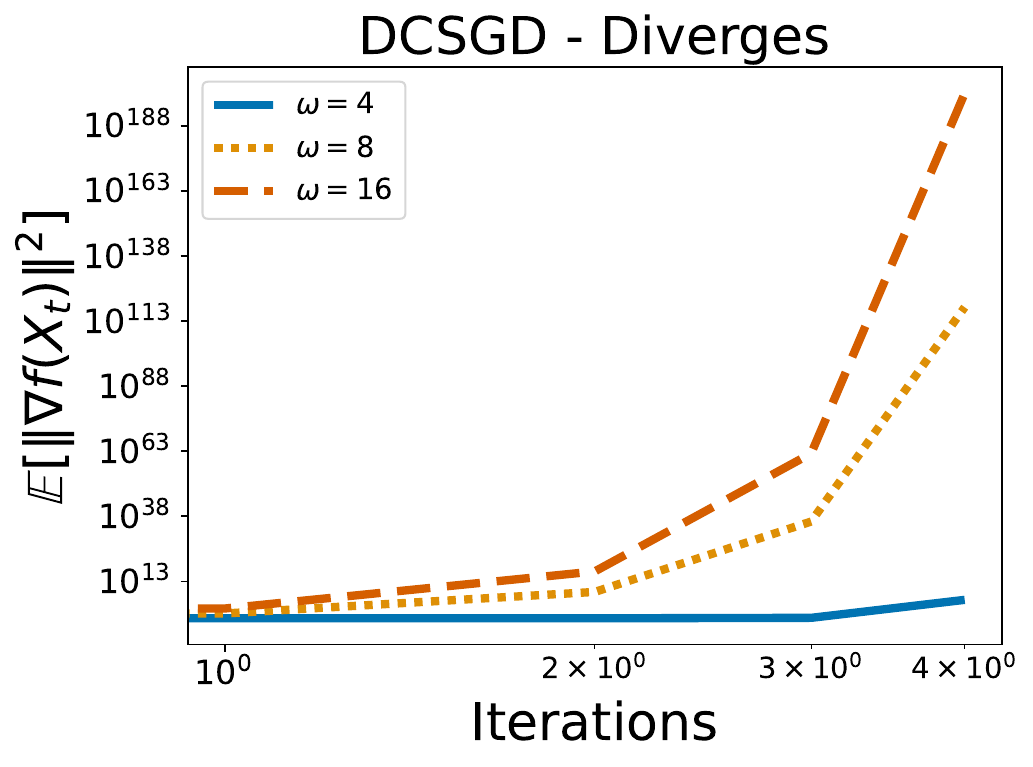} }%
    {\includegraphics[width=0.47\linewidth]{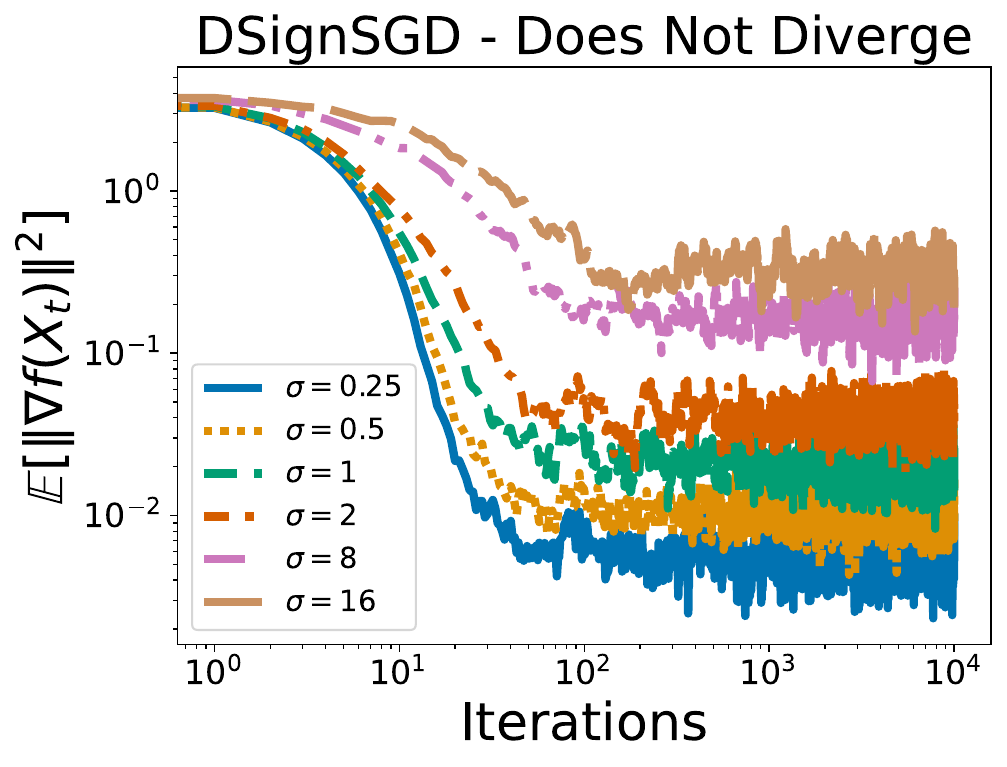} } \\
    {\includegraphics[width=0.48\linewidth]{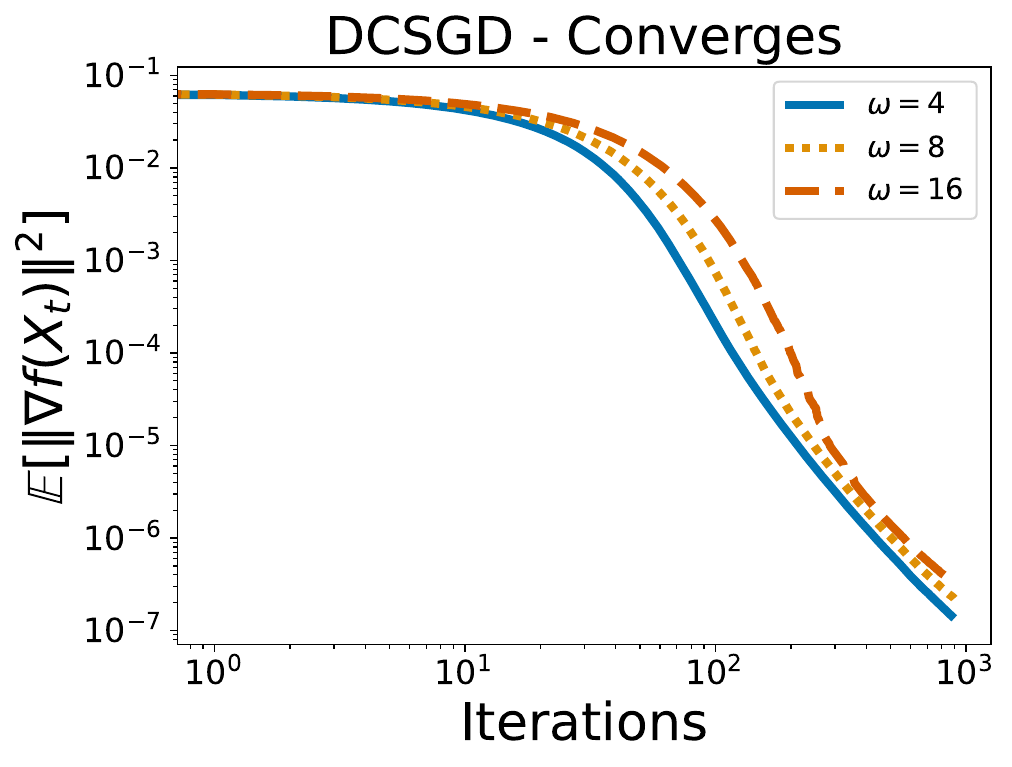} }%
    {\includegraphics[width=0.48\linewidth]{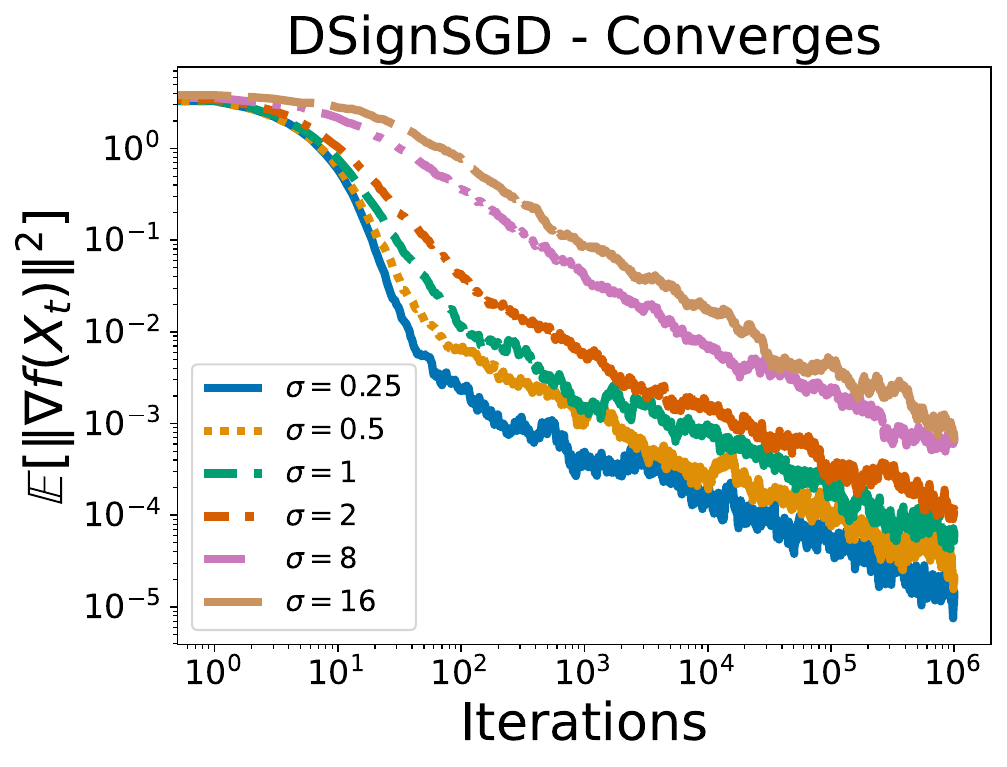} }
    \centering
    \caption{We optimize $f(x) = \frac{\sum_{j=1}^{1000}(x_j)^4}{4}$ with batch noise of variance $\sigma^2 \lVert \nabla f(x) \rVert_2^2$ and use \textit{Random Sparsification} for different compression rates $\omega$: as per Thm. \ref{thm:DCSGD_2}, DCSGD diverges faster and faster for larger values of $\omega$ when the normalization proposed in Eq. \ref{eq:StepDCSGD_UV} \textbf{is not employed} (Top-Left) but always converges if it \textbf{is employed} (Bottom-Left).  We optimize $f(x) = \frac{x^4}{4}$ with batch noise of \textbf{unbounded expected value} and for different \textit{scale parameters} $\sigma$: DSignSGD does not converge to $0$ \textit{without} a proper learning rate scheduler as prescribed by Thm. \ref{thm:sign} (Top-Right), but does converge \textit{with} (Bottom-Right). See Appendix \ref{sec:Exper} for all implementation details.}%
    \label{fig:InsightValidation}%
\end{figure}

\paragraph{Noise injection and compression.}
In all experiments, we inject additive Gaussian gradient noise and then apply random sparsification. At each iteration $t$, we sample $Z_t \sim \mathcal N(0, \sigma^2 \|g_t\|_2^2 I_d)$ with $\sigma = 0.1$ and form the noisy gradient $g_t + Z_t$. For a given sparsification probability $p \in \{\frac{4}{5}, \frac{8}{9}, \frac{16}{17}\}$ we draw an i.i.d. mask $m_t \in \{0,1\}^d$ with
\begin{equation}
\mathbb P\bigl[(m_t)_i = 1\bigr] = 1 - p,
\end{equation}
and define the unbiased random sparsifier
\begin{equation}
C_p(v) = \frac{v \odot m_t}{1-p},
\end{equation}
so that $\E[C_p(v)\mid v] = v$. In the plots, we simply label these three compression levels as $\omega \in \{4,8,16\}$, with larger $\omega$ corresponding to more aggressive sparsification (larger $p$).

\paragraph{DCSGD without scheduler}
Here, we use DCSGD with learning rate $\eta = 0.01$, noise level $\sigma = 0.1$, and sparsification probabilities $p \in \{\frac{4}{5}, \frac{8}{9}, \frac{16}{17}\}$. For each value of $p$, we run $T_{\mathrm{div}} = 10$ iterations and repeat the experiment over $n_{\mathrm{runs}}^{\mathrm{div}} = 100$ independent initializations of the MLP. We report the average of $\|g_t\|_2^2$ over these runs as a function of the iteration index.

\paragraph{DCSGD with our scheduler}
Here, we use DCSGD with learning rate $\eta = 0.01$ and use an adaptive scheduler $\eta_t$ as per Eq. \ref{eq:StepDCSGD_UV} where $\sigma_0=0$, and we assume $L_0=L_1=1$, because these constants are not actually known. As before, we use $\sigma = 0.1$ and $p \in \{\frac{4}{5}, \frac{8}{9}, \frac{16}{17}\}$. For each value of $p$, we run $T_{\mathrm{conv}} = 50000$ iterations and repeat the experiment over $n_{\mathrm{runs}}^{\mathrm{conv}} = 5$ independent initializations. We then average $\|g_t\|_2^2$ over these runs.

\paragraph{Normalized SGD with compression}
This experiment provides a baseline where we apply plain Normalized DCSGD, i.e., we normalize the compressed gradient. We use learning rate $\eta = 0.01$, noise level $\sigma = 0.1$, sparsification probabilities $p \in \{\frac{4}{5}, \frac{8}{9}, \frac{16}{17}\}$, and a small constant $\varepsilon = 10^{-8}$ added for numerical stability. The horizon and number of runs are the same as in the convergent DCSGD experiment, that is $T_{\mathrm{conv}} = 50000$ iterations and $n_{\mathrm{runs}}^{\mathrm{conv}} = 5$ independent initializations for each value of $p$. As before, we track and report the averaged trajectories of $\|g_t\|_2^2$.

\paragraph{DSignSGD}
Here, we apply DSignSGD as we inject Student's t noise with $\nu=1$ and scale parameters $\sigma$ on the gradient. The learning rate is $\eta = 0.01$, $\sigma \in \{0.25,0.5,1,2,8,16\}$, and we average over $5$ runs. In the top figure, we use no scheduler, while in the bottom one we use a scheduler as per Theorem \ref{thm:sign}, $\eta_t = \frac{1}{\sqrt{t+1}}$. As before, we track and report the averaged trajectories of $\|g_t\|_2^2$.

\subsection{Experiments on ResNet-18 and ViT (Figures \ref{fig:InsightValidation_ResNet18} and \ref{fig:InsightValidation_ViT})}
\label{sec:additional_cifar_experiments}

\paragraph{Architecture, data, and metric.}
We complement the MLP experiments with additional sanity checks on ResNet-18 and a simple ViT models trained on CIFAR-10 in a distributed setting with $N=8$ clients. CIFAR-10 is split uniformly across clients, and all curves are averaged over $5$ seeds. As in the previous section, these experiments are not intended as competitive benchmarks. Their goal is to test whether the qualitative mechanisms predicted by the theory persist beyond the synthetic and MLP settings. At iteration $t$, we monitor the squared norm of the gradient,
\begin{equation}
\|g_t\|_2^2,
\end{equation}
and report its averaged trajectory across independent runs.

\paragraph{Noise injection and compression.}
For the DCSGD experiments, we use the same noisy compressed-gradient protocol as in the MLP setting, with Gaussian noise level $\sigma=0.1$ and learning rate $\eta=0.1$. At each iteration $t$, we inject additive Gaussian gradient noise with affine variance,
\begin{equation}
    Z_t \sim \mathcal N(0,\sigma^2\|g_t\|_2^2 I_d),
\end{equation}
and then apply unbiased random sparsification at compression levels $\omega\in\{4,8,16\}$, where larger $\omega$ corresponds to more aggressive sparsification. We compare DCSGD without any scheduler/normalization against DCSGD with the adaptive normalization suggested by Thm.~\ref{thm:DCSGD_2} through Eq.~\ref{eq:StepDCSGD_UV}. We also report a baseline that applies plain Normalized SGD under the same noise and compression.

\paragraph{DCSGD.}
Without any scheduler or normalization, DCSGD becomes unstable on both ResNet-18 and ViT, and the divergence worsens as the compression level $\omega$ increases. In contrast, using the adaptive normalization prescribed by Eq.~\ref{eq:StepDCSGD_UV} stabilizes training and yields convergence for all tested values of $\omega$. The plain Normalized SGD baseline is also stabilized by normalization, but exhibits a less stable profile in these experiments.

\begin{figure}[ht!]
   \hspace{.2cm}
    {\includegraphics[width=0.48\linewidth]{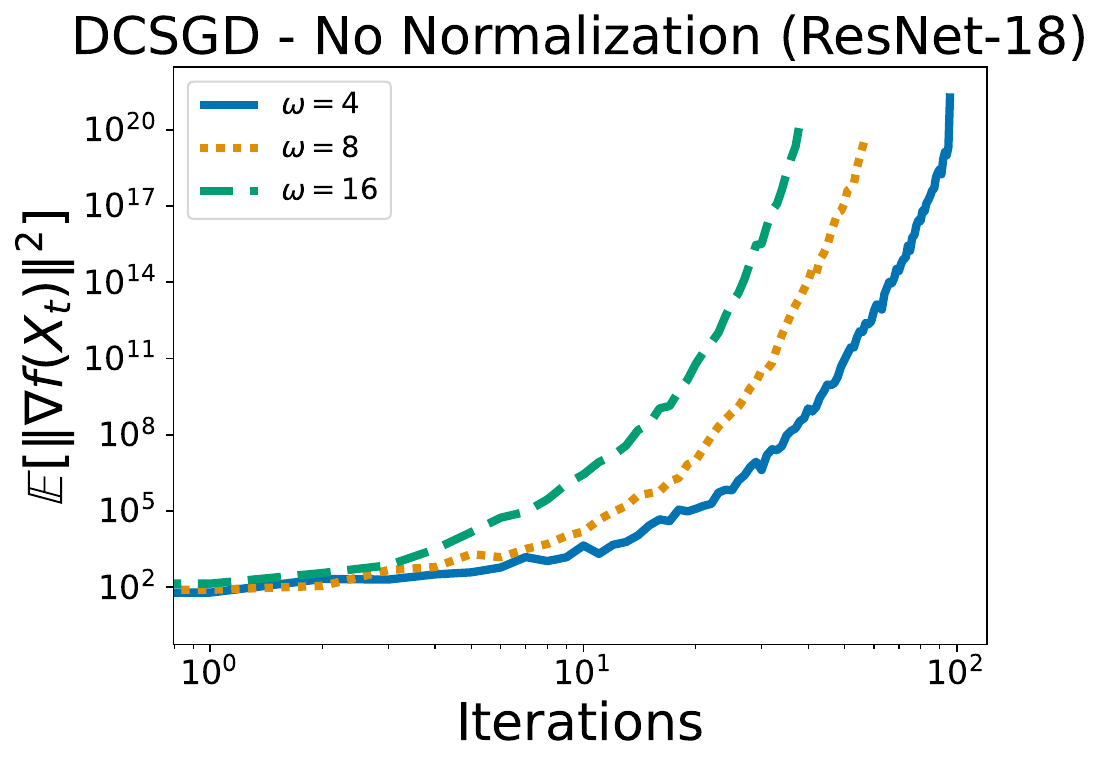} }%
    {\includegraphics[width=0.48\linewidth]{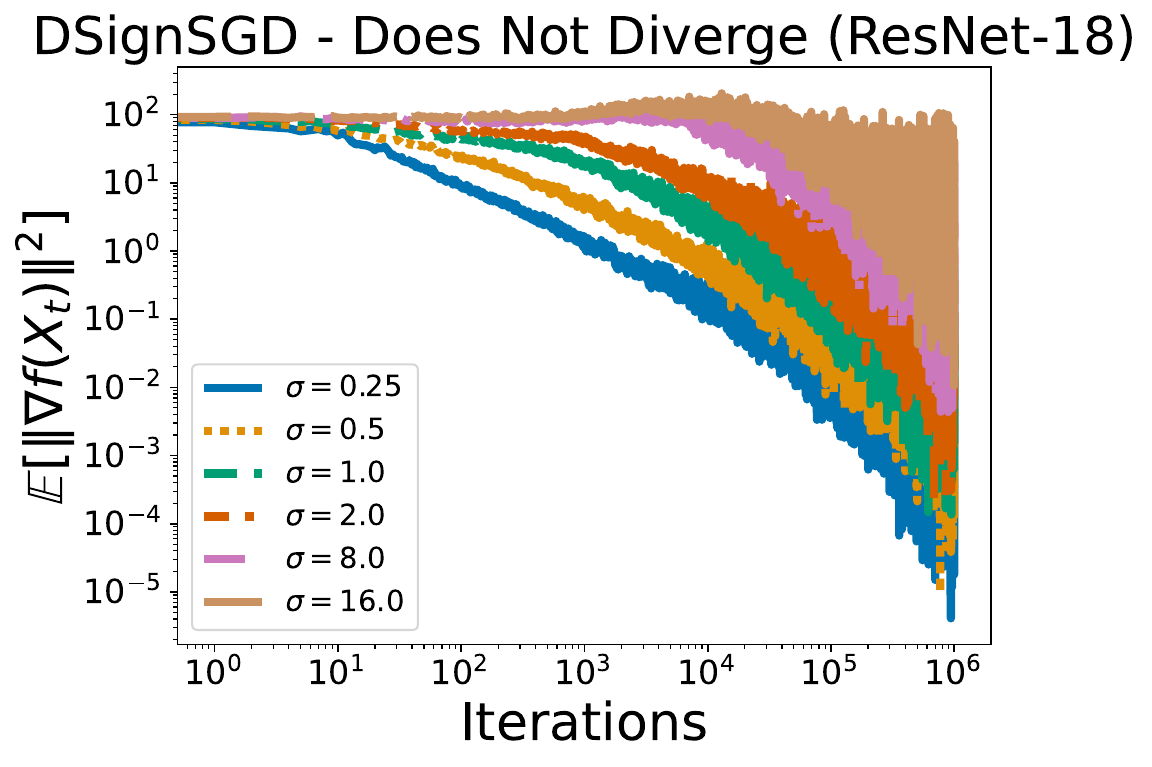} } \\
    {\includegraphics[width=0.48\linewidth]{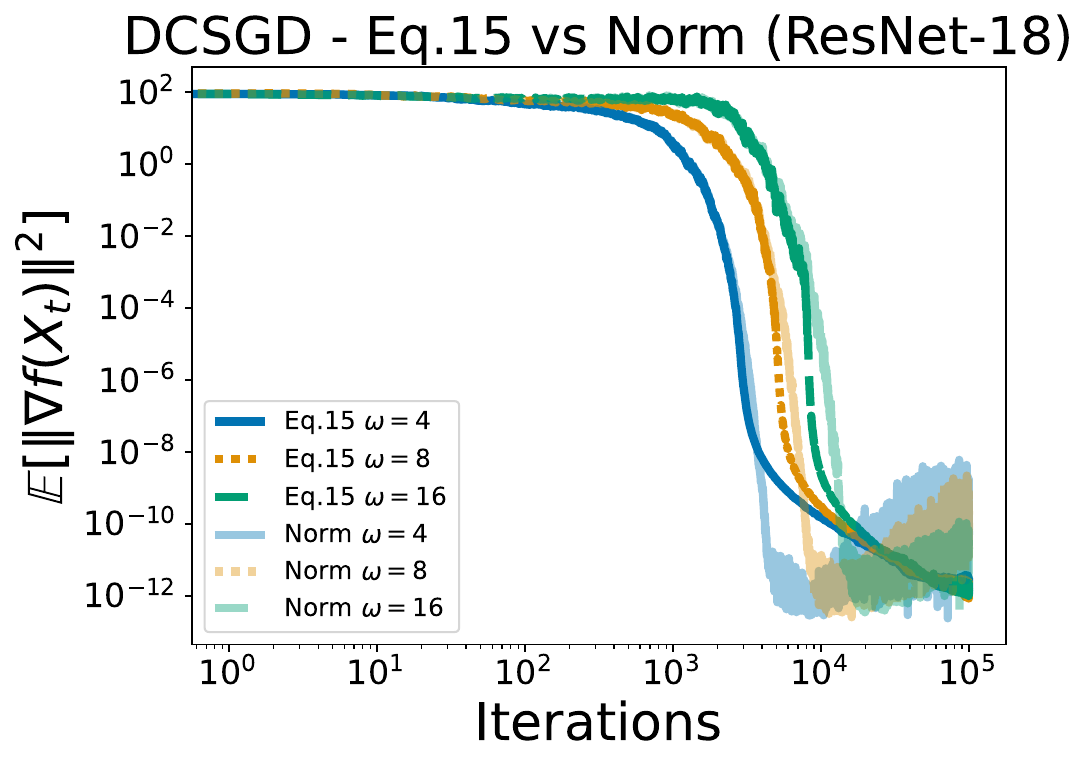} }%
    {\includegraphics[width=0.48\linewidth]{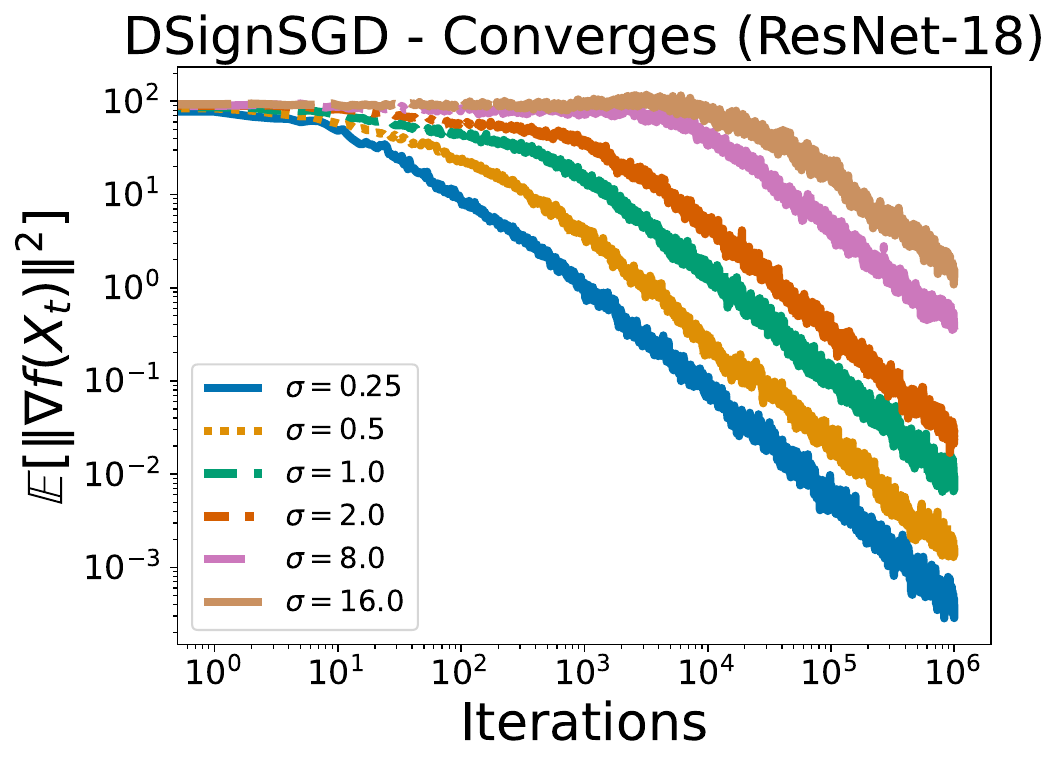} }
    \centering
    \caption{Sanity checks for our stability prescriptions on noisy, compressed ResNet-18 training on CIFAR-10 with $N=8$ clients. For DCSGD with unbiased sparsification, we inject additive Gaussian gradient noise with affine variance $Z_t\sim\mathcal{N}(0,\sigma^2\|g_t\|_2^2 I)$ and then apply random sparsification at compression levels $\omega$. Without any scheduler or normalization, DCSGD becomes unstable and the divergence worsens as $\omega$ increases (Top-Left). Using the adaptive normalization suggested by Thm.~\ref{thm:DCSGD_2} through Eq.~\ref{eq:StepDCSGD_UV} stabilizes training and yields convergence for all tested $\omega$ (Bottom-Left). We also report a baseline that applies plain Normalized SGD under the same noise and compression, which exhibits a less stable profile. For DSignSGD under heavy-tailed noise, we inject Student's t gradient noise with $\nu=1$ and different scale values $\sigma$. With a constant stepsize, DSignSGD remains stable but does not converge to zero (Top-Right), whereas the diminishing schedule prescribed by Thm.~\ref{thm:sign}, here $\eta_t=1/\sqrt{t+1}$, yields convergence across noise scales (Bottom-Right).}
    \label{fig:InsightValidation_ResNet18}
\end{figure}

\paragraph{DSignSGD.}
For the DSignSGD experiments, we use learning rate $\eta=0.001$ and inject Student's t gradient noise with $\nu=1$ and scale parameters $\sigma\in\{0.25,0.5,1,2,8,16\}$. With a constant stepsize, DSignSGD remains stable but does not converge to zero. With the diminishing schedule prescribed by Thm.~\ref{thm:sign}, namely $\eta_t = 1/\sqrt{t+1}$, DSignSGD converges across the tested noise scales. These results are consistent with the prediction that DSignSGD inherits an implicit normalization from the sign map, whereas DCSGD requires an explicit noise- and compression-dependent normalization to remain stable.

\begin{figure}[ht!]
   \hspace{.2cm}
    {\includegraphics[width=0.48\linewidth]{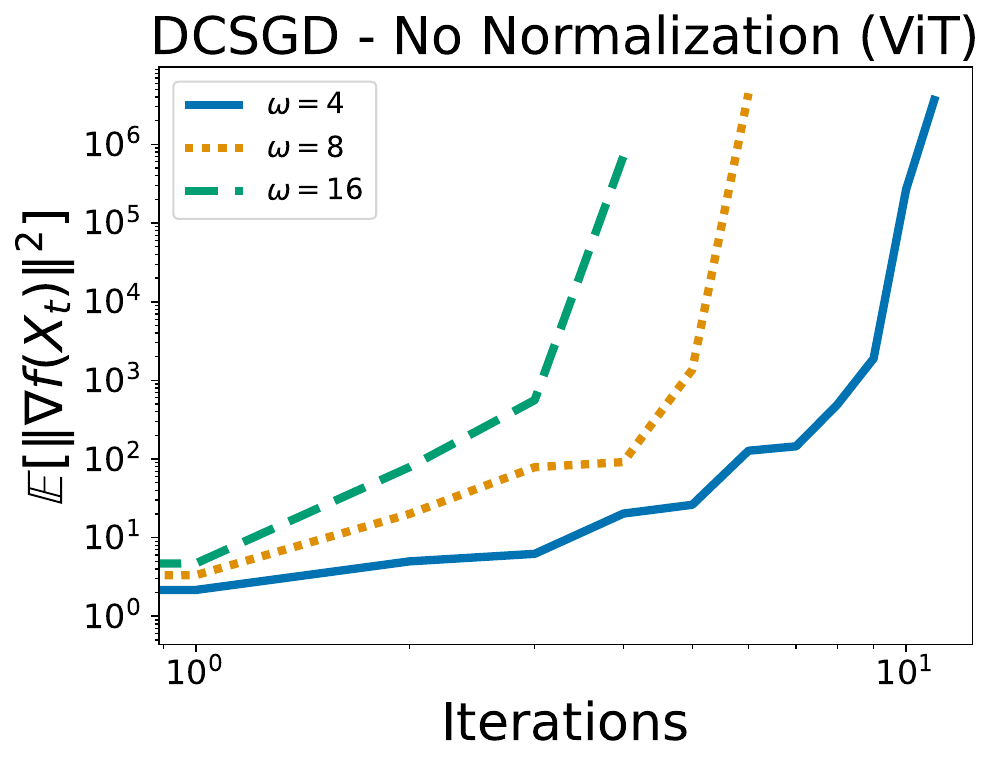} }%
    {\includegraphics[width=0.48\linewidth]{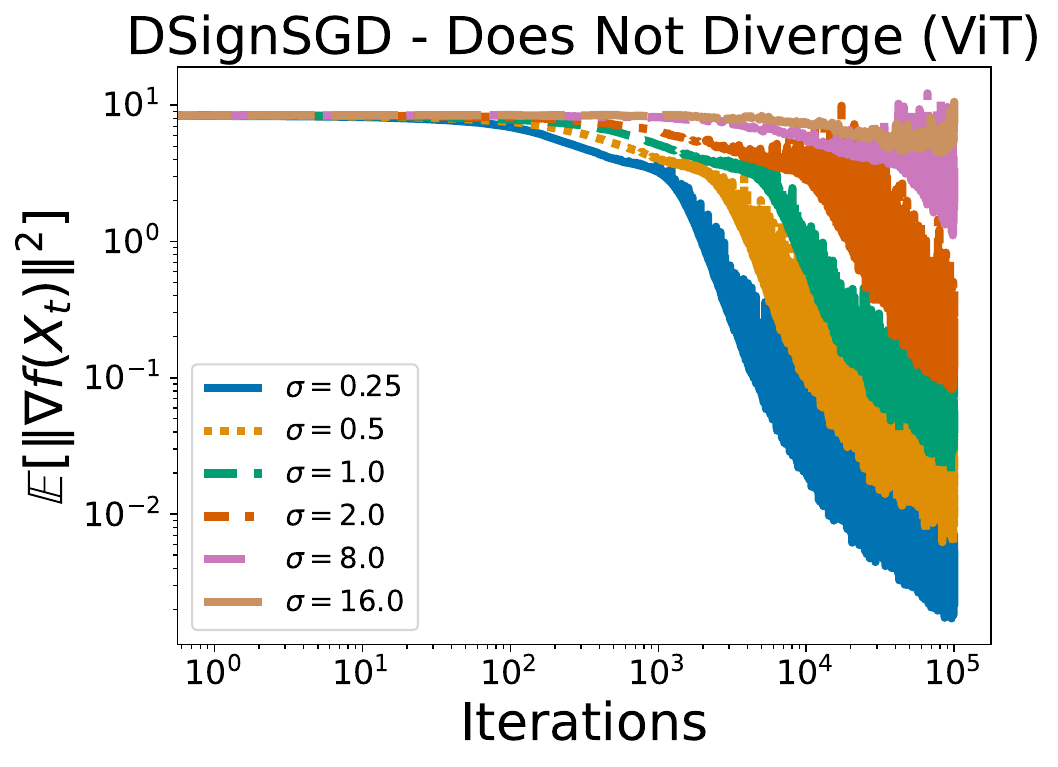} } \\
    {\includegraphics[width=0.48\linewidth]{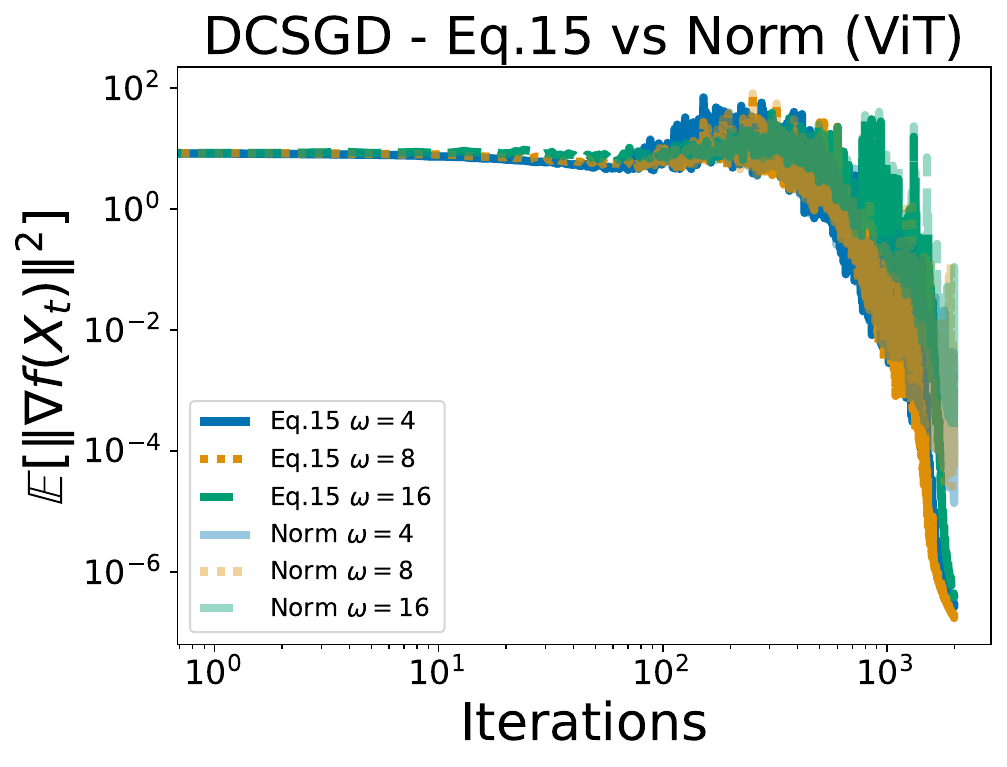} }%
    {\includegraphics[width=0.48\linewidth]{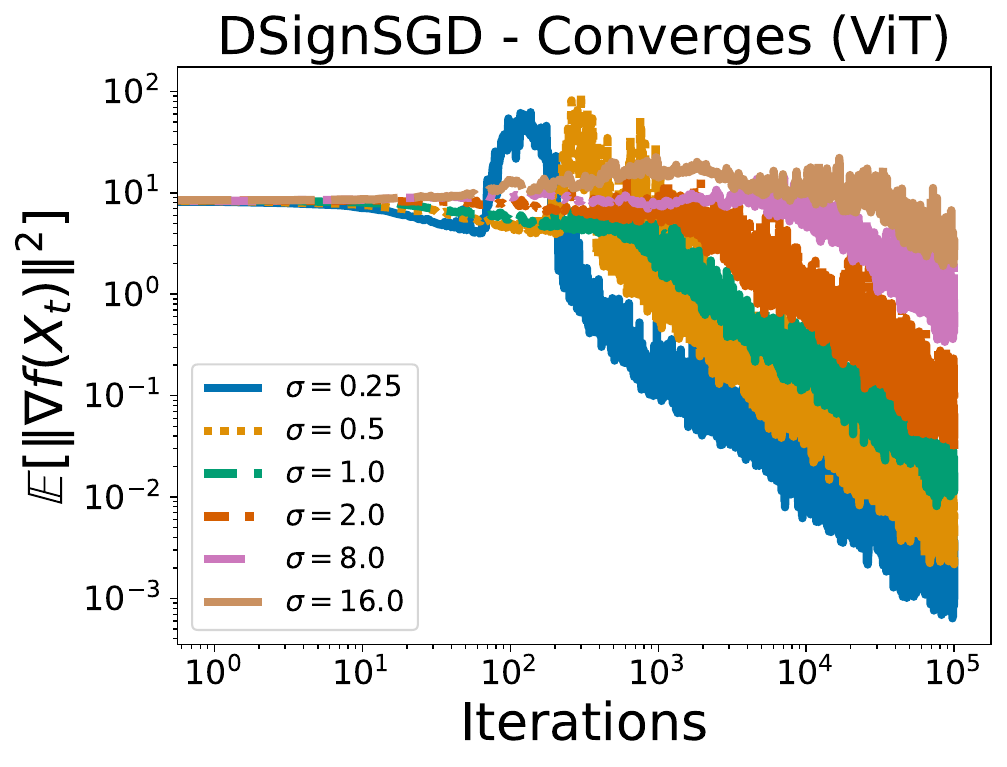} }
    \centering
    \caption{Sanity checks for our stability prescriptions on noisy, compressed ViT training on CIFAR-10 with $N=8$ clients. For DCSGD with unbiased sparsification, we inject additive Gaussian gradient noise with affine variance $Z_t\sim\mathcal{N}(0,\sigma^2\|g_t\|_2^2 I)$ and then apply random sparsification at compression levels $\omega$. Without any scheduler or normalization, DCSGD becomes unstable and the divergence worsens as $\omega$ increases (Top-Left). Using the adaptive normalization suggested by Thm.~\ref{thm:DCSGD_2} through Eq.~\ref{eq:StepDCSGD_UV} stabilizes training and yields convergence for all tested $\omega$ (Bottom-Left). We also report a baseline that applies plain Normalized SGD under the same noise and compression, which exhibits a less stable profile. For DSignSGD under heavy-tailed noise, we inject Student's t gradient noise with $\nu=1$ and different scale values $\sigma$. With a constant stepsize, DSignSGD remains stable but does not converge to zero (Top-Right), whereas the diminishing schedule prescribed by Thm.~\ref{thm:sign}, here $\eta_t=1/\sqrt{t+1}$, yields convergence across noise scales (Bottom-Right).}
    \label{fig:InsightValidation_ViT}
\end{figure}

\subsection{Constructive Form of the Normalization Condition}
\label{subsec:constructive_adaptivity}

The sufficient conditions for convergence of DCSGD (see Eq.~\ref{eq:StepDCSGD_UV}) indicate that the learning rate schedule $\eta \eta_t$ should scale inversely with $\E \lVert \nabla f(X_t)\rVert$. While this may appear abstract, it admits a natural and practical implementation in the distributed setting.

\paragraph{Client-side estimation.} 
At iteration $t$, each client $i$ already computes a stochastic gradient $\nabla f_{i,\gamma_i}(X_t)$ on a local mini-batch $\gamma_i$. We define the local norm estimate as
\begin{equation}
    \hat{g}_i^t = \lVert\nabla f_{i,\gamma_i}(X_t)\rVert.
\end{equation}
This requires no additional computation beyond what is standard for mini-batch gradient methods.

\paragraph{Server-side aggregation.} 
The server maintains an estimate of the global gradient norm by averaging the client-side estimates as
\begin{equation}
    \hat{G}_t = \frac{1}{N}\sum_{i=1}^N \hat{g}_i^t,
\end{equation}
which provides a consistent approximation of $\E \lVert \nabla f(X_t)\rVert$.

This construction is intended for the standard server-aggregated distributed setting studied in this paper, where a central server already averages the client updates. It is not meant as a fully decentralized protocol: in peer-to-peer or gossip-style topologies, estimating the same global norm would require additional communication or a different decentralized estimator.

\paragraph{Normalized learning rate.} 
A learning rate of the form
\begin{equation}
\eta \eta_t \sim \frac{\eta_0}{1+\hat{G}_t}
\end{equation}
satisfies the normalization condition in our bounds up to stochastic error. This adjustment can be implemented with negligible communication overhead, requiring each client to transmit only a single scalar per iteration.

\end{document}